\newcommand{\timebr}{{\fontfamily{lmss}\selectfont{APR}}}
\newcommand{\timebrs}{{\timebr} }
\newcommand{\scsh}{{\fontfamily{lmss}\selectfont{SSH}}}
\newcommand{\scshs}{{\scsh} }
\newcommand{\insertfigcaptionspacing}{\vspace{-0.1in}}
\newcommand{\insertprethmspacing}{\vspace{0.1in}}
\newcommand{\muii}[1]{\mu_{#1}}
\newcommand{\muone}{\muii{1}}
\newcommand{\mutwo}{\muii{2}}
\newcommand{\muthree}{\muii{3}}
\newcommand{\mui}{\muii{i}}
\newcommand{\muj}{\muii{j}}
\newcommand{\muk}{\muii{k}}
\newcommand{\mun}{\muii{n}}
\newcommand{\mupii}[1]{\mu_{(#1)}}
\newcommand{\mupone}{\mupii{1}}
\newcommand{\muptwo}{\mupii{2}}
\newcommand{\mupthree}{\mupii{3}}
\newcommand{\mupk}{\mupii{k}}
\newcommand{\mupn}{\mupii{n}}
\newcommand{\thetaii}[1]{\theta_{#1}}
\newcommand{\thetaone}{\thetaii{1}}
\newcommand{\thetaj}{\thetaii{j}}
\newcommand{\thetak}{\thetaii{k}}
\newcommand{\thetap}{{\theta'}}
\newcommand{\thetapii}[1]{\theta_{(#1)}}
\newcommand{\thetapj}{\thetapii{j}}
\newcommand{\thetapk}{\thetapii{k}}
\newcommand{\thetappii}[1]{\theta'_{(#1)}}
\newcommand{\thetappj}{\thetappii{j}}
\newcommand{\thetappk}{\thetappii{k}}
\newcommand{\fk}{f_k}
\newcommand{\fpk}{f'_k}
\newcommand{\thetabrii}[1]{\theta_{(#1)}}
\newcommand{\thetabri}{\thetabrii{i}}
\newcommand{\thetabrj}{\thetabrii{j}}
\newcommand{\Niirr}[2]{N_{#1}(#2)}
\newcommand{\Nir}{\Niirr{i}{r}}
\newcommand{\muhatiinn}[2]{\widehat{\mu}_{#1}(#2)}
\newcommand{\muhatiNir}{\muhatiinn{i}{r}}
\newcommand{\speedfunc}{\lambda}
\newcommand{\scalefunc}{\lambda}
\newcommand{\scalefuncinv}{\lambda^{-1}}
\newcommand{\Srr}[1]{S_{#1}}
\newcommand{\Sr}{\Srr{r}}
\newcommand{\Tstar}{T^\star}
\newcommand{\kstar}{k^\star}
\newcommand{\Deltaii}[1]{\Delta_{#1}}
\newcommand{\Deltai}{\Deltaii{i}}
\newcommand{\Deltaone}{\Deltaii{1}}
\newcommand{\Deltatwo}{\Deltaii{2}}
\newcommand{\Deltathree}{\Deltaii{3}}
\newcommand{\Deltapii}[1]{\Delta_{(#1)}}
\newcommand{\Deltapk}{\Deltapii{k}}
\newcommand{\Acaldelsf}{\Acal_{\delta,\scalefunc}}
\newcommand{\Acaltildedelsf}{\widetilde{\Acal}_{\delta,\scalefunc}}
\newcommand{\Atilde}{\widetilde{A}}
\newcommand{\filrr}[1]{\Fcal_{#1}}
\newcommand{\filr}{\filrr{r}}
\newcommand{\filrho}{\filrr{\rho}}
\newcommand{\filprr}[1]{\Fcal'_{#1}}
\newcommand{\filpN}{\filprr{N}}
\newcommand{\actrr}[1]{a_{#1}}
\newcommand{\actr}{\actrr{r}}
\newcommand{\timrr}[1]{t_{#1}}
\newcommand{\timr}{\timrr{r}}
\newcommand{\fdbrr}[1]{O_{#1}}
\newcommand{\fdbr}{\fdbrr{r}}
\newcommand{\Nkkrr}[2]{N_{(#1)}(#2)}
\newcommand{\Nkr}{\Nkkrr{k}{r}}
\newcommand{\Njr}{\Nkkrr{j}{r}}
\newcommand{\Nkrho}{\Nkkrr{k}{\rho}}
\newcommand{\Ntildekkrr}[2]{\widetilde{N}_{(#1)}(#2)}
\newcommand{\Ntildekr}{\Ntildekkrr{k}{r}}
\newcommand{\Ntildekrho}{\Ntildekkrr{k}{\rho}}
\newcommand{\Nrr}[1]{N({#1})}
\newcommand{\Nr}{\Nrr{r}}
\newcommand{\Ykkss}[2]{Y_{#1,#2}}
\newcommand{\Yks}{\Ykkss{k}{s}}
\newcommand{\Ldag}{L^\dagger}
\newcommand{\kptilde}{k_{\tilde{p}}}
\newcommand{\kptildepo}{k_{\tilde{p}+1}}
\newcommand{\kptildemo}{k_{\tilde{p}-1}}
\newcommand{\insertAlgoTimeBatchRacing}{
\setlength{\textfloatsep}{15pt}
\begin{algorithm}[t]
\caption{Adaptive Parallel Racing (\timebr)}\label{alg:fixed_confidence}
\begin{algorithmic}[1]
\STATE {\bfseries Input:} confidence $1-\delta$, (constant) parameter $\beta>1$.
\STATE $r \leftarrow 1$,  
 $N_i(0) \leftarrow 0, \forall i\in[n]$
 \STATE $A_1\leftarrow \emptyset$, $\;S_1 \leftarrow [n]$, $t_1 = \lambda(n)$, $q_1 = 1$
\WHILE{$|A_r| = 0$}
    \STATE Pull each arm in $S_r$ $q_r$ times, taking $\lambda(|S_r|q_r)$ time.
    \STATE $N_i(r) = N_i(r-1) + q_r\mathbbm{1}\{i\in S_r\}\ \forall i\in[n]$
    \STATE $\hat{\mu}_{i,r} \leftarrow$ empirical mean of arm $i$ at round $r,\ \forall i\in[n]$
    \STATE $A_{r+1}\leftarrow$ \COMMENT{See~\eqref{eqn:confintervals}}
    \label{lin:surviving_arms_A}
    \newline
     \vphantom{T}\hspace{0.27in}
            $\{i\in S_r | L_i(r,\delta) > \underset{j\in S_r\setminus\{i\}}{\max} U_j(r,\delta)\}$
    \STATE $S_{r+1}\leftarrow$ \newline
     \vphantom{T}\hspace{0.27in}
            $\{i\in S_r | U_i(r,\delta) > \underset{j\in S_r}{\max}\ L_j(r,\delta)\}\setminus
            A_{r+1}$\label{lin:surviving_arms_S}
    \STATE $q_{r+1} \leftarrow \left\lfloor \lambda^{-1}(\beta^{r}t_{1})/|S_{r+1}| \right\rfloor$
    \STATE $r\leftarrow r + 1$.
\ENDWHILE
\STATE {\bfseries Return} $A_r$
\end{algorithmic}
\end{algorithm}
}
\newcommand{\insertAlgoStageCombSeqHalving}{
\begin{algorithm}[t]
\caption{\;Staged Sequential Halving (\scsh)\label{alg:fixed_budget}}
\begin{algorithmic}[1]
\STATE {\bfseries Input:} time budget $T$, number of stages to combine $k$
\STATE $m = \lambda^{-1}\left(T/\left\lceil\log_{2^k}(n)\right\rceil\right)$ \COMMENT{pulls per stage}
\STATE $r_f \leftarrow \lceil \log_{2^{k}}(n)\rceil$, $S_0 \leftarrow [n]$
\FOR{$r \in \{0,\ldots, r_f-1\}$}
    \STATE \parbox[t]{\dimexpr\textwidth-\leftmargin-\labelsep-\labelwidth}{%
    Sample each arm $i \in S_r$, $t_r = \left\lfloor m/|S_r|\right\rfloor$ times.\strut}
    \STATE \parbox[t]{\dimexpr\textwidth-\leftmargin-\labelsep-\labelwidth}{%
    Let $S_{r+1}$ be the set of $\left\lceil |S_r|/2^{k} \right\rceil$ arms in $S_r$ with
the \\ highest empirical mean \strut}
\ENDFOR
\STATE {\bfseries Return} The singleton element in $S_{r_f}$.
\end{algorithmic}
\end{algorithm}
}
\newcommand{\insertFigSettingIllus}{%
\begin{figure}[t]
\includegraphics[width=3.0in]{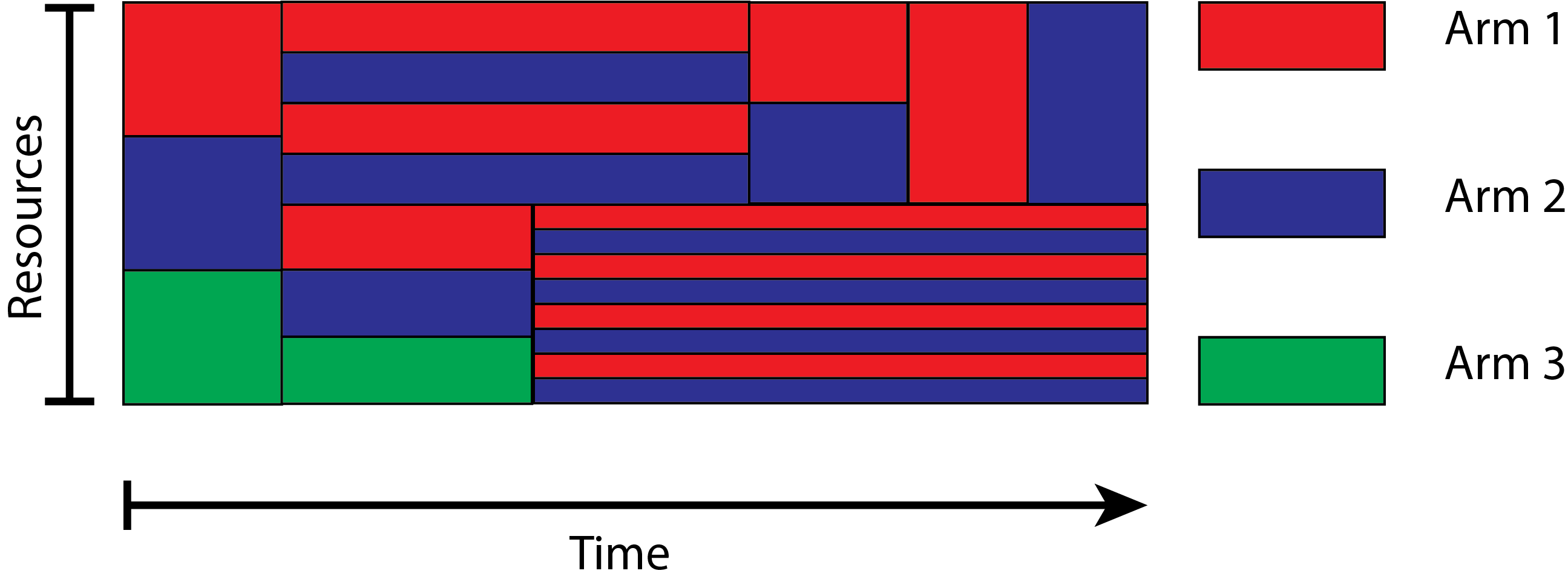}
\vspace{-0.15in}
\caption{\small
An example of the setting using $n=3$ arms.
An algorithm has a \emph{divisible resource} (y-axis) that can be
split up to generate samples by pulling the 3 arms. Each pull takes \emph{time} (x-axis) to complete.
Allocating more resources to a pull produces faster results,
but at the cost of decreased throughput,
e.g. using twice the resources for a pull will provide results only 1.5x faster.
%
When a pull completes, the resources used
are freed and can be reallocated to new pulls.
\label{fig:settingillus}\vspace{-0.10in}
}
\end{figure}
}
\newcommand{\insertFigDistParIllus}{%
\begin{figure*}
\centering     
\includegraphics[width=6.0in]{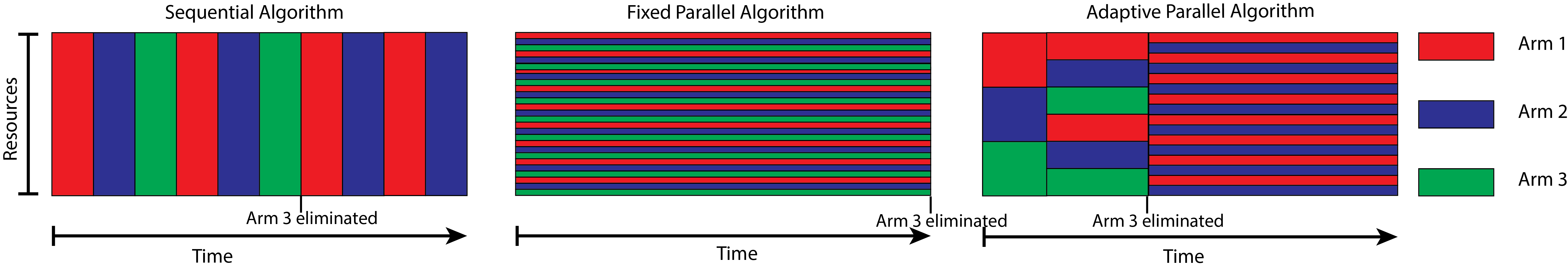}
\vspace{-0.1in}
\caption{\small{\emph{Left:} A sequential strategy will allocate all
resources to pulls one at a time.
This 
enables the algorithm to obtain information about individual pulls sooner for replanning; after
pulling all arms twice, it is able to eliminate the green arm.
However, this reduces throughput due to sublinear scaling, which can be inefficient in the long run.
\emph{Middle:} A highly-parallel strategy with fixed batch size will have higher throughput.
However, since it does not have feedback, it may
evaluate the poor arms too many times and take longer to eliminate them.
\emph{Right:} Our algorithm, APR, for the fixed confidence setting,
adaptively
manages parallelism during execution based on the scaling function and task progress.
In this figure, the green arm is eliminated early, and then throughput is increased for the
remaining $2$ arms which may be harder to distinguish.
While the setting permits allocating the resources ``asynchronously'' (see
Fig.~\ref{fig:settingillus}), we show that an algorithm which operates synchronously, but
chooses the amount of parallelism adaptively is minimax optimal for this problem.
}
\label{fig:tradeoffillus}
}
\insertfigcaptionspacing
\end{figure*}
}
\newcommand{\insertFigFixedConf}{%
\begin{figure*}[h]
\includegraphics[width=\linewidth]{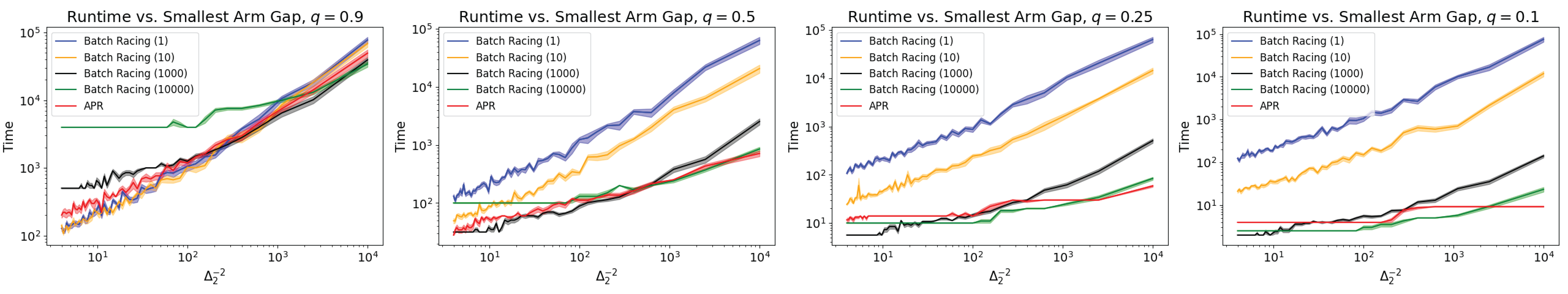}\label{fig:fixed_conf_experiments}
\vspace{-0.30in}
\caption{\small{\textbf{Fixed confidence synthetic results:} We vary the first gap $\Delta_2$ from $0.01$ to $0.5$ for $q \in \{0.9, 0.5, 0.25, 0.1\}$. A higher $\Delta_2^{-2}$ (right side) implies the problem is harder to solve as more arm pulls will be needed to separate the confidence bounds of the best arm from the rest. We present the mean and standard error across $10$ runs for all experiments. We observe that Algorithm~\ref{alg:fixed_confidence} (APR) is able to trade off between throughput and information accumulation effectively in most cases without additional hyperparameters, consistently performing near the best algorithms for each problem. The baseline algorithms perform well when their level of parallelism is well-suited to the scaling and $\Delta_i$ values of the problem, but are inconsistent across problems.\vspace{-0.2in}
}}
\end{figure*}
}
\newcommand{\insertFigFixedDeadline}{%
\begin{figure*}[h]
\centering     
\subfigure{\label{fig:c1h}\includegraphics[width=40mm]{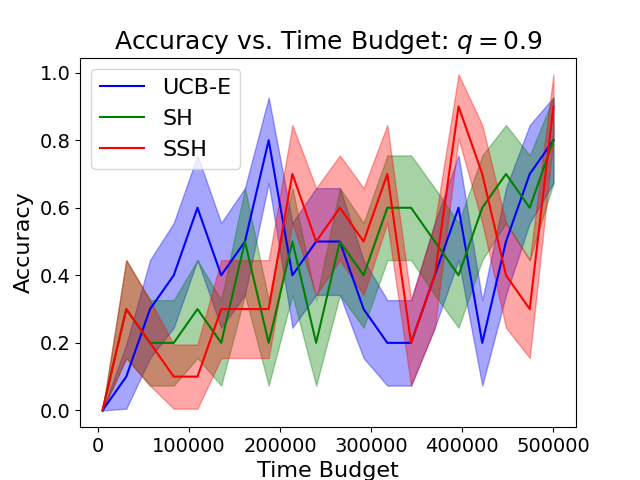}}
\subfigure{\label{fig:c1f}\includegraphics[width=40mm]{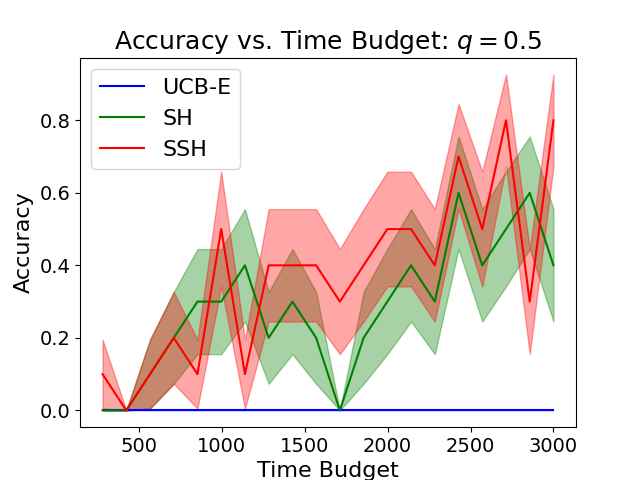}}
\subfigure{\label{fig:c1d}\includegraphics[width=40mm]{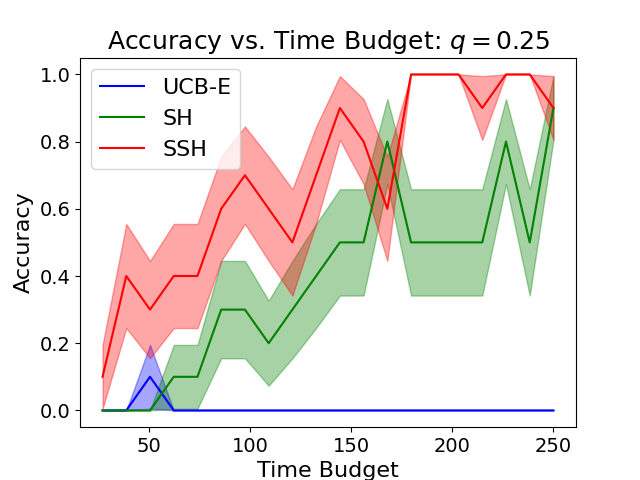}}
\subfigure{\label{fig:c1b}\includegraphics[width=40mm]{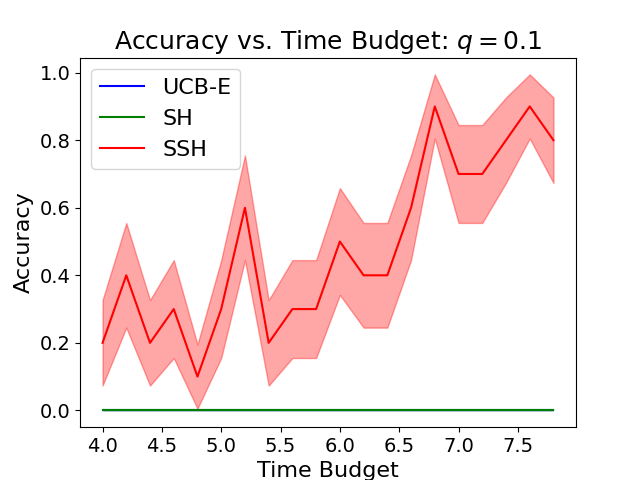}}
\vspace{-0.2in}

\subfigure{\label{fig:c1g}\includegraphics[width=40mm]{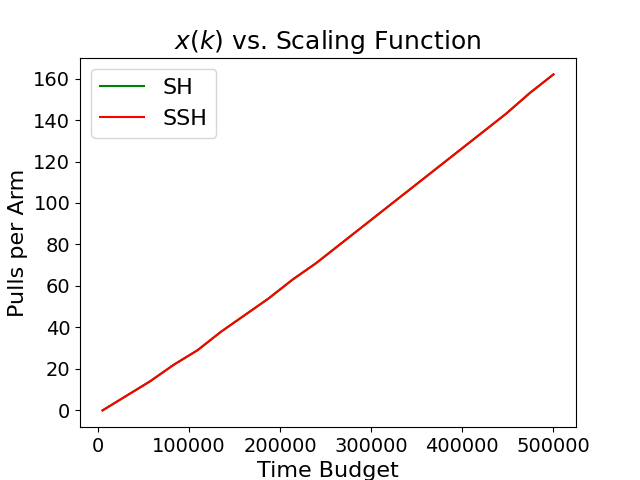}}
\subfigure{\label{fig:c1e}\includegraphics[width=40mm]{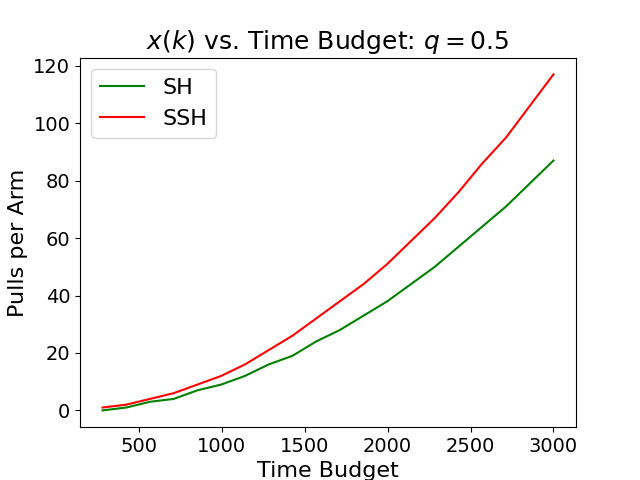}}
\subfigure{\label{fig:c1c}\includegraphics[width=40mm]{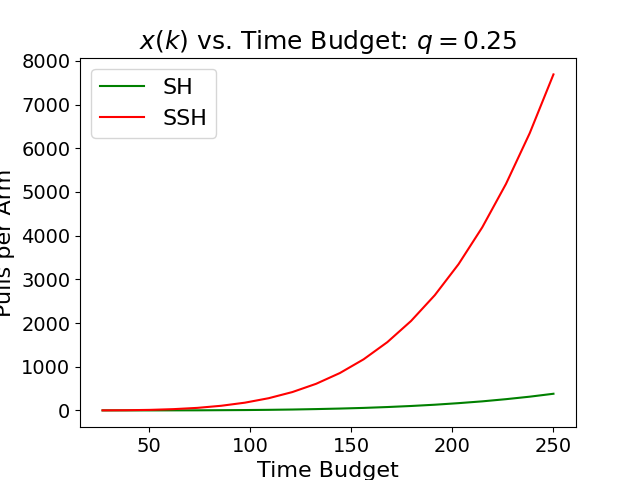}}
\subfigure{\label{fig:c1a}\includegraphics[width=40mm]{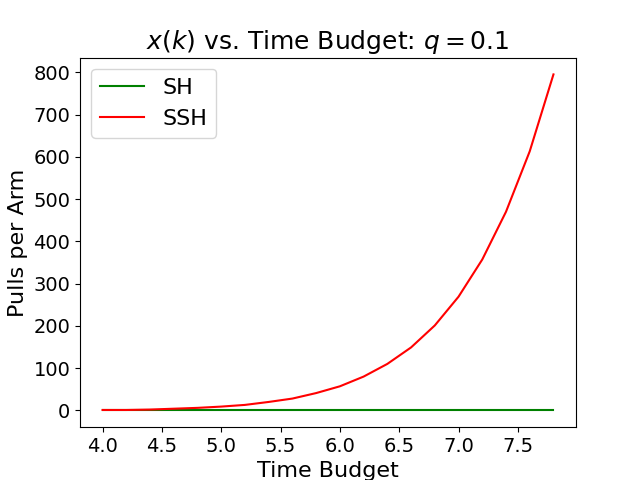}}
\vspace{-0.1in}
\caption{\small{\textbf{Fixed deadline synthetic results:} We vary the time budget for $q \in \{0.1,
0.25, 0.5, 0.9\}$, running for $10$ runs each and presenting the mean and standard error. We find
that SSH (red) consistently outperforms SH (green) and UCB-E (blue).
This difference is large when the scaling is poor ($q$ is small) as
SH does not pull each arm a sufficient number of times before each stage ends, 
and the cost of allocating all resources to individual pulls prevents UCB-E from
making progress in comparable time.
In contrast, \scsh{} does well because it prioritizes high throughput when the scaling is poor.
When $q=0.1$, SH and UCB-E perform poorly whereas when $q\geq 0.5$, \scsh{} has similar throughput
as SH, resulting in similar performance.}\label{fig:fixed_budget_exps}}
\end{figure*}
}
\icmltitlerunning{Resource Allocation in Multi-armed Bandit Exploration}
\begin{document}

\twocolumn[
\icmltitle{Resource Allocation in Multi-armed Bandit Exploration:
\\ Overcoming Sublinear Scaling with Adaptive Parallelism
}


\icmlsetsymbol{equal}{*}

\begin{icmlauthorlist}
\icmlauthor{Brijen Thananjeyan}{cal}
\icmlauthor{Kirthevasan Kandasamy}{cal}
\icmlauthor{Ion Stoica}{cal}
\icmlauthor{Michael I. Jordan}{cal}
\icmlauthor{Ken Goldberg}{cal}
\icmlauthor{Joseph E. Gonzalez}{cal}
\end{icmlauthorlist}

\icmlaffiliation{cal}{University of California, Berkeley}

\icmlcorrespondingauthor{Brijen Thananjeyan}{bthananjeyan@berkeley.edu}

\icmlkeywords{Machine Learning, ICML}

\vskip 0.3in
]



\printAffiliationsAndNotice{}  

\begin{abstract}
We study exploration in stochastic multi-armed bandits when we have access to a divisible resource
that can be allocated in varying amounts to arm pulls. We focus in particular on the allocation of
distributed computing resources, where we may obtain results faster by allocating more resources
per pull, but might have reduced throughput due to nonlinear scaling.
For example, in simulation-based scientific studies, an expensive simulation can be sped up by
running it on
multiple cores. This speed-up however, is partly offset by the communication among cores, which results in lower throughput than if fewer cores were allocated per trial to run more trials in parallel.
In this paper, we explore these trade-offs in two settings.
First, in a fixed confidence setting, we need to find the best arm with a given target
success probability as quickly as possible.
We propose an algorithm which trades off between information accumulation and throughput and show that the time taken can be upper bounded by the solution of a dynamic program whose inputs are the gaps between the sub-optimal and optimal arms. We also prove a matching hardness result. Second, we present an algorithm for a fixed deadline setting, where we are given a time deadline and need to maximize the probability of finding the best arm. We corroborate our theoretical insights with simulation experiments that show that the algorithms consistently match or outperform baseline algorithms on a variety of problem instances.
\end{abstract}

\section{Introduction}
\label{sec:intro}


In multi-armed bandit exploration, an agent draws samples from a set of $n$ arms, where,
upon pulling arm $i$, it receives a stochastic reward drawn from a
distribution with mean $\mui$.
The goal is to identify the best arm, i.e., $\argmax_i \mui$,
by adaptively choosing which arms to pull.
This problem is known in the literature as best-arm identification (BAI).
\insertFigSettingIllus

As an example, consider simulation-based studies in physics,
which are used for
estimating cosmological constants and controlling nuclear fusion
reactors~\citep{davis07supernovae,xing2019automating}.
Pulling an arm may correspond to running a stochastic simulation with specific values for
parameters
(cosmological constants or reactor parameters) which affect the output of the simulation.
Here,
searching for the optimal parameters is naturally modeled as a BAI problem.
BAI is also used in model selection~\citep{li2017hyperband},
A/B testing~\citep{howard2019sequential}, and other configuration tuning tasks.
Traditionally, BAI is studied in two settings: fixed confidence and fixed budget.
In the former, we must identify the best arm with a given target success probability,
while keeping the number of arm pulls to a minimum.
In the latter, we are given a budget of pulls and should maximize the probability
of identifying the best arm.

\insertFigDistParIllus

The focus of work in the BAI literature has been the sequential setting in which the agent can pull
only one arm at a time~\citep{bubeck2009pure,audibert2010best,even2002pac}.  There is also a line of work on the batch parallel setting, where the agent can pull a fixed number of arms at a
time~\citep{jun2016top}.
These formulations have their limitations; in particular, modern infrastructures for parallel and distributed computing enable us
to execute several arm pulls in parallel and control
the duration of arm pulls by assigning multiple resources to a single pull.
For example, physics simulations can exhibit strong elasticity in their resource
requirements, where the number of CPUs allocated to the same simulation can range from
a few tens to a few thousand CPUs.
Using more CPUs results in speedier outputs, but this speedup can be sublinear due to
communication and synchronization among the CPUs.

We develop algorithms for BAI where we are given access to a fixed
amount of a divisible resource to execute the arm pulls.
In addition to choosing which arms to pull, an algorithm must also determine how much of this
resource to allocate for each arm pull.
We have illustrated the setting in Fig.~\ref{fig:settingillus}.
While traditional formulations for BAI are stated in terms of the number of arm pulls,
in our setting it is meaningful to formulate this in terms of \emph{time constraints}.

By using more resources for a single arm pull, we obtain its result sooner;
however, there is diminishing value to allocating more resources to the same arm
pull, since typical distributed environments do not scale
linearly~\citep{venkataraman2016ernest,liaw2019hypersched}.
By allocating more resources for individual pulls and
obtaining their results sooner, we can use that information to
make refined decisions about which arms to try in subsequent iterations.
However, due to sublinear scaling, 
by \emph{parallelizing} arm pulls
(executing many arm pulls simultaneously with fewer resources each), 
we can complete 
more pulls per unit time and obtain more information about the arms.
This tradeoff between 
information accumulation (obtaining the results of a particular pull sooner)
and
throughput (number of arm pulls per unit of time)
is fundamentally different from the usual explore-exploit trade-off encountered in bandit problems.
In the latter, exploration is akin to testing several arms (and not arm pulls) and exploitation
is akin to testing fewer carefully selected arms, and
in our setting, an algorithm may choose to explore and/or exploit at different levels of
parallelism.
We have illustrated this trade-off in Figure~\ref{fig:tradeoffillus}.

In this work, we assume a \emph{known} scaling function
$\scalefunc$, where $\scalefunc(1/\alpha)$ is the time taken to complete a pull using a fraction
$\alpha$ of the resource.
To model the above trade-off, we will assume a diminishing returns property on $\scalefunc$.

\textbf{Our contributions} are as follows.
First, in the fixed confidence setting, we propose an algorithm, Adaptive Parallel Racing (\timebr), and bound its time complexity by
the solution to a dynamic
program (DP) whose inputs are the inverse squared gaps between the optimal arm and the other
arms.
We also prove a hardness result which shows that the expected time taken by any algorithm is 
lower bounded by this DP, demonstrating that this DP is fundamental to this problem.
Second, while our primary focus is in the fixed confidence setting,
 we also study a fixed deadline version of this problem, where we are given a time deadline, and
wish to maximize the probability of identifying the best arm.
We propose Staged Sequential Halving (\scsh), a simple variant of the popular sequential-halving strategy for BAI and bound its
failure probability.
Third, we corroborate these theoretical insights with empirical simulations on some synthetic
problems.
We observe that \timebrs performs as well as the best task-tuned baseline algorithms with fixed levels
of parallelism in the fixed confidence setting.
In the fixed deadline setting,
\scsh{} succeeds $20-90\%$ more often
than baselines which do not account for nonlinear scaling.

\vspace{-0.05in}
\subsection*{Related work}
\vspace{-0.05in}

Since multi-armed bandits were introduced by~\citet{thompson33sampling},
they have been widely studied as an abstraction
that formalizes the exploration-exploitation trade-offs that arise in decision-making
under uncertainty~\citep{robbins52seqDesign,auer03ucb}.
Best arm identification (BAI) is a special case of bandits which is generally studied in the sequential setting in which an agent adaptively evaluates one arm at a time in order to identify the best
arm~\citep{bubeck2009pure,gabillon2012best,
karnin2013almost,russo2016simple,bubeck2013multiple,kalyanakrishnan2010efficient,jamieson2014lil}.
However, this sequential setting can fall short of capturing the full range of trade-offs that arise when an agent may be able to evaluate several arms concurrently.

In recent work, \citet{jun2016top} formulate a parallel version of BAI in the fixed confidence setting using the confidence intervals from~\citet{jamieson2014lil}.  In their work, the level of parallelism remains fixed; i.e.,
the agent is allowed to select a batch of arms to pull at each round. In the current paper, we consider a setting that allows adaptive parallelism using a fixed resource, where we explicitly consider \emph{execution time} as a function of the batch size.
~\citet{grover2018best} study a similar setting where there is delayed feedback in BAI.
While their model allows handling parallel arm pulls, it does not allow adaptive resource allocation.
In addition,
a line of work has studied parallel bandits in the regret minimization
setting using Bayesian optimization~\citep{desautels2014parallelizing,kandasamy2018parallelised}. In this work, we study the BAI setting, which is a pure exploration problem and we do not assume a prior over the arms.

Our algorithm for the fixed confidence setting proceeds by constructing confidence intervals for
the arms' mean values, and then eliminates those arms which can be concluded to be non-optimal based
on these confidence intervals.
The construction of these confidence intervals is based on the law of the iterated
logarithm~\citep{jamieson2014lil}.
We also establish a hardness result in this setting which requires establishing lower bounds on the
sample complexity, and then translating this to a lower bound on the time.
For the sample complexity bounds, we use ideas from~\citet{kaufmann2016complexity}. 
In the fixed deadline setting, our algorithm is based on~\citet{karnin2013almost},
who proposed a sequential-halving (SH) strategy for sequential BAI.
SH splits the budget of arms into stages and eliminates the worst half in each stage. This
allows the algorithm to make more pulls of the promising arms.
We show that a naive extension of this strategy can perform poorly
 if the scaling function is poor and
propose an alternative algorithm that eliminates arms at a rate determined by the scaling function.

\section{Description of the Environment}
\label{sec:environment}

We begin by describing the bandit environment which will be used in both settings.
There are $n$ arms, denoted $[n] = \{1,\dots,n\}$.
When we pull arm $i \in [n]$, we observe a reward from a $1$-sub-Gaussian distribution
with expectation $\mui \in [0, 1]$.
The goal is to identify the best arm, i.e., we wish to find $\argmax_i \mui$.
We will assume that the best arm is unique, and, for ease of exposition, that the arms
are ordered in decreasing order.
Therefore, $\mu_1 > \mu_2 \geq \cdots \geq \mu_n$.
We define the \emph{gaps} $\Deltai$ as follows:
\begin{align*}
\vspace{-0.1in}
\numberthis \label{eqn:Deltai}
\hspace{-0.2in}
\Deltaone = \muone - \mutwo,
\hspace{0.4in}
\Deltai = \muone - \mui, \;\forall\,i\geq 2.
\vspace{-0.1in}
\end{align*}
%
Deviating from prior work on BAI,
we assume that we have access to a divisible resource which is to be used to execute the arm pulls.
The time taken to execute a single pull using a fraction $\alpha$ of this resource is given by
$\scalefunc(1/\alpha)$.
Here, $\scalefunc:\RR_+ \rightarrow\RR_+$ is an application-specific \emph{scaling function}
which is assumed to be known.
In particular, this implies that 
the time taken to execute $m\in\NN$ arm pulls by \emph{evenly} dividing the entire resource  is 
$\scalefunc(m)$; this interpretation will be useful in understanding our assumptions going forward.
It is reasonable to assume that scaling characteristics are known as they can either be modeled
analytically~\citep{zahedi2018amdahl},
or can be
profiled from historical experiments~\citep{venkataraman2016ernest,liaw2019hypersched}.
If $\scalefunc$ is unknown, we believe that there are significant limitations on what can be achieved in this setting.
Prior work in the operations research literature studying scheduling in distributed
systems also assumes that scaling characteristics are
known~\citep{berg2018towards,berg2020hesrpt}.

While $\scalefunc$ depends on the application,
we will make some assumptions to model practical use cases.
First, $\scalefunc$ is an increasing function with $\lambda(0) = 0$, which simply states that
executing more arm pulls requires more time and that zero pulls takes no time.
Second, ${\rm Range}(\scalefunc) = \RR_+$, 
which states that we cannot execute an unbounded number of pulls in a bounded amount of time;
hence, $\scalefuncinv:\RR_+\rightarrow\RR_+$.
Third, sublinear scaling---i.e., the diminishing returns of allocating
more resources to a single pull---can be captured via the following assumption.
For all $m_2\geq m_1 > 0$, $\delta_1, \delta_2 > 0$,
\begin{align}
\frac{\lambda\left(m_1+\delta_1\right) - \lambda\left(m_1\right)}{\delta_1}
\geq
\frac{\lambda\left(m_2+\delta_2\right) - \lambda\left(m_2\right)}{\delta_2}.
\label{eqn:speedfuncassumption}
\numberthis
\end{align}
That is, the change in the average time taken to do additional arm pulls is smaller
when there are more pulls already in the system.
This assumption is equivalent to concavity.
The above assumptions hold true for many choices for $\scalefunc$ in practice.
For instance, it is true for Amdahl's law and its variants that are popularly used to analytically
model speed-ups in multi-core and distributed
environments~\citep{amdahl1967validity,hill2008amdahl,zahedi2018amdahl}.
They have also been found to be empirically true in other application-specific
use cases~\citep{venkataraman2016ernest,liaw2019hypersched}.
We also wish to mention that such 
concavity assumptions are common in the operations research
literature when modeling diminishing returns~\citep{berg2018towards,berg2020hesrpt}.

Finally, as mentioned in Section~\ref{sec:intro},
we will allow an algorithm to allocate
its resources asynchronously,
where a fraction of the resource may be allocated for one set of pulls to be completed at a
certain throughput, and the remaining
fraction for another set of pulls at a different throughput.
Thus, if both these sets of pulls start at the same time,
they may finish at different times.
As we will see shortly, at least in the fixed confidence setting, a synchronous algorithm may be
sufficient as it matches a hardness result.
%


\section{Fixed Confidence Setting}
\label{sec:fixedconf}

In the fixed confidence setting, the decision-maker is given a target failure probability $\delta$,
and must find the best arm with probability of error at most $\delta$ while minimizing the time
required to do so.
A common approach for sequential fixed-confidence BAI maintains confidence intervals
for the mean values of the arms based on past observations;
when the upper confidence bound of an arm falls below the lower confidence bound of any other arm,
the algorithm eliminates that arm until eventually there is only one arm
left~\citep{jamieson2014lil,gabillon2012best}.

There are two major challenges in applying such confidence-interval-based algorithms in our setting.
They both arise due to nonlinear scaling and are a consequence of
the trade-off between information accumulation and throughput.
The first of these challenges is due to the fact that the arm mean values, or equivalently
the gaps, are unknown.
To illustrate this, consider an example with $n=2$ arms and let $\Delta = \muone-\mutwo$.
It is well known that differentiating between these two $1$-sub-Gaussian distributions requires
$N_\Delta\approx\Delta^{-2}$ samples from each arm.
For the purpose of this example\footnote{%
In this and the following example, we make simplifying assumptions in our treatment of $N_\Delta$.
For instance, we can usually only upper or lower bound $N_\Delta$ in terms of $\Delta^{-2}$.
Additionally, in adaptive
settings, it may be a random variable.
These simplifications are made to illustrate the challenges in our setup.
Our subsequent analysis will be rigorous.
\label{ftn:mqexample}
},
let us assume $N_\Delta=32$ and assume that the scaling function is
$\scalefunc(m) = m^{\nicefrac{1}{2}}$.
If we pull each arm one at a time, allocating all resources to each pull,
this will require $2\times N_\Delta = 64$ arm pulls
and hence take time $2\times N_\Delta \times \scalefunc(1) = 64$.
If instead all $64$ pulls are
executed simultaneously with $1/64$ of the resources for each pull, this takes time
$\scalefunc(64) = 8$, which is significantly less.
However, knowing the right amount of parallelism requires information about the $\Delta$ value
which is not available to the algorithm.
If we parallelize more than necessary, we will be executing more arm pulls than necessary which can
take more time.
For instance, pulling each arm $512$ times will take time $\scalefunc(1024) = 32$.
Therefore, the first challenge for an algorithm is to choose the right amount of parallelism
without knowledge of the gaps.

The second challenge arises from the fact that the ordering of the arms is unknown.
As an example, consider a problem with three arms with $\muone > \mutwo > \muthree$ and
$\Deltaone = \Deltatwo=\muone-\mutwo$, and $\Deltathree = \muone-\muthree$.
Let $N_{\Deltatwo} \,\big(\approx \Deltatwo^{-2}\big)$,
$N_{\Deltathree} \,\big(\approx \Deltathree^{-2}\big)$, and
$N_{\Deltaone} = \max(N_{\Deltatwo}, N_{\Deltathree}) = N_{\Deltatwo}$
denote the number of pulls required from the second, third, and first arms, respectively. For simplicity, let us assume that the algorithm is aware of the
$N_{\Deltai}$ values, but does not know the ordering; i.e., which arms are the first, second, and
third.
Let $\scalefunc(m) = m^{\nicefrac{1}{2}}$.
First, let $N_{\Deltaone} = N_{\Deltatwo} = 300$ and let $N_{\Deltathree} = 5$.
As $N_{\Deltathree}$ is small, it is efficient to eliminate the third arm first and then
differentiate between the top two arms.
However, since the permutation of the arms is unknown, the algorithm will first pull each
arm $N_{\Deltathree}$ times, eliminate the third arm, and then
pull the remaining two arms $N_{\Deltatwo} - N_{\Deltathree}$ times each.
This takes time $\scalefunc(3N_{\Deltathree}) + \scalefunc(2(N_{\Deltatwo} - N_{\Deltathree}))
\approx 28.16$.
Alternatively, consider a different example where $N_{\Deltathree} = 100$; i.e.,
the third arm is harder to distinguish.
Here, first eliminating the third arm and then proceeding to the remaining arms takes
time $\scalefunc(3N_{\Deltathree}) + \scalefunc(N_{\Deltatwo} - N_{\Deltathree}) \approx 37.32$.
However, simply pulling all three arms $3N_{\Deltatwo}$ times, thus eliminating both the second
and the third arm simultaneously, takes time $\scalefunc(3N_{\Deltatwo})=30$ which is faster.
This example illustrates that due to nonlinear scaling, it might be better to pull even the
sub-optimal arms a larger number of times as the improved throughput may result in
less time.
This phenomenon becomes even more challenging when the $\Delta_i$'s are unknown and when there are
multiple arms.

While the gaps and the ordering are also unknown in sequential and batch parallel BAI formulations,
the above considerations are a direct consequence of nonlinear scaling when allocating
resources, which is the focus of this work.

The second challenge motivates defining the following dynamic program
for $n$ arms which takes inputs $\{z_i\}_{i=2}^n\in\RR^{n-1}$ with
$z_2 \geq z_3\geq \dots \geq z_n$.
First define  $z_{n+1}=0$ and $\Tcal_{n+1}(\emptyset)=0$.
Then, recursively define $\Tcal_j:\RR^{n-j+1}\rightarrow\RR_+$ for $j=n,n-1,\dots,2$ as follows:
\begingroup
\allowdisplaybreaks
\begin{align*}
&\Tcal_j\left(\{z_i\}_{i=j}^{n}\right) =
\label{eqn:dp}\numberthis
\\
&\hspace{0.2in}\min_{k\in \{j,\ldots,n\}}
\Big( \lambda(k(z_j - z_{k+1})) + \Tcal_{k+1}\left(\{z_i\}_{i=k+1}^{n} \right) \Big).
\end{align*}
\endgroup
This program is best understood from the point of view of a planner who knows the
number of pulls necessary to eliminate arms $2,\dots,n$, i.e., $\{z_i\}_{i=2}^n$,
but is unaware of the ordering.
The planner's goal is to optimally schedule the pulls so as to minimize the total time.
Then, $\Tcal_j$ is the minimum time taken to eliminate the worst $n - j + 1$ arms.
For example, $\Tcal_n =
\lambda(nz_n)$, since the worst arm can be eliminated by pulling all $n$ arms $z_n$
times.
Similarly, for $\Tcal_j$, to eliminate the $n-j+1$ worst arms, the planner may first eliminate 
the first $n-j$ arms in time $\Tcal_{j+1}$,
and then pull each of the remaining arms $(z_j-z_{j+1})$ times;
alternatively, she may 
eliminate the first $n-j-1$ arms in time $\Tcal_{j+2}$,
and then pull each of the remaining arms $(z_j-z_{j+2})$ times etc.
There are $n-j+1$ such options, and the planner will choose the one that takes the
least time, as indicated in~\eqref{eqn:dp}.

Given~\eqref{eqn:dp},
we define $\Tstar$ as follows.
Recalling the definitions of the gaps $\Deltai$ from~\eqref{eqn:Deltai}, we have:
\begin{align*}
\Tstar = \Tcal_2\left(\{\Delta_i^{-2}\}_{i=2}^n  \right)
\label{eqn:Tstar}\numberthis
\end{align*}
Shortly, we will prove upper and lower bounds which depend on the gaps
via $\Tstar$ and differ only in factors that are doubly logarithmic in the gaps
and sub-polynomial in $n$.

\subsection{An Algorithm for the Fixed Confidence Setting}

Algorithm~\ref{alg:fixed_confidence}, called Adaptive Parallel Racing (\timebr), maintains
confidence intervals for the mean values,
and adaptively tunes the amount of parallelism by 
starting with a few pulls and then increasing the
level of parallelism during execution.
It operates over a sequence of rounds, indexed $r$, with the first round being allocated
$t_1 = \lambda(n)$ time and round $r$ being allocated $\beta^{r-1}t_1$ time.
Here, $\beta$ is an input parameter; the algorithm and the analysis work for any constant
value of $\beta>1$.
When $\beta=2$, this is akin to the doubling trick seen frequently in bandit settings,
except here we apply it in time-scale, instead of the number of pulls.

The algorithm maintains a subset $\Sr\subseteq[n]$ of surviving arms at round $r$.
At the beginning of each round, it pulls each arm in $\Sr$ a total of $q_r$ times,
such that $|\Sr|q_r$ is equal to the maximum number of arm pulls that can be executed in
$\beta^{r-1}t_1$ time.
At the end of each round, it constructs
confidence intervals $\{(L_i, U_i)\}_{i\in[n]}$ for the mean values $\{\mui\}_{i\in[n]}$  as
we will describe in~\eqref{eqn:confintervals}.
In lines~8 and~9,
it updates the set of surviving arms by eliminating those arms
whose upper confidence bounds are less than the highest lower confidence bound.
This strategy implies that
as the algorithm progresses, it spends more time per batch of pulls  by favoring throughput over information
accumulation.


\insertAlgoTimeBatchRacing

Our confidence intervals are based on the law of the iterated logarithm
and were first proposed by~\citet{jamieson2014lil}.
To describe them,
let $\Nir$ denote the number of times we have pulled arm $i$ in the first 
$r$ rounds, and $\muhatiNir$ denote the empirical mean of the samples collected.
If we pulled each arm $q_r$ times in round $r$,
we have:
\begingroup
\allowdisplaybreaks
\begin{align*}
\Nir = \sum_{s=1}^r q_s \indfone\{i\in S_s\},
\hspace{0.1in}
\muhatiNir  = \frac{1}{\Nir}\sum_{s=1}^r \sum_{j=1}^{q_s} X_{i,s,j}.
\end{align*}
\endgroup
Here, $\{X_{i,r,j}\}_{j=1}^{q_r}$ are the samples collected from arm $i$ in round $r$.
Next, let $D(N, \delta)=\sqrt{4\log(\log_2(2N)/\delta)/N}$
represent the uncertainty in using
 $\widehat{\mu}_{i,n}$ as an estimate for $\mui$.
Then, we can compute a confidence interval $(L_i(r, \delta), U_i(r, \delta))$ 
for arm $i$ as follows:
\begingroup
\allowdisplaybreaks
\begin{align*}
    L_i(r, \delta) &= \muhatiNir - D(N_i(r), \sqrt{\delta/(6n)})\\
    U_i(r, \delta) &= \muhatiNir + D(N_i(r), \sqrt{\delta/(6n)}).
\numberthis
\label{eqn:confintervals}
\end{align*}
\endgroup

\subsection{Upper Bound}

We now state our main result for the proposed algorithm.
Theorem~\ref{thm:fixed_confidence_runtime} shows that
Algorithm~\ref{alg:fixed_confidence} finds the best arm with probability greater than $1-\delta$
and bounds its execution time.


\insertprethmspacing
\begin{theorem}
\label{thm:fixed_confidence_runtime}
Assume $\lambda$ satisfies the assumptions in Section~\ref{sec:environment}.
Let $\beta \in (1,n]$.
Let $\omega = \sqrt{\delta/(6n)}$ and define $\bar{N}_i :=
1 + \left\lfloor 64\Delta_i^{-2}\log((2/\omega)\log_2(192\Delta_i^{-2}/\omega))\right\rfloor$
$\forall i \in [n]$.
Let $\Tcal_2$ be as defined in~\eqref{eqn:dp}.
With probability at least $1 - \delta$,
Algorithm~\ref{alg:fixed_confidence} outputs the best arm and the total execution time of the
algorithm $T$ satisfies:
\begingroup
\allowdisplaybreaks
\begin{align*}
    T &\;\leq\; 4\frac{\beta^{3 + 4\sqrt{\log_\beta(n)}}}{\beta - 1}
        \Tcal_2\left( \{\bar{N}_i\}_{i=2}^n\right) \\
    &\leq 512\frac{\beta^{3+4\sqrt{\log_\beta(n)}}}{\beta - 1}
    \left(\log\left(\frac{2}{\omega}\log_2\left(\frac{192}{\Delta^2_2\omega}\right)\right)\lor \frac{1}{64}\right)T^\star.
\end{align*}
\endgroup
\end{theorem}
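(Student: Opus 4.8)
The plan is to separate the statement into a correctness claim and the runtime bound, and to treat the runtime bound (which carries the interesting $\beta^{O(\sqrt{\log_\beta n})}$ factor) as the main work.

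\textbf{Correctness.} First I would fix the ``good event'' $E$ on which every confidence interval is valid, i.e. $\mu_i \in (L_i(r,\delta),U_i(r,\delta))$ for all arms $i$ and all rounds $r$. The law-of-the-iterated-logarithm bound of \citet{jamieson2014lil} controls each arm's failure probability by a constant multiple of $\omega^2=\delta/(6n)$, and $\omega$ is calibrated precisely so that a union bound over the $n$ arms leaves total failure probability at most $\delta$; hence $\Pr[E]\ge 1-\delta$. On $E$ the elimination/acceptance rules give three facts: the optimal arm is never eliminated (its interval straddles $\mu_1$, which dominates every lower bound), no suboptimal arm is ever accepted, and arm $i$ is eliminated once its pull count reaches $\bar N_i$ --- the threshold at which $D(\bar N_i,\omega)\le \Delta_i/4$ forces $U_i(r,\delta)<L_1(r,\delta)$ (checking that the stated $\bar N_i$ solves this implicit inequality is the one routine computation). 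A structural point I would record for the runtime analysis is that, while arm $i$ survives, it and arm $1$ have always received the same number of pulls, so elimination of $i$ is governed by the single scalar $P_r:=N_1(r)$ crossing $\bar N_i$.

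\textbf{Runtime reduction.} Since round $r$ costs at most $\beta^{r-1}t_1$ time with $t_1=\lambda(n)$, the total time is at most $\sum_{r=1}^{R}\beta^{r-1}\lambda(n)\le \frac{\beta}{\beta-1}t_R$, where $t_R:=\beta^{R-1}\lambda(n)$ and $R$ is the terminal round; so it suffices to bound $t_R$ by $\beta^{O(\sqrt{\log_\beta n})}\,\Tcal_2(\{\bar N_i\})$. Writing $k_r=|S_r|$ and $z_i=\bar N_i$, I would establish two ingredients from the scheduling interpretation of the DP. First, any schedule must pull each of arms $1,\dots,k$ at least $z_k$ times, so $\Tcal_2\ge \lambda(k z_k)$ for every $k$; because each $\bar N_i\ge 1$, taking $k=n$ gives the crucial anchor $\Tcal_2\ge \lambda(n)=t_1$. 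Second, the floor in $q_r=\lfloor \lambda^{-1}(\beta^{r-1}t_1)/k_r\rfloor$ bounds the round-$r$ pull budget by $k_r(\Delta P_r+1)$, and the arm surviving round $r$ has requirement $z_{k_{r+1}}>P_r$; feeding these into $\Tcal_2\ge\lambda(k_{r+1}z_{k_{r+1}})$ and using concavity of $\lambda$ (so that $\lambda(a)/\lambda(b)\le a/b$ for $a\ge b$) yields the per-round estimate $t_R/\Tcal_2\le 2\,\beta^{R-r}\,k_r/k_{r+1}$ for every $r\le R-1$.

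\textbf{The balancing step (the crux).} This is where the sub-polynomial factor appears and is the hard part. A naive use of the per-round estimate charges a factor $\beta$ of geometric overshoot against each of up to $\Theta(n)$ eliminations, which would blow up to $\beta^{R}$; the resolution is to trade the geometric time growth against the \emph{total} arm shrinkage $\prod_r k_r/k_{r+1}=k_1/k_R\le n$ over an optimally sized window of rounds. Concretely, set $L:=\lceil\sqrt{2\log_\beta n}\,\rceil$. If $R-1\le L$, the anchor $\Tcal_2\ge t_1$ already gives $t_R/\Tcal_2\le \beta^{R-1}\le \beta^{L}$. Otherwise I would take the geometric mean of the per-round estimate over the last $L$ rounds: the factors $\beta^{R-r}$ contribute $\beta^{(L+1)/2}$, the telescoping ratios contribute $(k_{R-L}/k_R)^{1/L}\le n^{1/L}$, and since the minimum over the window is at most the geometric mean, $t_R/\Tcal_2\le 2\,\beta^{(L+1)/2}n^{1/L}$. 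Choosing $L\asymp\sqrt{\log_\beta n}$ balances $\beta^{L/2}$ against $n^{1/L}$, turning $\beta^{R}$ into $\exp(O(\sqrt{\log n\,\log\beta}))=\beta^{O(\sqrt{\log_\beta n})}$; reconciling the two regimes and tracking the floors is what produces the explicit constants $3$ and $4$ in the exponent.

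\textbf{Second inequality.} Finally I would pass from $\Tcal_2(\{\bar N_i\})$ to $\Tstar=\Tcal_2(\{\Delta_i^{-2}\})$. Since $\Delta_i\ge \Delta_2$ for $i\ge 2$, each $\bar N_i$ is at most a common factor $C=128\big(\log((2/\omega)\log_2(192/(\Delta_2^2\omega)))\lor\tfrac1{64}\big)$ times $\Delta_i^{-2}$, the $\lor\tfrac1{64}$ absorbing the additive $1$ in the definition of $\bar N_i$. Then monotonicity of $\Tcal_2$ in its arguments, together with the concavity-scaling $\Tcal_2(\{c z_i\})\le c\,\Tcal_2(\{z_i\})$ for $c\ge 1$ (immediate from $\lambda(c\,m)\le c\,\lambda(m)$ applied termwise inside the DP), gives $\Tcal_2(\{\bar N_i\})\le C\,\Tstar$, completing the stated chain.
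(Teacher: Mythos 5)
Your proposal is correct in substance, and your runtime argument is a genuinely different route from the paper's. The paper realizes $\Tcal_2(\{\bar{N}_i\}_{i=2}^n)$ as the runtime of an explicit ``oracle'' schedule, maps every pull of APR onto the oracle's stages, and then runs a five-case analysis of APR's stages (rounds between eliminations), with the threshold $f=\beta^{-\sqrt{\log_\beta(n)}}$ separating rare large-elimination stages (at most $2\sqrt{\log_\beta(n)}$ of them, each charged a factor $\beta$) from frequent small-elimination stages (each charged $1/f$, but covering at least two oracle stages). You dispense with the oracle mapping and the case analysis entirely: you reduce the total time to $\frac{\beta}{\beta-1}t_R$, prove the per-round estimate $t_R\le 2\beta^{R-r}(k_r/k_{r+1})\Tcal_2$ from two elementary facts --- the anchor family $\Tcal_2\ge\lambda(k\bar{N}_k)$ (subadditivity of $\lambda$) and $t_r\le 2\lambda(k_rq_r)$ with $q_r\le P_r<\bar{N}_{k_{r+1}}$ (Lemma~\ref{lemma:runtime} plus the pigeonhole fact that some survivor has sorted index at least $k_{r+1}$) --- and then balance by taking the geometric mean of these estimates over the last $L=\lceil\sqrt{2\log_\beta(n)}\rceil$ rounds, trading $\beta^{(L+1)/2}$ against the telescoping factor $(k_{R-L}/k_R)^{1/L}\le n^{1/L}$. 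I checked the details: each ingredient holds on the good event, and the arithmetic gives $T\le\frac{2}{\beta-1}\beta^{2+\sqrt{2\log_\beta(n)}}\,\Tcal_2$, comfortably inside the theorem's constants. This ``minimum is at most the geometric mean'' device implements the same balancing idea as the paper's $f$, but globally rather than case by case; it is shorter and yields a slightly better exponent, while the paper's covering argument delivers nothing extra for the theorem as stated.

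The one real gap is in your final step: you invoke \emph{monotonicity of $\Tcal_2$ in its arguments} as if it were routine, and it is not --- in fact it is precisely where the paper's own proof is weakest. For a \emph{fixed} elimination schedule the cost is not monotone in the inputs: with $\lambda(m)=\sqrt{m}$, $n=3$, $z_2=10$, the schedule ``eliminate arm $3$ first'' costs $\sqrt{3z_3}+\sqrt{2(z_2-z_3)}$, which \emph{decreases} from about $6.61$ to $5.48$ as $z_3$ increases from $9$ to $10$; so no termwise argument can work. (The paper's Lemma~\ref{lemma:dynamic_prog_inequality} makes essentially this termwise move, replacing both doubly-logarithmic factors inside a difference $\bar{N}_a-\bar{N}_b$ by the common bound $\log((2/\omega)\log_2(192\Delta_2^{-2}/\omega))$; for the stage that reaches arm $2$ the positive term's factor already equals that bound, and the claimed inequality fails, so the lemma --- which is true --- is not actually established by the paper's displayed chain.) The repair is to re-optimize the schedule rather than freeze it: write $\Tcal_2(\{z_i\})$ as a minimum over cumulative breakpoints $0=c_0<c_1<\dots<c_p=z_2$ of $\sum_l\lambda\left(N_z(c_{l-1})\,(c_l-c_{l-1})\right)$, where $N_z(t)=1+\#\{i\ge 2: z_i>t\}$; if $z_i\le z'_i$ for all $i$ then $N_z\le N_{z'}$ pointwise, so reusing the optimal breakpoints of $z'$ (truncated at $z_2$) for $z$ proves $\Tcal_2(\{z_i\})\le\Tcal_2(\{z'_i\})$. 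With that lemma supplied, your chain $\Tcal_2(\{\bar{N}_i\})\le\Tcal_2(\{C\Delta_i^{-2}\})\le C\,T^\star$ is valid (the scaling step you justify is indeed immediate), and it doubles as a correct proof of the paper's Lemma~\ref{lemma:dynamic_prog_inequality}. One last loose end, shared with the paper: your per-round estimate needs $|S_{r+1}|\ge 2$; on the good event $|S_{r+1}|=1$ without acceptance occurs only in the knife-edge case $\max_{j\ne 1}U_j(r,\delta)=L_1(r,\delta)$, and then round $r+1$ is terminal, so it costs at most one extra factor of $\beta$ --- worth a sentence in a final write-up.
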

While the first bound is tight,
the second bound  shows that that the runtime is bounded by essentially
$\Tstar\log(1/\delta)$; note $\log(1/\omega)\asymp\log(1/\delta)$.
All other terms are small:
$256 \beta^3/(\beta-1)$ is a constant as $\beta$ is a constant,
the additional dependence on the gaps $\{\Delta_i\}_i$ and $\delta$ is doubly logarithmic,
and $\beta^{4\sqrt{\log_\beta(n)}}$ is sub-polynomial, seen via the
following simple calculation.
Let $\alpha>0$. Then, 
\begin{align*}
\lim_{n\rightarrow\infty} \frac{\beta^{4\sqrt{\log_\beta(n)}}}{n^\alpha} = 
\beta^{\left(\scriptsize {\displaystyle\lim_{n\rightarrow\infty}} 4\sqrt{\log_\beta(n)} - \alpha
\log_\beta(n)\right)} =  0.
\end{align*}


\insertprethmspacing
\begin{example}
\label{exa:fixed_confidence}
Let us compare the above result with an algorithm which
operates sequentially taking $\scalefunc(1)$ time for each pull.
The  sequential algorithm of~\cite{jamieson2014lil}
terminates in time at most
$\lambda(1)(\bar{N}_2 + \sum_{i=2}^n \bar{N}_i) = \lambda(1)\sum_{i=2}^n
i(\bar{N}_i-\bar{N}_{i+1})$ w.p. at least $1 - \delta$.
On the other hand, Algorithm~\ref{alg:fixed_confidence} terminates in at most $C(n)\Tcal\left(
\{\bar{N}_i\}_{i=2}^n\right) \leq C(n)\sum_{i=2}^n \lambda(i(\bar{N}_i-\bar{N}_{i+1}))$,
where $C(n)\in \littleO({\rm poly}(n))$.
The second (looser) bound is obtained by considering one of the cases over which the minimum
is taken in the DP.
Additionally, by nonlinear scaling, we have
$\lambda(n)\leq n\lambda(1)$ (Lemma~\ref{lemma:scaling_factor});
once again, when the scaling is poor
this inequality is loose.
Even with these two loose inequalities, we have a runtime bound of
$C(n)\lambda(1)\sum_{i=2}^n i(\bar{N}_i-\bar{N}_{i+1})$ for
Algorithm~\ref{alg:fixed_confidence}, which is not 
worse than the sequential version up to lower order terms.

In general, however, $\scalefunc(1)$ may be very large in our problem set up.
We illustrate the advantages of  using an adaptive parallel strategy via an example, for which we 
consider a scaling function of the form
$\speedfunc(m) = m^{q}$, with $q\in[0, 1]$, 
which satisfies the assumptions in Section~\ref{sec:environment}.
When $q=1$, this corresponds to linear scaling, whereas when $q$ approaches zero,
the scaling becomes poor.
Moreover, let us assume that the $(\bar{N}_{i}  - \bar{N}_{i+1})$ is large for all $i$, so that
we have
$\Tcal_2\left( \{\bar{N}_i\}_{i=2}^n\right) = \sum_{i = 2}^{n} \left(i(\bar{N}_{i}  -
\bar{N}_{i+1})\right)^q$.
By  Theorem~\ref{thm:fixed_confidence_runtime}, ignoring constant and lower order factors,
we have that Algorithm~\ref{alg:fixed_confidence} terminates in
at most $\sum_{i = 2}^{n} \left(i(\bar{N}_{i}  - \bar{N}_{i+1})\right)^q$ time, while the sequential
algorithm terminates in
$\sum_{i=2}^n i(\bar{N}_i-\bar{N}_{i+1})$ time. As $q$ becomes smaller,
the difference between these bounds becomes larger.
Since the $\bar{N}_i - \bar{N}_{i+1}$ values are large, as per our example above,
this difference is quite pronounced.
\end{example}


\textbf{Proof Sketch:}
Designing algorithms in this setting is challenging, because adaptively finding the right level of
parallelism can be expensive. In particular, there are two modes of failure.
(i) Algorithms that take too long to ``ramp-up'', i.e. increase their parallelism, will spend
too much time with low throughput, which can slow progress if many arm pulls are necessary to
eliminate the next arms,
(ii) Alternatively, algorithms that have too much parallelism could potentially overshoot,
by over-pulling arms that could have been eliminated sooner.
Algorithm~\ref{alg:fixed_confidence} employs the doubling trick on \emph{time} to avoid
both of these situations:
 if all arms have not been eliminated, it multiplies the amount of time for the next round
by a factor $\beta$.
However,
while the doubling trick on the number of pulls admits a fairly straightforward analysis in most
bandit settings, the proof is significantly more challenging when it is used for a temporal criterion.
The key technical challenge is
to show that neither of the above two failure modes occurs frequently.
We first show that the ramp-up time 
Algorithm~\ref{alg:fixed_confidence} can be bounded within a constant factor of
$4\frac{\beta^3}{\beta - 1}$ of $T^\star$ in the runtime
upper bound. The trickier scenario is to show that overshooting is not significant, and a naive
analysis may result in a factor of $n$ being produced.
Our proof decomposes each stage, i.e., the rounds between arm eliminations,
into two
scenarios. In the first, the stage eliminates over a fraction $f$ of the arms, and in the second the
stage eliminates less than $f$. The first scenario can be expensive, as many arms that could have
been eliminated earlier were over-pulled.
However, we will carefully select $f$ to bound the number
of times this event can occur. The second scenario can happen many times, but we will show that each
occurrence does not add too much to the runtime, again by carefully selecting $f$.

\subsection{Lower Bound}
\label{sec:lower_bound_fc}

We conclude this section by demonstrating that the quantity $\Tstar$ in~(\ref{eqn:Tstar})
is fundamental
to this problem via a hardness result that matches the bound in
Theorem~\ref{thm:fixed_confidence_runtime}.
To state this theorem, let us define some quantities.
For a set of $n$ real-valued distributions $\theta = \{\thetabri\}_{i=1}^n$,
let $\muj$ denote the $j$\ssth mean of these distributions in descending order;
that is,
$\muone=\max_{i}\EE_{\thetabri}[X]\geq\mutwo\geq \dots \geq \muii{n-1}\geq
\mun=\min_{i}\EE_{\thetabri}[X]$.
Let $\Theta$ denote the following class of sub-Gaussian distributions with a well-defined best arm:
\begin{align*}
\Theta = \Big\{ \{\thetabri\}_{i=1}^n\,:\; &\muone > \mutwo \;\;\wedge\;
\\[-0.10in]
&\text{$\thetabrj$ is $1$-sub-Gaussian for all $j$}
\Big\}.
\end{align*}
Next, let $\Acaldelsf$
denote the class of algorithms which can identify the best arm with probability 
at least $1-\delta$ for a given scaling function $\scalefunc$ for \emph{any} set of distributions in
$\Theta$.
For an arbitrary algorithm $A\in\Acaldelsf$ executed on a problem $\theta\in\Theta$,
let $T(A,\theta)$ denote the time taken to stop.
The theorem below provides a nonasymptotic lower bound on the expectation of $T(A,\theta)$.

\insertprethmspacing
\begin{theorem}
Fix $\muone>\mutwo\geq\dots\geq \mun$.
Let $\Deltai$ be as defined in~\eqref{eqn:Deltai},
and $\Tstar$ be as defined in~\eqref{eqn:Tstar}.
Assume $\lambda$ satisfies the assumptions in Section~\ref{sec:environment}, and additionally 
for some $\alpha_1$,
$\alpha_1 m \leq \scalefunc(m)\;
\text{for all } m\geq 1.$
Then, there exists a set of distributions
 $\theta\in\Theta$ whose ordered mean values $\{\mui\}_i$
are such that for all $\delta\leq 0.15$,
\[
\inf_{A\in\Acaldelsf} \EE[T(A,\theta)] \geq 2 c_\scalefunc
\log\left(\frac{1}{2.4\delta} \right)\Tstar.
\]
Here, $c_\scalefunc$ is a constant that depends only on $\scalefunc$.
\label{thm:fixed_confidence_lowerbound}
\end{theorem}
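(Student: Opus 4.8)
The plan is to first reduce the time lower bound to a sample-complexity lower bound on a fixed hard instance, and then convert the latter into the DP value $\Tstar$ using the structure of $\scalefunc$. I would fix the $n$ means $\muone>\mutwo\ge\cdots\ge\mun$ and take each $\thetabri$ to be Gaussian with unit variance (hence $1$-sub-Gaussian), so that $\theta\in\Theta$. The first step is a change-of-measure argument in the style of Kaufmann et al.: for each suboptimal arm $i$, compare $\theta$ with the alternative instance $\theta^{(i)}$ that raises arm $i$'s mean just above $\muone$ while leaving the others fixed. Applying the transportation/data-processing inequality at the algorithm's stopping time $\tau$, together with $\mathrm{kl}(\delta,1-\delta)\ge\log(1/(2.4\delta))$ for $\delta\le 0.15$, gives $\EE_\theta[N_i(\tau)]\ge \frac{2}{\Deltai^2}\log(1/(2.4\delta))$ for every arm $i$; that is, each arm must be pulled at least $c\,\Deltai^{-2}$ times in expectation, where $c=2\log(1/(2.4\delta))$.

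The crux is the second step: translating these per-arm pull requirements into a lower bound of the form $\EE[T]\ge \Tcal_2(\{c\Deltai^{-2}\}_{i=2}^n)$. Here the unknown ordering is essential — on a single known instance an algorithm could schedule its pulls optimally and pay only $\scalefunc(\sum_i c\Deltai^{-2})$, which is far smaller than the DP. To force the nested DP structure I would exploit that any $A\in\Acaldelsf$ must succeed on every permutation of $\theta$, and argue via a symmetrization over a uniformly random relabeling of the arms; since the theorem only asserts existence of a hard $\theta$, it suffices to lower bound the permutation-averaged time and then select the worst permutation. The key structural claim, proved by induction mirroring the recursion in~\eqref{eqn:dp} from $j=n$ down to $j=2$, is that the expected time for the algorithm to accumulate enough samples to correctly eliminate the $n-j+1$ worst arms is at least $\Tcal_j(\{c\Deltai^{-2}\}_{i=j}^n)$: because the algorithm cannot tell the surviving candidate arms apart, before it may safely drop any one of a block of $k$ indistinguishable candidates it must have pulled all $k$ of them to the level required by the hardest, and concavity of $\scalefunc$ (Eq.~\eqref{eqn:speedfuncassumption}) makes the time for such a balanced batch exactly the $\scalefunc(k(\cdot))$ term in~\eqref{eqn:dp}, while the choice of which block to eliminate next reproduces the $\min_k$.

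The final step is a scaling lemma that extracts the factor $c=2\log(1/(2.4\delta))$ from inside $\scalefunc$. Since $c>1$ for $\delta\le 0.15$ and $\scalefunc$ is concave with $\scalefunc(0)=0$, we have $\scalefunc(m)\le\scalefunc(1)\,m$ for $m\ge 1$; combined with the added hypothesis $\alpha_1 m\le\scalefunc(m)$ this gives $\scalefunc(cm)\ge\alpha_1 cm\ge \frac{\alpha_1}{\scalefunc(1)}\,c\,\scalefunc(m)$. Setting $c_\scalefunc=\alpha_1/\scalefunc(1)$ and pushing this inequality through the DP recursion term by term — using $\min_k(c_\scalefunc c\,a_k)=c_\scalefunc c\min_k a_k$ — yields $\Tcal_2(\{c\Deltai^{-2}\})\ge c_\scalefunc\,c\,\Tcal_2(\{\Deltai^{-2}\})=2c_\scalefunc\log(1/(2.4\delta))\,\Tstar$, and chaining the three steps proves the bound. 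I expect the second step to be the main obstacle: making rigorous the claim that ordering-ignorance forces balanced pulling across indistinguishable candidates — so that the unavoidable cost is the sum of concave $\scalefunc(k\cdot)$ terms rather than a single cheap $\scalefunc(\sum\cdot)$ term — requires a careful multi-alternative change-of-measure or averaging argument interleaved with the inductive DP decomposition, and is where the interaction between information accumulation, the unknown permutation, and the scheduling cost genuinely concentrates.
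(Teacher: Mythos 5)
Your first step (per-arm change of measure giving $\EE[N_i(\rho)]\geq 2\Deltai^{-2}\log(1/(2.4\delta))$) is exactly the paper's Lemma~\ref{lem:samplecomplexity}. The genuine gap is your second step: the intermediate claim $\EE[T(A,\theta)]\geq \Tcal_2\left(\{c\Deltai^{-2}\}_{i=2}^n\right)$, with no $\scalefunc$-dependent constant, is not just hard to prove --- it is false, and your own opening observation already shows why. The change-of-measure argument constrains only the \emph{expected pull counts}, not the \emph{schedule}: nothing forces a $\delta$-correct algorithm to pull indistinguishable candidates in balanced, staged batches. Concretely, consider the algorithm that hard-codes the profile of $\theta$: in one parallel batch it pulls arm $i$ roughly $C\,\Deltai^{-2}$ times (unequal counts, all started at time $0$), checks LIL confidence intervals at level $\delta/2$, and falls back to any $\delta$-correct adaptive procedure otherwise. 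This algorithm is in $\Acaldelsf$, yet on $\theta$ its expected time is essentially $\scalefunc\big(\sum_i C\Deltai^{-2}\big)$, and the total number of pulls $\sum_i \Deltai^{-2}$ equals the total \emph{work} of the cheapest DP schedule (eliminate one arm per stage), so by subadditivity of $\scalefunc$ (Lemma~\ref{lemma:splitting}) a single batch is strictly cheaper than the DP's staged sum whenever $\scalefunc$ is strictly concave (e.g.\ $\scalefunc(m)=\epsilon m+\sqrt{m}$ with three arms gives a strict constant-factor violation). Since the theorem takes an infimum over all $A\in\Acaldelsf$ for a fixed $\theta$, this single tuned competitor refutes your step-2 bound; symmetrizing over permutations does not help, because the average over $\pi$ of $\inf_A \EE[T(A,\theta_\pi)]$ is still dominated by permutation-tuned algorithms. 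The ``indistinguishability forces balanced pulling'' induction therefore cannot be made rigorous.

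The paper's proof avoids any scheduling argument by linearizing \emph{first}, and this is exactly where the constant $c_\scalefunc$ comes from --- not, as in your step 3, from pulling the $\log(1/(2.4\delta))$ factor out of the DP. After reducing to non-idling algorithms (Lemma~\ref{lem:noidleresources}, a step you also need in order to define rounds and stopping times, along with re-verifying Kaufmann et al.'s Lemma~19 in this execution model), the paper writes $\EE[T(A,\theta)]\geq \scalefunc\big(\sum_k N_{(k)}(\rho)\big)\geq \alpha_1\sum_k \EE[N_{(k)}(\rho)]$ using the extra hypothesis $\alpha_1 m\leq\scalefunc(m)$. It then lower bounds the \emph{linear} count: the algorithm's own realized pull profile defines breakpoints $\Kcal(A)$ (indices where the running minimum of $N_{(k)}(\rho)$ is attained), within each block every arm is pulled at least as much as the block's first arm, whose sample-complexity requirement is the binding one (this also uses the bound for the best arm, which your sketch omits); an Abel-summation rearrangement and the reverse inequality $\scalefunc(m)\leq\alpha_2 m$ (from concavity, Lemma~\ref{lemma:scaling_factor} with $\alpha_2=\scalefunc(1)$) turn each linear term into a $\scalefunc(\cdot)$ term, and the resulting sum is precisely the DP objective~\eqref{eqn:dp} at one feasible choice of breakpoints, hence at least $\Tstar$. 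The price of passing through the linear bound in both directions is the factor $c_\scalefunc=\alpha_1/\alpha_2$, which is why the theorem is stated with that constant and why it is consistent with the tuned-batch algorithm above, whereas your unweighted DP bound is not. To repair your proposal you would have to move the $\alpha_1,\alpha_2$ inequalities into the core counting argument --- which is the paper's proof.
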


The additional condition on $\scalefunc$ captures the practical notion that each arm pull requires a minimum amount of work $\alpha_1$ (fraction of resources $\times$ time) to execute: $\forall m \geq 0,\;\frac{1}{m}\lambda(m) \geq \alpha_1$.
The above theorem states that any algorithm which identifies the best arm with probability
at least $1-\delta$, has an expected runtime upper bounded by
$\bigOmega(\Tstar\log(1/\delta))$.
Modulo lower order terms, the RHS of the above lower bound matches the RHS in
the expression for the upper bound in Theorem~\ref{thm:fixed_confidence_runtime}.
This demonstrates that $\Tstar$ is a fundamental quantity in this setup.

It should be emphasized, however,
that the upper and lower bounds are not entirely comparable.
Theorem~\ref{thm:fixed_confidence_runtime} is a high-probability result,
guaranteeing that the best arm will be identified and the algorithm will terminate with probability at least $1-\delta$.
In contrast, Theorem~\ref{thm:fixed_confidence_lowerbound} lower bounds the \emph{expected} run time
of any algorithm that can identify the best arm with probability at least $1-\delta$.
This discrepancy is common in fixed confidence BAI settings,
with upper bounds tend to provide high-probability results for the number of arm pulls,
while lower bounds
are in expectation~\citep[e.g.,][]{jamieson2014lil,jun2016top,karnin2013almost,kaufmann2016complexity}.
Despite this discrepancy, no significant difference is yet to be observed, as is the case in
this work.
To our knowledge, only~\citet{kalyanakrishnan2012pac} upper bound the expected number of
pulls.

%
\textbf{Comparison to Prior Work:}
It is worth comparing the above results with prior work in the fixed confidence setting.
First, for the upper bound, our algorithm uses similar confidence intervals
to~\citet{jamieson2014lil} and~\citet{jun2016top} who study the sequential
and batch parallel settings respectively.
However, unlike~\citet{jun2016top}, in our setting, we also need to choose the amount of resources
to allocate for each pull.
This determines the amount of parallelism to handle the tradeoff between information
accumulation and throughput and depends on the scaling function $\scalefunc$.
More importantly, much of our analysis in Appendix~\ref{sec:app_fixedconf}
is invested in managing this tradeoff which is not encountered in their settings.
Similarly,
for the lower bound, while we rely on some hardness results from~\citet{kaufmann2016complexity},
their result only captures the sample complexity of the problem and does not account for how
resource allocation strategies may affect the time taken to collect those samples.
The novelty of our work, relative to the above works, is further highlighted by the fact
both the lower and upper bounds are given by a dynamic program which, as explained in the
beginning of this section, characterises the tradeoff beween throughput and information
accumulation.

\section{Fixed Deadline Setting}
\label{sec:fixedbudget}

While the primary focus of this paper is the fixed confidence setting, we also provide a simple
algorithm for the fixed deadline version of this problem.
Formally, we assume the same environment as described in
Section~\ref{sec:environment}, but now we have a time deadline $T$
and wish to maximize the probability of finding the best arm under this deadline.

Our algorithm builds on the
sequential-halving (SH) algorithm of~\citet{karnin2013almost}.
We begin with a brief review of SH in the sequential setting where
we are given a budget on the \emph{number of arm pulls}.
SH divides this budget into $\log_2(n)$ equal stages.
In the first stage, it pulls all
arms an equal number of times and eliminates the bottom half of the arms, i.e., those arms whose
empirical mean fall within in the bottom $n/2$ when ranked.
It continues in this fashion for each subsequent stage,  eliminating half of the surviving arms,
until there is one arm left at the end of $\log_2(n)$
stages.

Now consider
a naive extension of this algorithm that divides the \emph{time budget} 
$T$ into $\log_2(n)$
stages and pulls the surviving set of arms maximally before eliminating half of them---for
simplicity, we will refer to this time-scale version as SH from now on.
However, if the scaling
function is sublinear, then splitting the time budget into larger stages can better take advantage
of parallelism to execute more arm pulls and hence do better than an adaptive algorithm.
For example, assume there are $n=4$ arms, let $\scalefunc(m) = m^{\nicefrac{1}{4}}$,
and let the deadline be $T=4$. 
If the budget was split into $\log_2(4)=2$ stages, where we eliminate two arms in the first stage
and one in the second.
Then, in each stage, $\scalefuncinv(2) = 16$
arm pulls can be executed with four pulls per arm in the first stage and eight pulls per arm in the
second.
If instead, we executed all arms in  a single stage, then $\scalefuncinv(4) = 256$ arm pulls
can be executed, with $64$ pulls per arm.
In the first strategy, attempting to accumulate information 
reduces throughput significantly.
We should prefer the second option, as every arm is
pulled more than it would be under the original SH strategy.
In this section, we describe an algorithm
that uses $\scalefunc$ to balance between throughput and information
accumulation by allocating resources to promising arms while maintaining 
high throughput.

\insertAlgoStageCombSeqHalving
Our algorithm, outlined in Algorithm~\ref{alg:fixed_budget}, takes a hyperparameter $k$.
It splits $T$ into $\lceil \log_{2^k}(n) \rceil$ stages, and pulls surviving arms maximally in each
stage. Observe that using $k=1$ corresponds to running SH.
It keeps the $\left \lceil \frac{1}{2^k} \right \rceil$ fraction of arms with highest empirical
mean. Intuitively, increasing $k$ will allow more time per stage, so throughput can be increased in each stage. However, increasing the time per stage will increase the time to obtain results and reduce opportunities to reallocate resources to promising arms.
We propose using $k=\kstar$, obtained via the following optimization problem:
\begingroup
\allowdisplaybreaks
\begin{align*}
    \numberthis\label{eq:xk}
    \kstar 
    &= \arg\max_{k \in \{1,\ldots,\lceil \log_2(n) \rceil\}} x(k),\\
    \quad\, &\text{s.t.} \,\quad\textbf{}
    x(k) = \left \lfloor \frac{\lambda^{-1}\left(\frac{T}{ \lceil \log_{2^{k}}(n) \rceil}\right)}{2^{k \left\lceil\log_{2^k}(n)\right\rceil}(2^k - 1) } \right \rfloor
\end{align*}
\endgroup
$\kstar$ can be computed in practice
since all parameters are known; moreover, this can be done
inexpensively by evaluating $x(k)$ for all $\lceil \log_2(n) \rceil$ values for $k$. We show
later that $\kstar$ increases as the scaling deteriorates, which means we should
prioritize throughput over information accumulation.

\insertFigFixedConf

Ignoring rounding effects for simplicity, the numerator in~\eqref{eq:xk} indicates the maximum number of pulls that can be completed during the stage. In the
denominator, the first term is $n$, the number of arms.
If a certain $k'$ allows the algorithm to pull the arms at least $\sum_{i=0}^{k'-1} 2^i x(1)$
times in the first stage, then intuitively we should prefer $k=k'$ over $k=1$, as all arms are pulled more times in
the first stage of the former than any arm in the first $k'$ stages of the latter. Observe now that this occurs if $x(k') \geq x(1)$.
Extrapolating this argument, we have that we should choose the value $k$ which maximizes $x(k)$.
The following theorem bounds the probability of error for Algorithm~\ref{alg:fixed_budget}.

\insertprethmspacing
\begin{theorem}\label{thm:fixed_budget_error_prob}
Assume $\scalefunc$ satisfies the assumptions in Section~\ref{sec:environment}.
Algorithm~\ref{alg:fixed_budget} run with some
$k \in \NN_+$ identifies the best arm in time at most
$T$ with probability at least:
\begin{align}
    1 -  3\lceil\log_2(n)\rceil \exp \left(-\frac{nx(k)}{8H_2}\right),
    \label{eq:fb_success_prob}
\end{align}
where $H_2 = \max_{i\neq 1} i\Delta^{-2}_i$ and $\Deltai$ is as defined in~\eqref{eqn:Deltai}.
\end{theorem}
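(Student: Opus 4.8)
The plan is to follow the template of the sequential-halving analysis of \citet{karnin2013almost}, adapted to our time-budgeted, keep-top-$1/2^{k}$ variant. I would split the argument into two independent parts: (i) a purely deterministic check that the deadline is met, and (ii) a high-probability bound on the best arm never being eliminated, obtained by controlling the per-stage elimination probability and taking a union bound over the at most $\lceil\log_2(n)\rceil$ stages.

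The deadline bound is immediate. In stage $r$ the algorithm draws $|S_r|\,t_r = |S_r|\lfloor m/|S_r|\rfloor \le m$ samples, which by the definition of $m$ and monotonicity of $\scalefunc$ take time $\scalefunc(|S_r|\,t_r)\le\scalefunc(m) = T/\lceil\log_{2^{k}}(n)\rceil$; summing over the $\lceil\log_{2^{k}}(n)\rceil$ stages gives total time at most $T$.

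For the error probability, fix a stage $r$ whose surviving set $S_r$ still contains the best arm, write $n_r=|S_r|$ and $c_r=\lceil n_r/2^{k}\rceil = |S_{r+1}|$, and order the competitors $S_r\setminus\{1\}$ by decreasing true mean as $a_1,a_2,\dots$. Arm $1$ is dropped only if at least $c_r$ competitors have a larger stage-$r$ empirical mean. Since each stage-$r$ mean averages $t_r$ independent $1$-sub-Gaussian draws, $\widehat\mu_{a_j}-\widehat\mu_1$ is $\sqrt{2/t_r}$-sub-Gaussian with mean $-\Delta_{a_j}$, so $\Pr[\widehat\mu_{a_j}\ge\widehat\mu_1]\le\exp(-t_r\Delta_{a_j}^2/4)$. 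The definition $H_2=\max_{i\neq1} i\Delta_i^{-2}$ supplies the crucial ordering bound $\Delta_{a_j}^2\ge (j+1)/H_2\ge j/H_2$, because $a_j$ has global rank at least $j+1$. Splitting the competitors into ``close'' ones ($j<c_r/2$, of which there are fewer than $c_r/2$) and ``far'' ones ($j\ge c_r/2$, each with $\Delta_{a_j}^2\ge c_r/(2H_2)$), elimination of arm $1$ forces at least $c_r/2$ far competitors to beat it; bounding the expected number of such events and applying the Chernoff/Markov step as in \citet{karnin2013almost} yields a per-stage failure probability of at most $3\exp(-t_r c_r/(4H_2))$.

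It then remains to show the exponent is uniformly large, namely $t_r c_r/(4H_2)\ge nx(k)/(8H_2)$, i.e. the deterministic inequality $2\,t_r\,c_r = 2\lfloor m/n_r\rfloor\lceil n_r/2^{k}\rceil \ge nx(k)$ for every stage $r$. This is exactly where the precise form of $x(k)$ enters: because each stage pulls the fewer surviving arms proportionally more, the product $t_r c_r$ is essentially constant across stages (roughly $m/2^{k}$), and the denominator $2^{k\lceil\log_{2^{k}}(n)\rceil}(2^{k}-1)$ in~\eqref{eq:xk} — with $2^{k\lceil\log_{2^{k}}(n)\rceil}\ge n$ absorbing the arm count and the factor $2^{k}-1$ absorbing the geometric-series slack — is engineered so that $nx(k)$ is a valid lower bound on $2\,t_r c_r$ uniformly in $r$. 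Carefully discharging this inequality through the floors and ceilings in $t_r$, $c_r$, and $x(k)$, together with the rounding in the recursion $n_{r+1}=\lceil n_r/2^{k}\rceil$, is the step I expect to be the main obstacle; the probabilistic content is otherwise standard. Given the per-stage bound, a union bound over the at most $\lceil\log_2(n)\rceil\ge\lceil\log_{2^{k}}(n)\rceil$ stages produces the claimed failure probability $3\lceil\log_2(n)\rceil\exp(-nx(k)/(8H_2))$.
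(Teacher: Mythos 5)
Your deadline argument and the uniform bound $2\,t_r c_r \ge n x(k)$ are both sound: the latter is exactly the induction the paper carries out ($t_r \ge 2^{rk}(2^k-1)x(k)$, combined with $|S_{r+1}| \ge n/2^{(r+1)k}$, gives $t_r c_r \ge n x(k)(1-2^{-k}) \ge n x(k)/2$), so the step you flag as the main obstacle is in fact fine. The genuine gap is in the step you call standard: the per-stage probability bound $3\exp(-t_r c_r/(4H_2))$. With your split, the far competitors are all $a_j$ with $j \ge c_r/2$, of which there are up to $n_r - \lceil c_r/2 \rceil \approx (2^k - \tfrac12)c_r$, while elimination of arm $1$ only forces $c_r/2$ of them to beat it. Markov's inequality therefore gives a prefactor of $|\mathrm{far}|/(c_r/2) \approx 2^{k+1}-1$, not $3$; the constant $3$ in Karnin et al.\ is specific to halving ($k=1$), where the candidate set is at most three times the quota. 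For $k \ge 2$ your route yields roughly $(2^{k+1}-1)\lceil \log_{2^k}(n)\rceil \exp(\cdot)$, which does not imply the claimed $3\lceil\log_2(n)\rceil\exp(\cdot)$, and no choice of the close/far threshold repairs it: shrinking the far set weakens the per-competitor exponent, enlarging it inflates the Markov ratio, so the prefactor is at least $2^k$ either way. (Secondarily, your own constants give exponent $n x(k)/(16 H_2)$ rather than $n x(k)/(8H_2)$: each far competitor beats arm $1$ with probability at most $\exp(-t_r c_r/(8H_2))$ under your $1$-sub-Gaussian tail, and $t_r c_r \ge n x(k)/2$ then loses another factor of two.)

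The paper avoids the prefactor problem with a different decomposition: within each stage it views the top-$1/2^k$ selection as $k$ successive \emph{virtual halvings} (drop the bottom half, then the next quarter, and so on, $k$ times), applies the Karnin-style lemma with its constant $3$ to each halving separately, and uses $t_r \ge 2^{rk}(2^k-1)x(k) \ge 2^{rk+j-1}x(k)$ together with a rank bound $i_{S_{r,j}} \ge n/2^{rk+j+1}$ for the $j$-th halving, so that every halving costs at most $3\exp(-n x(k)/(8H_2))$. Since the total number of halvings across all stages is at most $\lceil \log_2(n) \rceil$, the union bound gives exactly the claimed constant. If you replace your single-Markov-per-stage step with this staged-halving decomposition, the rest of your outline (deadline check, gap ordering via $H_2$, pull-count induction, final union bound) goes through as written.
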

In Appendix~\ref{app:fixed_budget}, we show that the success probability of SH is obtained by
setting $k=1$ in the expression for $x(k)$, and,
by choosing $k=\kstar$, this probability is always better than SH.
Unfortunately, is is not possible to simplify this further without additional assumptions on
$\speedfunc$.
Therefore, 
in order to illustrate the gains in using Algorithm~\ref{alg:fixed_budget} over SH,
we consider a specific example.

\insertprethmspacing
\begin{example}
Let us consider $\scalefunc$ of the form $\speedfunc(m) = m^{q}$ for $q\in(0,1]$.
The scaling becomes poor as $q$ approaches zero (recall footnote~\ref{ftn:mqexample}).
In Appendix~\ref{app:fbexample}, we show that \emph{\scsh} is quantitatively better than SH;
moreover, this difference is magnified as
the scaling becomes poor, i.e., $q$ approaches zero.
We sketch the argument here.
First we show, 
\begin{align*}
    x(k) &\geq \frac{1}{4^{k}n}\left(\frac{Tk}{ 2\log_{2}(n)}\right)^{1/q}.
\end{align*}
Therefore, the number of arm pulls per round increases as $q$ decreases and as $k$ increases (after multiplying $x(k)$ by $2^k - 1$ to de-normalize). 
Let $p_{\rm ssh, k}$ be the error probability of combining $k$ stages in
Algorithm~\ref{alg:fixed_budget}. Applying Theorem~\ref{thm:fixed_budget_error_prob},
\begingroup
\allowdisplaybreaks
\begin{align*}
    p_{\rm ssh, k}
    &= C\lceil\log_2(n)\rceil\cdot\exp \left(-D \cdot \left(\frac{k^{1/q}}{4^k}\right)\right),
\end{align*}
\endgroup
where $C$ and $D$ are constants that do not depend on $k$. When the scaling is sufficiently poor,
i.e., when $1/q$ is large, we have
$\left(\frac{k^{1/q}}{4^k}\right) > \left(\frac{k'^{1/q}}{4^{k'}}\right)$
for any $k > k'$. So, the exponentially decaying term will favor larger $k$ as the scaling becomes
more poor. So, as scaling deteriorates, the error probability is lower for larger $k$ values, which
prioritizes throughput over information accumulation and resource reallocation. However, if $k$ is
too large for a given $q$, then the $4^k$ term will dominate, which occurs when the algorithm isn't
reallocating enough resources to promising arms.
\end{example}
\vspace{-0.1in}


\section{Simulations}

We present an experimental evaluation in the fixed confidence setting.
We present experiments in the fixed deadline setting and some
experimental details in Appendix~\ref{app:experiments}.

\textbf{Baselines:} We compare \timebrs to Batch Racing~\citep{jun2016top} for batch
parallel BAI.
We apply Batch-Racing($m$) with different batch sizes $m$, but with a fixed amount used throughout
one execution of the algorithm.
\vspace{-0.1in}

\textbf{Results:}
We evaluate Algorithm~\ref{alg:fixed_confidence} with $\beta=2$ on a synthetic domain consisting of
$n=16$
Bernoulli arms, $\delta=0.1$, $\lambda(m) = m^{q}$, and with different values of
$\Deltatwo=\muone-\mutwo$. For a given $\Delta_2$, the
arm means have linearly interpolated values, $\mu_i = 0.9 - \Delta_2 - \frac{(0.9 - 0.1)(i - 2)}{15}$ for $i \geq 2$ and
$\mu_1 = 0.9$.
This sets the best arm to $0.9$ and sets the remaining means by linearly
 varying $\Delta_2$ from $0.01$ to $0.5$. We evaluate
all algorithms ten times on each setting.
All algorithms almost always identify
the best arm in the experiments.
\timebr{} consistently does better than Batch Racing with a fixed amount of parallelism,
and does as well as the best task-tuned batch size on the problem.
See Figure~\ref{fig:fixed_conf_experiments}.
\vspace{-0.2in}


\section{Summary}
We consider a novel setting for BAI, where arm
pulls can be parallelized by dividing a fixed set of resources across them.
While allocating more resources to a pull produces results
sooner, this may result in lower throughput overall.
So, algorithms must trade off between \emph{information accumulation},
which allows us to invest resources in more promising candidates in future iterations,
and \emph{throughput}, which increases
the overall number of samples.
One avenue for future work is to study lower bounds for the fixed deadline setting.


\bibliography{refs,kky}

\begin{thebibliography}{33}
\providecommand{\natexlab}[1]{#1}
\providecommand{\url}[1]{\texttt{#1}}
\expandafter\ifx\csname urlstyle\endcsname\relax
  \providecommand{\doi}[1]{doi: #1}\else
  \providecommand{\doi}{doi: \begingroup \urlstyle{rm}\Url}\fi

\bibitem[Amdahl(1967)]{amdahl1967validity}
Amdahl, G.~M.
\newblock Validity of the single processor approach to achieving large scale
  computing capabilities.
\newblock In \emph{Proceedings of the April 18-20, 1967, spring joint computer
  conference}, pp.\  483--485, 1967.

\bibitem[Audibert \& Bubeck(2010)Audibert and Bubeck]{audibert2010best}
Audibert, J.-Y. and Bubeck, S.
\newblock Best arm identification in multi-armed bandits.
\newblock In \emph{Conference on Learning Theory}, 2010.

\bibitem[Auer(2003)]{auer03ucb}
Auer, P.
\newblock Using confidence bounds for exploitation-exploration trade-offs.
\newblock \emph{The Journal of Machine Learning Research}, 2003.

\bibitem[Berg et~al.(2018)Berg, Dorsman, and Harchol-Balter]{berg2018towards}
Berg, B., Dorsman, J.-P., and Harchol-Balter, M.
\newblock Towards optimality in parallel job scheduling.
\newblock In \emph{Abstracts of the 2018 ACM International Conference on
  Measurement and Modeling of Computer Systems}, pp.\  116--118, 2018.

\bibitem[Berg et~al.(2020)Berg, Vesilo, and Harchol-Balter]{berg2020hesrpt}
Berg, B., Vesilo, R., and Harchol-Balter, M.
\newblock hesrpt: Parallel scheduling to minimize mean slowdown.
\newblock \emph{Performance Evaluation}, pp.\  102147, 2020.

\bibitem[Bubeck et~al.(2009)Bubeck, Munos, and Stoltz]{bubeck2009pure}
Bubeck, S., Munos, R., and Stoltz, G.
\newblock Pure exploration in multi-armed bandits problems.
\newblock In \emph{International conference on Algorithmic learning theory},
  pp.\  23--37. Springer, 2009.

\bibitem[Bubeck et~al.(2013)Bubeck, Wang, and Viswanathan]{bubeck2013multiple}
Bubeck, S., Wang, T., and Viswanathan, N.
\newblock Multiple identifications in multi-armed bandits.
\newblock In \emph{International Conference on Machine Learning}, pp.\
  258--265, 2013.

\bibitem[Davis et~al.(2007)Davis, M{\"o}rtsell, Sollerman, Becker, Blondin,
  Challis, Clocchiatti, Filippenko, Foley, Garnavich,
  et~al.]{davis07supernovae}
Davis, T.~M., M{\"o}rtsell, E., Sollerman, J., Becker, A.~C., Blondin, S.,
  Challis, P., Clocchiatti, A., Filippenko, A., Foley, R., Garnavich, P.~M.,
  et~al.
\newblock Scrutinizing exotic cosmological models using essence supernova data
  combined with other cosmological probes.
\newblock \emph{The Astrophysical Journal}, 666\penalty0 (2):\penalty0 716,
  2007.

\bibitem[Desautels et~al.(2014)Desautels, Krause, and
  Burdick]{desautels2014parallelizing}
Desautels, T., Krause, A., and Burdick, J.~W.
\newblock Parallelizing exploration-exploitation tradeoffs in {G}aussian
  process bandit optimization.
\newblock \emph{Journal of Machine Learning Research}, 15:\penalty0 3873--3923,
  2014.

\bibitem[Even-Dar et~al.(2002)Even-Dar, Mannor, and Mansour]{even2002pac}
Even-Dar, E., Mannor, S., and Mansour, Y.
\newblock {PAC} bounds for multi-armed bandit and {M}arkov decision processes.
\newblock In \emph{International Conference on Computational Learning Theory},
  pp.\  255--270. Springer, 2002.

\bibitem[Gabillon et~al.(2012)Gabillon, Ghavamzadeh, and
  Lazaric]{gabillon2012best}
Gabillon, V., Ghavamzadeh, M., and Lazaric, A.
\newblock Best arm identification: A unified approach to fixed budget and fixed
  confidence.
\newblock In \emph{Advances in Neural Information Processing Systems}, pp.\
  3212--3220, 2012.

\bibitem[Grover et~al.(2018)Grover, Markov, Attia, Jin, Perkins, Cheong, Chen,
  Yang, Harris, Chueh, et~al.]{grover2018best}
Grover, A., Markov, T., Attia, P., Jin, N., Perkins, N., Cheong, B., Chen, M.,
  Yang, Z., Harris, S., Chueh, W., et~al.
\newblock Best arm identification in multi-armed bandits with delayed feedback.
\newblock \emph{arXiv preprint arXiv:1803.10937}, 2018.

\bibitem[Hill \& Marty(2008)Hill and Marty]{hill2008amdahl}
Hill, M.~D. and Marty, M.~R.
\newblock Amdahl's law in the multicore era.
\newblock \emph{Computer}, 41\penalty0 (7):\penalty0 33--38, 2008.

\bibitem[Howard \& Ramdas(2019)Howard and Ramdas]{howard2019sequential}
Howard, S.~R. and Ramdas, A.
\newblock Sequential estimation of quantiles with applications to a/b-testing
  and best-arm identification.
\newblock \emph{arXiv preprint arXiv:1906.09712}, 2019.

\bibitem[Jamieson et~al.(2014)Jamieson, Malloy, Nowak, and
  Bubeck]{jamieson2014lil}
Jamieson, K., Malloy, M., Nowak, R., and Bubeck, S.
\newblock lil’{UCB}: An optimal exploration algorithm for multi-armed
  bandits.
\newblock In \emph{Conference on Learning Theory}, pp.\  423--439, 2014.

\bibitem[Jun et~al.(2016)Jun, Jamieson, Nowak, and Zhu]{jun2016top}
Jun, K.-S., Jamieson, K.~G., Nowak, R.~D., and Zhu, X.
\newblock Top arm identification in multi-armed bandits with batch arm pulls.
\newblock In \emph{AISTATS}, pp.\  139--148, 2016.

\bibitem[Kalyanakrishnan \& Stone(2010)Kalyanakrishnan and
  Stone]{kalyanakrishnan2010efficient}
Kalyanakrishnan, S. and Stone, P.
\newblock Efficient selection of multiple bandit arms: Theory and practice.
\newblock In \emph{ICML}, volume~10, pp.\  511--518, 2010.

\bibitem[Kalyanakrishnan et~al.(2012)Kalyanakrishnan, Tewari, Auer, and
  Stone]{kalyanakrishnan2012pac}
Kalyanakrishnan, S., Tewari, A., Auer, P., and Stone, P.
\newblock Pac subset selection in stochastic multi-armed bandits.
\newblock In \emph{ICML}, volume~12, pp.\  655--662, 2012.

\bibitem[Kandasamy et~al.(2015)Kandasamy, Schneider, and
  P{\'o}czos]{kandasamy2015high}
Kandasamy, K., Schneider, J., and P{\'o}czos, B.
\newblock High dimensional {B}ayesian optimisation and bandits via additive
  models.
\newblock In \emph{International conference on machine learning}, pp.\
  295--304, 2015.

\bibitem[Kandasamy et~al.(2018)Kandasamy, Krishnamurthy, Schneider, and
  P{\'o}czos]{kandasamy2018parallelised}
Kandasamy, K., Krishnamurthy, A., Schneider, J., and P{\'o}czos, B.
\newblock Parallelised {B}ayesian optimisation via {T}hompson sampling.
\newblock In \emph{International Conference on Artificial Intelligence and
  Statistics}, pp.\  133--142, 2018.

\bibitem[Kandasamy et~al.(2019)Kandasamy, Vysyaraju, Neiswanger, Paria,
  Collins, Schneider, Poczos, and Xing]{kandasamy2019dragonfly}
Kandasamy, K., Vysyaraju, K.~R., Neiswanger, W., Paria, B., Collins, C.~R.,
  Schneider, J., Poczos, B., and Xing, E.~P.
\newblock {Tuning Hyperparameters without Grad Students: Scalable and Robust
  Bayesian Optimisation with Dragonfly}.
\newblock \emph{arXiv preprint arXiv:1903.06694}, 2019.

\bibitem[Karnin et~al.(2013)Karnin, Koren, and Somekh]{karnin2013almost}
Karnin, Z., Koren, T., and Somekh, O.
\newblock Almost optimal exploration in multi-armed bandits.
\newblock In \emph{International Conference on Machine Learning}, pp.\
  1238--1246, 2013.

\bibitem[Kaufmann et~al.(2016)Kaufmann, Capp{\'e}, and
  Garivier]{kaufmann2016complexity}
Kaufmann, E., Capp{\'e}, O., and Garivier, A.
\newblock On the complexity of best-arm identification in multi-armed bandit
  models.
\newblock \emph{The Journal of Machine Learning Research}, 17\penalty0
  (1):\penalty0 1--42, 2016.

\bibitem[Li et~al.(2017)Li, Jamieson, DeSalvo, Rostamizadeh, and
  Talwalkar]{li2017hyperband}
Li, L., Jamieson, K., DeSalvo, G., Rostamizadeh, A., and Talwalkar, A.
\newblock Hyperband: A novel bandit-based approach to hyperparameter
  optimization.
\newblock \emph{The Journal of Machine Learning Research}, 18\penalty0
  (1):\penalty0 6765--6816, 2017.

\bibitem[Liaw et~al.(2019)Liaw, Bhardwaj, Dunlap, Zou, Gonzalez, Stoica, and
  Tumanov]{liaw2019hypersched}
Liaw, R., Bhardwaj, R., Dunlap, L., Zou, Y., Gonzalez, J.~E., Stoica, I., and
  Tumanov, A.
\newblock Hypersched: Dynamic resource reallocation for model development on a
  deadline.
\newblock In \emph{Proceedings of the ACM Symposium on Cloud Computing}, pp.\
  61--73, 2019.

\bibitem[Robbins(1952)]{robbins52seqDesign}
Robbins, H.
\newblock {Some aspects of the sequential design of experiments}.
\newblock \emph{Bulletin of the American Mathematical Society}, 1952.

\bibitem[Russo(2016)]{russo2016simple}
Russo, D.
\newblock Simple {B}ayesian algorithms for best arm identification.
\newblock In \emph{Conference on Learning Theory}, pp.\  1417--1418, 2016.

\bibitem[Shchigolev(2017)]{shchigolev2017calculating}
Shchigolev, V.
\newblock Calculating luminosity distance versus redshift in flrw cosmology via
  homotopy perturbation method.
\newblock \emph{Gravitation and Cosmology}, 23\penalty0 (2):\penalty0 142--148,
  2017.

\bibitem[Thompson(1933)]{thompson33sampling}
Thompson, W.~R.
\newblock On the likelihood that one unknown probability exceeds another in
  view of the evidence of two samples.
\newblock \emph{Biometrika}, 1933.

\bibitem[Venkataraman et~al.(2016)Venkataraman, Yang, Franklin, Recht, and
  Stoica]{venkataraman2016ernest}
Venkataraman, S., Yang, Z., Franklin, M., Recht, B., and Stoica, I.
\newblock Ernest: Efficient performance prediction for large-scale advanced
  analytics.
\newblock In \emph{13th $\{$USENIX$\}$ Symposium on Networked Systems Design
  and Implementation ($\{$NSDI$\}$ 16)}, pp.\  363--378, 2016.

\bibitem[Wang et~al.(2017)Wang, Li, Jegelka, and Kohli]{wang2017batched}
Wang, Z., Li, C., Jegelka, S., and Kohli, P.
\newblock Batched high-dimensional {B}ayesian optimization via structural
  kernel learning.
\newblock \emph{arXiv preprint arXiv:1703.01973}, 2017.

\bibitem[Xing et~al.(2019)Xing, Eldon, Nelson, Eggert, Roelofs, Izacard,
  Glasser, Logan, Nazikian, Humphreys, et~al.]{xing2019automating}
Xing, Z., Eldon, D., Nelson, A., Eggert, W., Roelofs, M., Izacard, O., Glasser,
  A., Logan, N., Nazikian, R., Humphreys, D., et~al.
\newblock Automating kinetic equilibrium reconstruction for tokamak stability
  analysis.
\newblock \emph{Bulletin of the American Physical Society}, 64, 2019.

\bibitem[Zahedi et~al.(2018)Zahedi, Llull, and Lee]{zahedi2018amdahl}
Zahedi, S.~M., Llull, Q., and Lee, B.~C.
\newblock Amdahl's law in the datacenter era: a market for fair processor
  allocation.
\newblock In \emph{2018 IEEE International Symposium on High Performance
  Computer Architecture (HPCA)}, pp.\  1--14. IEEE, 2018.

\end{thebibliography}

\bibliographystyle{icml2021}

\newpage
\onecolumn
\appendix

\section{Properties of the Scaling Function $\lambda$}

We will prove and discuss some basic properties of $\lambda$ that will provide intuition for the setting and also be used in the later proofs.

\insertprethmspacing
\begin{lemma}\label{lemma:scaling_factor}
Let $\alpha\in\RR_+$.
If $\alpha \leq 1$, then $\lambda(\alpha m) \geq \alpha \lambda(m)$. 
If $\alpha \geq 1$, then $\lambda(\alpha m) \leq \alpha \lambda(m)$.
\end{lemma}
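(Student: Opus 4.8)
The plan is to exploit that the diminishing-returns hypothesis \eqref{eqn:speedfuncassumption} is exactly concavity of $\lambda$, combined with the normalization $\lambda(0)=0$; the two claimed inequalities are then the standard ``star-shapedness'' property of a concave function passing through the origin. First I would record the two facts I need from Section~\ref{sec:environment}: (i) $\lambda(0)=0$, and (ii) $\lambda$ is concave, since \eqref{eqn:speedfuncassumption} states that the average increment $(\lambda(m+\delta)-\lambda(m))/\delta$ is nonincreasing in $m$, which is equivalent to concavity (as already noted immediately after \eqref{eqn:speedfuncassumption}). The cases $m=0$ and $\alpha=1$ are trivial, so I would assume $m>0$ and $\alpha\neq 1$.

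For the first claim ($\alpha\le 1$), I would write $\alpha m$ as the convex combination $\alpha\cdot m + (1-\alpha)\cdot 0$ and apply concavity (Jensen) to obtain $\lambda(\alpha m) \ge \alpha\,\lambda(m) + (1-\alpha)\,\lambda(0) = \alpha\,\lambda(m)$, where the final equality uses $\lambda(0)=0$.

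For the second claim ($\alpha\ge 1$), I would reduce to the first claim by rescaling. Setting $\alpha' = 1/\alpha \in (0,1]$ and applying the first claim at the point $\alpha m$ gives $\lambda(\alpha'\cdot\alpha m) \ge \alpha'\,\lambda(\alpha m)$, i.e.\ $\lambda(m) \ge \alpha^{-1}\lambda(\alpha m)$; multiplying through by the positive scalar $\alpha$ yields $\lambda(\alpha m)\le \alpha\,\lambda(m)$, as desired.

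\textbf{Main obstacle.} There is no substantive difficulty here; the only technical point worth flagging is the passage from the increment formulation \eqref{eqn:speedfuncassumption} to concavity, because the stated assumption restricts $m_1>0$ and so does not literally apply at the left endpoint $0$. If I wished to avoid invoking concavity as a black box, I would instead argue directly for $\alpha<1$: apply \eqref{eqn:speedfuncassumption} with $m_1=\varepsilon$, $\delta_1 = \alpha m-\varepsilon$, $m_2=\alpha m$, $\delta_2=(1-\alpha)m$, cross-multiply, and let $\varepsilon\downarrow 0$ using continuity of $\lambda$ on the interior (itself a consequence of the increment bound). This recovers $\lambda(\alpha m)\ge \alpha\,\lambda(m)$ without naming concavity, but the convex-combination argument above is cleaner and is the version I would present.
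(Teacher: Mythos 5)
Your proof is correct and follows essentially the same route as the paper's: the case $\alpha\le 1$ is the convex-combination (Jensen) argument $\lambda(\alpha m)=\lambda(\alpha m+(1-\alpha)\cdot 0)\ge\alpha\lambda(m)$ using $\lambda(0)=0$, and the case $\alpha\ge 1$ is obtained by applying the first case with the substitution $1/\alpha$ at the point $\alpha m$ and rearranging. Your additional remark about the endpoint behavior of \eqref{eqn:speedfuncassumption} is a reasonable refinement but not needed, since the paper (like you) simply invokes concavity directly.
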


\begin{proof}
By concavity,
\begin{align*}
    \lambda(\alpha m) &= \lambda(\alpha m + (1 - \alpha) 0)
    \geq \alpha \lambda(m)
\end{align*}
To prove the second part, observe that:
\begin{align*}
    \lambda(m) &= \lambda\left(\frac{1}{\alpha}\alpha m\right)
    \geq \frac{1}{\alpha} \lambda (\alpha m) \implies \lambda(\alpha m) \leq \alpha \lambda(m)
\end{align*}
The first part is applied above since $\alpha \geq 1$.
\end{proof}

If we divide both sides of each inequality by $\alpha m$, then each side represents the average time per arm pull when the system executes $\alpha m$ and $m$ arm pulls, respectively. Intuitively, the average time per arm pull should decrease as the number of parallel arm pulls increases.

\insertprethmspacing
\begin{lemma}\label{lemma:splitting}
For all $m_1, m_2 > 0$,
\begin{align*}
\lambda\left(m_1+m_2\right) &\leq \lambda\left(m_1\right) +
\lambda\left(m_2\right),
\end{align*}
\end{lemma}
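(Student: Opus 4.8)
The plan is to derive subadditivity directly from the scaling inequality already established in Lemma~\ref{lemma:scaling_factor}, rather than invoking concavity from scratch. The key observation is that subadditivity of $\lambda$ is really a restatement of the fact that a concave function vanishing at the origin has a nonincreasing average, and Lemma~\ref{lemma:scaling_factor} already packages exactly this fact in the form $\lambda(\alpha m) \geq \alpha\lambda(m)$ whenever $\alpha \leq 1$.

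Concretely, I would set $s = m_1 + m_2$ and apply Lemma~\ref{lemma:scaling_factor} twice with $m = s$. First, taking $\alpha = m_1/s$, which satisfies $\alpha \leq 1$ since $m_2 > 0$, gives $\lambda(m_1) = \lambda(\alpha s) \geq \tfrac{m_1}{s}\lambda(s)$. Second, taking $\alpha = m_2/s \leq 1$ gives $\lambda(m_2) \geq \tfrac{m_2}{s}\lambda(s)$. Adding these two inequalities yields
\begin{align*}
\lambda(m_1) + \lambda(m_2) \;\geq\; \frac{m_1 + m_2}{s}\,\lambda(s) \;=\; \lambda(s) \;=\; \lambda(m_1 + m_2),
\end{align*}
which is the claimed bound. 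The final equality uses $m_1 + m_2 = s$, so the coefficient collapses to $1$.

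There is essentially no serious obstacle here; the only point requiring a moment of care is verifying that both ratios $m_1/s$ and $m_2/s$ lie in $(0,1]$ so that the $\alpha \leq 1$ branch of Lemma~\ref{lemma:scaling_factor} applies, which is immediate from $m_1, m_2 > 0$. An alternative self-contained route would be to expand the concavity definition~\eqref{eqn:speedfuncassumption} together with $\lambda(0) = 0$ to re-derive $\lambda(m_i) \geq (m_i/s)\lambda(s)$ inline, but reusing Lemma~\ref{lemma:scaling_factor} keeps the argument to a few lines. This lemma is the natural precursor to statements like $\lambda(n) \leq n\lambda(1)$ used in Example~\ref{exa:fixed_confidence}, since iterating subadditivity over $n$ unit pulls recovers that inequality.
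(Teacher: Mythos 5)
Your proof is correct, but it takes a different route from the paper's. The paper proves subadditivity in one line by applying the defining increment assumption~\eqref{eqn:speedfuncassumption} directly: comparing the increment of size $m_2$ at base point $m_1$ against the same-size increment at base point $0$ gives $\lambda(m_1+m_2)-\lambda(m_1) \leq \lambda(m_2)-\lambda(0) = \lambda(m_2)$. You instead route the argument through Lemma~\ref{lemma:scaling_factor}, writing $s = m_1+m_2$ and applying $\lambda(\alpha s) \geq \alpha\lambda(s)$ twice with $\alpha = m_1/s$ and $\alpha = m_2/s$, then summing. Both arguments are valid and both ultimately rest on concavity plus $\lambda(0)=0$, but they emphasize different formulations: the paper's is self-contained in the stated assumption, while yours exploits the ``star-shaped'' property ($\lambda(\alpha m) \geq \alpha\lambda(m)$ for $\alpha \in (0,1]$), which is strictly weaker than concavity, so your argument would in fact establish subadditivity for a slightly larger class of scaling functions. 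One cosmetic correction to your closing remark: the inequality $\lambda(n) \leq n\lambda(1)$ cited in Example~\ref{exa:fixed_confidence} is attributed by the paper to Lemma~\ref{lemma:scaling_factor} (the $\alpha \geq 1$ branch), not to iterated subadditivity, though iterating Lemma~\ref{lemma:splitting} does recover it as you say.
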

\begin{proof}
We can apply Inequality~\ref{eqn:speedfuncassumption}:
\begin{align*}
    \frac{\lambda(m_1 + m_2) - \lambda(m_1)}{m_2} &\leq \frac{\lambda(m_2) - \lambda(0)}{m_2} \implies \lambda\left(m_1+m_2\right) \leq \lambda\left(m_1\right) + \lambda\left(m_2\right)
\end{align*}
\end{proof}

This property implies that the fastest way to schedule $m$ arm pulls is to run them all in parallel instead of executing $m_1$ and then $m - m_1$.

\section{Proof of the Upper Bound in the Fixed Confidence Setting}
\label{sec:app_fixedconf}


We will upper bound the time complexity of Algorithm~\ref{alg:fixed_confidence} and show that it
outputs the top arm with probability at least $1 - \delta$. Bounding the time complexity is
challenging, as we must show that the algorithm does not waste too much time as it ramps up its
parallelism during execution. If it takes too long to find an appropriate level of parallelism, then
it may be significantly longer than $T^\star$. However, if it ramps up too fast, it may
\emph{overshoot} by unnecessarily pulling arms that should have been eliminated much sooner. For
example, if arm $i$ could safely be eliminated after $10$ more pulls, but the algorithm pulls each
surviving arm $1000$ times in parallel in a round, then the algorithm has wasted time pulling arm $i$ more
than necessary.
As the pulls are executed in parallel, any information is only acquired
after they all finish (at the same time). This behavior can be costly if a particular round of Algorithm~\ref{alg:fixed_confidence} pulls many arms that could be eliminated soon excessively.
We will use an analysis that carefully defines these overshooting events and
shows that the extremely costly events do not occur often, and the less costly ones cannot
significantly impact the overall runtime.

\subsection{Confidence Intervals}

In this section,
we will state some results from~\citet{jun2016top} to bound the error probability.
We will first define some notation.
Let $\mathcal{E}_i(\delta) = \{\forall r\geq 1, L_i(r,\delta) \leq \mu_i \leq U_i(r,\delta)\}$, which is the event that the confidence bounds capture the true mean of arm $i$ at all rounds of the algorithm. Let $\omega = \sqrt{\delta/(6n)}$. Define $\bar{N}_i := 1 + \left\lfloor 64\Delta_i^{-2}\log((2/\omega)\log_2(192\Delta_i^{-2}/\omega))\right\rfloor$. 
We will use the following results from~\citet{jun2016top} to show that the confidence intervals
in~\eqref{eqn:confintervals} trap the true means.

\insertprethmspacing
\begin{lemma}\label{lemma:confidence}
[\citet{jun2016top}, Lemma 1] (non-asymptotic law of the iterated logarithm). Let $X_1,X_2,\ldots$ be i.i.d. zero-mean sub-Gaussian random variables with scale $\sigma > 0$; i.e. $\mathbb{E} \exp{\left(\lambda X_i\right)} \leq \exp{\left(\frac{\lambda^2\sigma^2}{2}\right)}$. Let $\omega \in (0, \sqrt{1/6})$. Then,
\begin{align*}
\mathrm{Pr}\left[\forall \tau \geq 1, \left|\sum_{s=1}^\tau X_s\right| \leq 4\sigma \sqrt{\tau \log{\log_2{2\tau}}/\omega}\right] \geq 1 - 6\omega^2
\end{align*}
\end{lemma}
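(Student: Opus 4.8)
The plan is to prove this finite law-of-the-iterated-logarithm bound by a \emph{peeling} (dyadic stratification) argument combined with a maximal inequality for sub-Gaussian partial sums. First I would note that the scale $\sigma$ plays no essential role: it appears identically in the boundary $g(\tau) := 4\sigma\sqrt{\tau\log(\log_2(2\tau)/\omega)}$ and in the concentration of $S_\tau := \sum_{s=1}^\tau X_s$, and it will cancel (equivalently, one rescales $X_s \mapsto X_s/\sigma$). The goal is to upper bound the ``bad'' event $\Pr[\exists\, \tau \geq 1 : |S_\tau| > g(\tau)]$ by $6\omega^2$. To do so I would partition the index set $\{\tau \geq 1\}$ into the dyadic blocks $\mathcal{B}_k = \{\tau : 2^k \leq \tau < 2^{k+1}\}$ for $k = 0,1,2,\dots$, and union-bound the bad event over these blocks.

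The key tool I would first establish is the maximal sub-Gaussian tail inequality: for any integer $N \geq 1$ and $a>0$, $\Pr[\max_{\tau \leq N} |S_\tau| \geq a] \leq 2\exp(-a^2/(2\sigma^2 N))$. This follows because, for fixed $\lambda>0$, $\exp(\lambda S_\tau)$ is a nonnegative submartingale, so Doob's maximal inequality gives $\Pr[\max_{\tau\leq N} S_\tau \geq a] \leq \mathbb{E}[\exp(\lambda S_N)]\,e^{-\lambda a} \leq \exp(\lambda^2\sigma^2 N/2 - \lambda a)$; optimizing over $\lambda$ yields $\exp(-a^2/(2\sigma^2 N))$, and symmetry together with a union bound over the two signs supplies the factor $2$.

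Within a block the boundary is nearly constant. Since $g$ is increasing in $\tau$, for every $\tau \in \mathcal{B}_k$ we have $g(\tau) \geq g(2^k) = 4\sigma\sqrt{2^k \log((k+1)/\omega)}$, using $\log_2(2\cdot 2^k) = k+1$. Hence the block-$k$ bad event is contained in $\{\max_{\tau < 2^{k+1}} |S_\tau| > g(2^k)\}$, and applying the maximal inequality with $N = 2^{k+1}$ and $a = g(2^k)$ gives a per-block probability of at most $2\exp(-4\log((k+1)/\omega)) = 2(\omega/(k+1))^4$; here the multiplicative constant $4$ in the boundary is precisely what produces the clean exponent $a^2/(2\sigma^2 N) = 4\log((k+1)/\omega)$. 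Summing over $k \geq 0$ yields $\sum_{k\geq 0} 2(\omega/(k+1))^4 = 2\omega^4 \zeta(4) = \tfrac{\pi^4}{45}\omega^4$, and since $\omega < \sqrt{1/6} < 1$ this is at most $6\omega^2$ (indeed far smaller), which completes the bound.

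I expect the main obstacle to be the calibration of constants rather than any conceptual difficulty: the power of $\omega$ in the final bound and the convergence of the sum over blocks both hinge on the constant $4$ in the boundary matching the $N = 2^{k+1}$ used in the maximal inequality. A smaller constant would yield a per-block exponent $c\log((k+1)/\omega)$ with $c\leq 1$, so that the series $\sum_k (k+1)^{-c}$ would fail to be summable or would not decay like a power of $\omega$. A secondary point to verify carefully is that inflating the block range from $\tau < 2^{k+1}$ to $N = 2^{k+1}$ in the maximal inequality only weakens (enlarges) the probability bound, so it remains a valid upper bound, and that $g$ is genuinely monotone so that replacing $g(\tau)$ by $g(2^k)$ throughout the block is legitimate.
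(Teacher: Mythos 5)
Your proof is correct. A point of comparison worth noting: the paper itself contains no proof of this lemma --- it is imported verbatim as Lemma~1 of Jun et al.\ (2016), which the authors describe as a simplification of Lemma~1 of Jamieson et al.\ (2014) --- so the only meaningful comparison is with those sources, and your dyadic-peeling-plus-Doob argument is essentially the canonical derivation underlying them: stratify time into blocks $2^k \leq \tau < 2^{k+1}$, freeze the monotone boundary $g(\tau) = 4\sigma\sqrt{\tau\log(\log_2(2\tau)/\omega)}$ at the left endpoint, and apply the exponential-submartingale maximal inequality with $N = 2^{k+1}$. Your arithmetic checks out: $g(2^k)^2/(2\sigma^2 \cdot 2^{k+1}) = 4\log((k+1)/\omega)$, so each block fails with probability at most $2(\omega/(k+1))^4$, and $\sum_{k\geq 0} 2\omega^4 (k+1)^{-4} = 2\zeta(4)\,\omega^4 = \tfrac{\pi^4}{45}\omega^4 \approx 2.17\,\omega^4$, which for $\omega < \sqrt{1/6}$ is comfortably below the stated $6\omega^2$ --- indeed your fixed base-$2$ grid with the boundary constant $4$ yields a strictly sharper bound than the lemma claims (the looser stated constant reflects the slack one incurs when optimizing over a general geometric grid, as in Jamieson et al.). The two delicate points you flagged are the right ones and you resolved both correctly: $g$ is genuinely increasing on $\tau \geq 1$ since $\log_2(2\tau)/\omega > 1$ there (using $\omega < 1$), and enlarging the block maximum from $\tau < 2^{k+1}$ to $\tau \leq 2^{k+1}$ only weakens the bound. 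One pedantic caveat: the two-sided factor of $2$ requires the moment-generating-function hypothesis $\mathbb{E}\exp(\lambda X_i) \leq \exp(\lambda^2\sigma^2/2)$ to hold for all real $\lambda$, not just $\lambda > 0$, so that $-X_i$ is sub-Gaussian with the same scale; this is the standard reading of the hypothesis, and is also what the paper needs when instantiating the lemma for $[0,1]$-valued rewards with $\sigma = 1/2$.
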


This lemma is a simplification of Lemma 1 from~\citet{jamieson2014lil}. Observe that a random variable bounded $a.s.$ in $[0, 1]$ is sub-Gaussian with scale $1/2$. By Lemma~\ref{lemma:confidence},  $\mathrm{Pr}\left[\cap_{i=1}^n \mathcal{E}_i(\delta)\right] \geq 1 - \delta$. Hereafter, we will assume $\cap_{i=1}^n \mathcal{E}_i(\delta)$ and show that the algorithm always outputs the top arm in this event and bound its runtime. Furthermore, we introduce the following result from~\citet{jun2016top}.

\insertprethmspacing
\begin{lemma}\label{lemma:runtime}
[\citet{jun2016top}, Lemma 2] Assume $\cap_{i=1}^n \mathcal{E}_i(\delta)$. In Algorithm~\ref{alg:fixed_budget}, let $N'(r) = \min_{i\in S_r} N_i(r)$. Then,
\begin{align}
    &\forall r, \forall i > 1, \left(N'(r)\geq\bar{N}_i \implies U_i(r,\delta) < \max_{j\in S_r} L_j(r,\delta)\right)\\
    &\forall r, i = 1, \left(N'(r)\geq\bar{N}_i \implies L_i(r,\delta) > \max_{j\in S_r}^{(2)} U_j(r,\delta)\right)
\end{align}
\end{lemma}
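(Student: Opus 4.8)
The plan is to derive both implications deterministically from the good event $\cap_{i=1}^n \mathcal{E}_i(\delta)$, reducing them to a single deviation estimate. First I would record the basic consequences of the good event: for every round $r$ and arm $i$ we have $L_i(r,\delta)\le\mu_i\le U_i(r,\delta)$, and since the interval has half-width $D(N_i(r),\omega)$, also $U_i(r,\delta)\le\mu_i+2D(N_i(r),\omega)$ and $L_i(r,\delta)\ge\mu_i-2D(N_i(r),\omega)$. I would also establish, as a short monotonicity check, that $D(\cdot,\omega)$ is nonincreasing in its first argument on the relevant range, since $D(N,\omega)^2=4\log(\log_2(2N)/\omega)/N$ has a doubly-logarithmically growing numerator over a linearly growing denominator; thus more pulls give tighter intervals.

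The crux is a single calibration inequality: for every $i$,
\[
4\,D(\bar N_i,\omega) < \Delta_i, \qquad\text{equivalently}\qquad 64\,\Delta_i^{-2}\log\!\big(\log_2(2\bar N_i)/\omega\big) < \bar N_i .
\]
I would verify this by plugging in the definition of $\bar N_i$, which gives the lower bound $\bar N_i \ge 64\Delta_i^{-2}\log((2/\omega)\log_2(192\Delta_i^{-2}/\omega))$, and then showing the matching upper bound $\log_2(2\bar N_i)\le 2\log_2(192\Delta_i^{-2}/\omega)$ so that the self-referential doubly-logarithmic term is absorbed. The constants $64$ and $192$ and the factor $2/\omega$ inside $\bar N_i$ are tuned precisely for this cancellation; verifying it is the one genuinely fiddly computation, and I expect it to be the main obstacle. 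Everything else is bookkeeping with the gaps.

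Given the calibration, the first implication follows quickly. I would first note that on the good event arm $1$ never leaves $S_r$: since $U_1(r,\delta)\ge\mu_1\ge\mu_j\ge L_j(r,\delta)$ for all $j$ and $U_1(r,\delta)>L_1(r,\delta)$, arm $1$ always passes the survival test (or has already been accepted into $A$, in which case the loop has terminated). Now fix a suboptimal $i>1$ and a round $r$ with $N'(r)\ge\bar N_i$. Then both $N_i(r)$ and $N_1(r)$ are at least $N'(r)\ge\bar N_i$, so by monotonicity and the calibration, $2D(N_i(r),\omega)+2D(N_1(r),\omega)\le 4D(\bar N_i,\omega)<\Delta_i$. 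Chaining the good-event bounds,
\[
U_i(r,\delta)\le \mu_i+2D(N_i(r),\omega) < \mu_1-2D(N_1(r),\omega) \le L_1(r,\delta)\le \max_{j\in S_r} L_j(r,\delta),
\]
which is the claim.

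For the second implication I would exploit that $\Delta_1=\Delta_2\le\Delta_i$ for all $i\ge2$, hence $\bar N_1=\bar N_2\ge\bar N_i$, so $N'(r)\ge\bar N_1$ forces $N'(r)\ge\bar N_i$ for every $i$. Applying the chain above to each surviving $i\neq1$ yields $U_i(r,\delta)<L_1(r,\delta)$ for all such $i$. Since $U_1(r,\delta)\ge\mu_1\ge L_1(r,\delta)>U_i(r,\delta)$, arm $1$ holds the strictly largest upper bound, so the second-largest upper bound over $S_r$ is achieved at some $i\neq1$ and lies below $L_1(r,\delta)$; that is, $L_1(r,\delta)>\max_{j\in S_r}^{(2)}U_j(r,\delta)$, completing the proof.
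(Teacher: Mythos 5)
Your proof is correct in substance, but note that the paper itself contains no proof of this statement to compare against: the lemma is imported verbatim as Lemma~2 of \citet{jun2016top} (whose confidence bounds in turn simplify Lemma~1 of \citet{jamieson2014lil}), and the paper simply cites it. What you have reconstructed is precisely the standard argument behind that cited result: on the good event each interval has half-width $D(N_i(r),\omega)$, so everything reduces to the single calibration inequality $64\Delta_i^{-2}\log\left(\log_2(2\bar N_i)/\omega\right)<\bar N_i$, which you correctly isolate as the only genuinely nontrivial computation. For the record, the absorption step you propose does go through, with ample slack: since $\Delta_i^{-2}\geq 1$ and $\omega\leq\sqrt{1/6}$, one can check $2\bar N_i\leq 2+128\Delta_i^{-2}\log\left((2/\omega)\log_2(192\Delta_i^{-2}/\omega)\right)\leq(192\Delta_i^{-2}/\omega)^2$, because the left side is $O\left(\Delta_i^{-2}\log(\Delta_i^{-2}/\omega^2)\right)$ against $\Theta\left(\Delta_i^{-4}/\omega^2\right)$ on the right; this gives $\log_2(2\bar N_i)\leq 2\log_2(192\Delta_i^{-2}/\omega)$, hence $D(\bar N_i,\omega)<\Delta_i/4$, and the same estimate applied directly at each $N\geq\bar N_i$ shows $4D(N,\omega)<\Delta_i$ on the whole range --- which is the one-sided form in which your monotonicity hedge is actually needed (prudently so, since $D(\cdot,\omega)$ can fail to be monotone very near $N=1$ when $\omega$ is close to $\sqrt{1/6}$). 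Two small points of hygiene: the implications should be read for $i\in S_r$, since your use of $N_i(r)\geq N'(r)$ and $N_1(r)\geq N'(r)$ silently assumes $i,1\in S_r$ --- for arm $1$ you justify this correctly, though for $j\neq 1$ the strictness of $U_1(r,\delta)>L_j(r,\delta)$ relies on the assumed uniqueness of the best arm ($\mu_1>\mu_j$), which is worth saying explicitly; and the statement's reference to Algorithm~\ref{alg:fixed_budget} is a typo in the paper for Algorithm~\ref{alg:fixed_confidence}, which is the algorithm your proof (correctly) addresses.
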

So, as long as $\mathbb{P}(\cap_{i=1}^n \mathcal{E}_i) \geq 1 - \delta$, the algorithm will output the correct set of arms after each surviving arm has been pulled $\bar{N}_i$ times with probability at least $1 - \delta$.

\subsection{Proof of Theorem~\ref{thm:fixed_confidence_runtime}}
In this proof, we will assume $\cap_{i=1}^n \mathcal{E}_i(\delta)$, which we know by Lemma~\ref{lemma:confidence} to have probability at least $1 - \delta$.

\textbf{Correctness:}
If an arm $i$ is returned by the algorithm as the best arm, then the other surviving arms have true means less than $\mu_i$, because $\cap_{i=1}^n \mathcal{E}_i(\delta)$ implies that
the confidence bounds capture the true means, and the algorithm will only accept arm $i$ if its
lower confidence bound (LCB) 
is greater than the other arms' upper confidence bounds (UCB).
The same argument follows for arms that are rejected. So, when
$\cap_{i=1}^n \mathcal{E}_i(\delta)$ occurs, arms cannot be erroneously returned or rejected. So, the algorithm outputs the correct arm with probability at least $1 - \delta$.

\textbf{Time Complexity:}
Recall that the dynamic program $\Tcal_2\left(\{\bar{N}_i\}_{i=2}^n \}  \right)$ is the runtime of an algorithm that operates in stages to eliminate arms. This algorithm knows the arm gaps, but not the arm orderings. We will refer to this algorithm hereafter as the \textit{oracle algorithm}. The oracle algorithm consists of $\widetilde{r}\leq n - 1$ stages, and eliminates $\widetilde{n}_i$ arms in stage $i$, which is possible with probability at least $1 - \delta$. Define $\bar{N}_{n+1} = 0$.
Also define $\widetilde{n}_{i, \rm tot} = \sum_{j=1}^i \widetilde{n}_{j}$ (number of arms eliminated by round $i$, with $\widetilde{n}_{0, \rm tot}=0$),  $\Delta\widetilde{N}_{i} = \bar{N}_{n - \widetilde{n}_{i, \rm tot} + 1} - \bar{N}_{n - \widetilde{n}_{i-1, \rm tot} + 1}$ (pulls per surviving arm in oracle stage $i$), and $\widetilde{T}_i^\star = \lambda\left((n - \widetilde{n}_{i-1, \rm tot})\Delta\widetilde{N}_{i}\right)$ (time for oracle stage $i$).
Then,
\begingroup
\allowdisplaybreaks
\begin{align*}
    \widetilde{T}^\star &= \Tcal_2\left(\{\bar{N}_i\}_{i=2}^n \}  \right)\\
    &= \sum_{i=1}^{\widetilde{r}} \lambda\left((n - \widetilde{n}_{i-1, \rm tot})(\bar{N}_{n - \widetilde{n}_{i, \rm tot} + 1} - \bar{N}_{n - \widetilde{n}_{i-1, \rm tot} + 1})\right)\\
    &= \sum_{i=1}^{\widetilde{r}} \lambda\left((n - \widetilde{n}_{i-1, \rm tot})\Delta\widetilde{N}_{i}\right)\\
    &= \sum_{i=1}^{\widetilde{r}} \widetilde{T}_i^\star
\end{align*}
\endgroup
Now,
assign each pull in oracle stage $1$ a unique index from $\{1,\ldots,\Delta\widetilde{N}_1\}$. For
each pull in oracle stage $2$, assign a unique index from
$\{\Delta\widetilde{N}_1+1,\ldots\Delta,\widetilde{N}_2\}$. Assign indices for the remaining pulls
accordingly. Now, define the function $\widetilde{R}$ that outputs which oracle stage a pull index corresponds
to. That is,
\begin{align*}
    \widetilde{R}(q) &=
    \begin{cases}
        \max_{i \in [\widetilde{r}]} i \text{ s.t. } q \leq \bar{N}_{n - \widetilde{n}_{i, \rm tot} + 1} &\mbox{if } q \leq \bar{N}_2 \\
        \widetilde{r} + 1 & \mbox{if } q > \bar{N}_2
\end{cases}
\end{align*}
In general, symbols with a tilde are associated with the oracle algorithm in this proof.

Define stage $1$ of Algorithm~\ref{alg:fixed_confidence} (and not the oracle algorithm) to be the rounds of the algorithm up to and including the round that results in the first arm elimination. Define stage $2$ to be from the next round until the next eliminations unless multiple arms were eliminated at the same time as the first elimination. If the second arm was eliminated at the same time as the first arm, define a dummy stage $2$ that takes $0$ rounds. Define additional dummy stages for each additional arm that was eliminated in the first elimination round. Define stage $i$ similarly. Observe that there are always $n-1$ stages now, so during stage $i$, $n - i + 1$ arms remain.

Let $\ell_i$ be the number of rounds in stage $i$ and let $\bar{\ell}_i = \sum_{j=1}^i \ell_j$ (rounds in first $j$ stages). Similarly, let $\bar{q}_j = \sum_{i=1}^j q_i$ (pulls per surviving arm after round $j$) and $\bar{t}_j = \sum_{i=1}^j t_i$ (sum of time allocations in first $j$ rounds) where $t_i = \beta^{i-1}t_1$. Define $T_i = \sum_{j=\bar{\ell}_{i-1} + 1}^{\bar{\ell}_i} \lambda(|S_j|q_j) \leq \sum_{j=\bar{\ell}_{i-1} + 1}^{\bar{\ell}_i} t_j$ (time for stage $i$).

Let us discuss a few easily-verified facts about the algorithm before proceeding.

\insertprethmspacing
\begin{fact}\label{lemma:time_geom}
Suppose Algorithm~\ref{alg:fixed_confidence} takes $l$ rounds. Then its runtime is at most $\frac{\beta^{l}}{\beta - 1}t_1$.
\end{fact}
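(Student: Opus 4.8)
The plan is to bound the runtime round-by-round and then sum a geometric series. The key observation is that although the algorithm's actual per-round execution time is $\lambda(|S_r|q_r)$, this quantity is, by construction, never larger than the time \emph{budget} allocated to round $r$, which is $t_r = \beta^{r-1}t_1$. Once this per-round bound is established, the total runtime is at most $\sum_{r=1}^{l} t_r$, a geometric sum that evaluates to $t_1(\beta^{l}-1)/(\beta-1)$, which is in turn at most $t_1\beta^{l}/(\beta-1)$.

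First I would verify the per-round bound $\lambda(|S_r|q_r)\le t_r$ for every $r\in\{1,\dots,l\}$. For the base case $r=1$, we have $q_1=1$ and $S_1=[n]$, so the round takes $\lambda(|S_1|q_1)=\lambda(n)=t_1=\beta^{0}t_1$, with equality. For $r\ge 2$, the update rule sets $q_{r}=\lfloor \lambda^{-1}(\beta^{r-1}t_1)/|S_{r}|\rfloor$, so that $|S_r|q_r\le \lambda^{-1}(\beta^{r-1}t_1)$; applying the increasing function $\lambda$ to both sides (recall $\lambda$ is increasing with ${\rm Range}(\lambda)=\RR_+$, so $\lambda^{-1}$ is well defined) gives $\lambda(|S_r|q_r)\le \beta^{r-1}t_1=t_r$, as desired. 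The floor is what makes this an inequality rather than an equality; it only helps us here, since it can only reduce the number of pulls and hence the time.

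Summing over the $l$ rounds then yields
\begin{align*}
\sum_{r=1}^{l}\lambda(|S_r|q_r)\;\le\;\sum_{r=1}^{l}\beta^{r-1}t_1
\;=\;t_1\,\frac{\beta^{l}-1}{\beta-1}\;\le\;\frac{\beta^{l}}{\beta-1}\,t_1,
\end{align*}
where the last inequality simply drops the $-1$ in the numerator (valid since $\beta>1$). This completes the argument.

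There is no substantial obstacle in this fact; it is essentially bookkeeping. The only points requiring care are the index alignment between the update rule (which defines $q_{r+1}$ from the budget $\beta^{r}t_1$) and the round it governs, and the handling of the base round $r=1$ where $q_1$ is set directly rather than through the floor update. Everything else is the standard geometric-series bound, mirroring the usual doubling-trick accounting, except applied on the time scale rather than on the number of pulls.
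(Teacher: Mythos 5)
Your proof is correct and is exactly the argument the paper intends: the per-round bound $\lambda(|S_r|q_r)\le t_r=\beta^{r-1}t_1$ (which the paper records separately as Fact~\ref{lemma:time_facts}) followed by summing the geometric series $\sum_{r=1}^{l}\beta^{r-1}t_1 \le \frac{\beta^{l}}{\beta-1}t_1$. Your handling of the base round $r=1$ and the index shift in the update rule for $q_{r+1}$ matches the algorithm as written, so there is nothing to add.
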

The above property suggests that adding an additional round to the algorithm will increase its runtime by a factor of $\beta$.

\insertprethmspacing
\begin{fact}\label{lemma:time_facts}
$\lambda(q_j|S_j|) = \lambda\left(\left\lfloor\frac{\lambda^{-1}(t_j)}{|S_j|}\right\rfloor|S_j|\right) \leq t_j = \beta^{j-1}t_1 \leq \lambda(2q_{j}|S_j|) \leq 2\lambda(q_{j}|S_j|)$
\end{fact}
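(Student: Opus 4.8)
The plan is to establish the four-term chain by peeling off each inequality in turn, taking as given from Algorithm~\ref{alg:fixed_confidence} the definition $q_j = \lfloor \lambda^{-1}(t_j)/|S_j| \rfloor$ and $t_j = \beta^{j-1} t_1$ with $t_1 = \lambda(n)$. The leftmost equality is purely definitional: substituting $q_j = \lfloor \lambda^{-1}(t_j)/|S_j| \rfloor$ into the product $q_j |S_j|$ reproduces $\lfloor \lambda^{-1}(t_j)/|S_j| \rfloor |S_j|$, so nothing needs to be argued there.

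For the first inequality $\lambda(q_j |S_j|) \leq t_j$, I would use that the floor can only decrease its argument: $q_j |S_j| = \lfloor \lambda^{-1}(t_j)/|S_j| \rfloor |S_j| \leq \lambda^{-1}(t_j)$. Since $\lambda$ is increasing (Section~\ref{sec:environment}), applying $\lambda$ to both sides and using $\lambda(\lambda^{-1}(t_j)) = t_j$ delivers the bound. The identity $t_j = \beta^{j-1} t_1$ in the middle of the chain is again just the definition of the round's time budget.

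The middle inequality $t_j \leq \lambda(2 q_j |S_j|)$ is the crux. Writing $M = \lambda^{-1}(t_j)$, the floor guarantees $(q_j + 1)|S_j| > M$, so by monotonicity $\lambda((q_j+1)|S_j|) > \lambda(M) = t_j$. It then suffices to show $(q_j + 1)|S_j| \leq 2 q_j |S_j|$, i.e.\ $q_j \geq 1$. This is where I would invoke the geometric time schedule: since $\beta>1$ and $j\geq 1$, we have $t_j = \beta^{j-1} t_1 \geq t_1 = \lambda(n)$, hence $M = \lambda^{-1}(t_j) \geq n \geq |S_j|$, so $M/|S_j| \geq 1$ and therefore $q_j = \lfloor M/|S_j| \rfloor \geq 1$. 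With $q_j \geq 1$ we get $q_j + 1 \leq 2 q_j$, and monotonicity closes the inequality.

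Finally, the rightmost inequality $\lambda(2 q_j |S_j|) \leq 2 \lambda(q_j |S_j|)$ is an immediate application of Lemma~\ref{lemma:scaling_factor} with scale factor $\alpha = 2 \geq 1$ and $m = q_j |S_j|$. The main obstacle is the middle step, and specifically the observation that the floor in the definition of $q_j$ never rounds down to zero---this is ensured by the fact that the time budget in every round is at least $\lambda(n)$, so at least one pull per surviving arm is always affordable. Once $q_j \geq 1$ is secured, the concavity bound of Lemma~\ref{lemma:scaling_factor} controls the loss incurred by doubling, and the remaining inequalities are direct consequences of monotonicity and the floor bound.
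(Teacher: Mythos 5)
Your proof is correct and follows essentially the same route as the paper, which states this as a fact and justifies only the key step $t_j \leq \lambda(2q_j|S_j|)$ by noting that $q_j \geq 1$ for all $j \geq 1$. Your write-up additionally supplies the argument for why $q_j \geq 1$ (namely $t_j \geq t_1 = \lambda(n)$ so $\lambda^{-1}(t_j) \geq n \geq |S_j|$), which the paper asserts without proof, and correctly closes the chain with Lemma~\ref{lemma:scaling_factor}.
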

The second to last inequality follows because $q_j \geq 1$ for all $j \geq 1$.

Stage $i$ could fall into one of $5$ cases. We will first handle $3$ cases which are easier to
analyze, and then turn to two more challenging cases.

\textbf{Case 1:}
Suppose $\ell_i \geq 2$. Define $\widetilde{s}_0 = \widetilde{R}(\bar{q}_{\bar{\ell}_{i} - 2} + 1)$ and $\widetilde{s}_f = \widetilde{R}(\bar{q}_{\bar{\ell}_{i}-1})$.
This means that stage $i$ took at least $2$ rounds and the pull indices in its second to last round of stage $i$ fall between the oracle algorithm's stage $\widetilde{s}_0$ and $\widetilde{s}_f$, respectively.
Because the pull indices of round $\bar{\ell}_{i} - 1$ fall within the oracle algorithm's stages $\widetilde{s}_0,\ldots,\widetilde{s}_f$, the oracle algorithm must have more pulls per arm across these stages than round $\bar{\ell}_{i} - 1$ of Algorithm~\ref{alg:fixed_confidence} does. That is,
\begin{align*}
    q_{\bar{\ell}_{i} - 1}&\leq  \sum^{\widetilde{s}_f}_{k=\widetilde{s}_0} \Delta\widetilde{N}_{k}
\end{align*}
This implies that:
\begin{align*}
    (n - i + 1)q_{\bar{\ell}_{i}-1} &\leq  \sum^{\widetilde{s}_f}_{k=\widetilde{s}_0} (n - i + 1) \Delta\widetilde{N}_{k}\\
    &\leq  \sum^{\widetilde{s}_f}_{k=\widetilde{s}_0} (n - \widetilde{n}_{k-1,\rm tot}) \Delta\widetilde{N}_{k}\label{eq:case1pullbound}\numberthis
\end{align*}
The second inequality uses the fact that $\widetilde{n}_{\widetilde{s}_0 -1 , \rm tot} \leq \widetilde{n}_{\widetilde{s}_f -1 , \rm tot}\leq i - 1$. Each surviving arm after round $\bar{q}_{\bar{\ell}_i - 1}$ of Algorithm~\ref{alg:fixed_confidence} has been pulled at least $\bar{N}_{n - \widetilde{n}_{\widetilde{s}_f - 1} + 1}$ times (because $\widetilde{s}_f = \widetilde{R}(\bar{q}_{\bar{\ell}_{i}-1})$), guaranteeing that at least $\widetilde{n}_{\widetilde{s}_f-1}$ arms were eliminated after this round (Lemma~\ref{lemma:runtime}).

Let us now use these facts to bound the runtime of the first $i$ stages of Algorithm~\ref{alg:fixed_confidence}.
\begingroup
\allowdisplaybreaks
\begin{align*}
    \sum_{k=1}^i T_k &\leq \sum_{r=1}^{\bar{\ell}_i} t_r\\
    &= \sum_{r=1}^{\bar{\ell}_i} \beta^{r - 1}t_{1}\\
     &\leq \frac{\beta^2}{\beta - 1}\beta^{\bar{\ell}_i - 2}t_{\bar{\ell}_{1}}\\
     &\leq 2\frac{\beta^2}{\beta - 1}\lambda((n - i + 1)q_{\bar{\ell}_{i}-1})\\
    &\leq 2\frac{\beta^2}{\beta - 1}\lambda \left(\sum^{\widetilde{s}_f}_{k=\widetilde{s}_0} (n - \widetilde{n}_{k-1,\rm tot}) \Delta\widetilde{N}_{k}\right)\\
    &\leq 2\frac{\beta^2}{\beta - 1}\sum^{\widetilde{s}_f}_{k=\widetilde{s}_0} \lambda((n - \widetilde{n}_{k-1,\rm tot}) \Delta\widetilde{N}_{k})\\
    &= 2\frac{\beta^2}{\beta - 1} \sum^{\widetilde{s}_f}_{k=\widetilde{s}_0} \widetilde{T}_k^\star
\end{align*}
\endgroup
The first inequality uses Fact~\ref{lemma:time_facts}. The second inequality uses Fact~\ref{lemma:time_geom}. The third inequality again uses Fact~\ref{lemma:time_facts}. The fourth inequality uses Inequality~\ref{eq:case1pullbound}, and the fifth inequality applies Lemma~\ref{lemma:splitting}. The last equality uses the definition of $\widetilde{T}^\star_k$.

\textbf{Case 2:}
Suppose $\ell_i = 1$ and suppose that $\widetilde{R}(\bar{q}_{\bar{\ell}_{i-1}} + 1) = \widetilde{R}(\bar{q}_{\bar{\ell}_{i}}) = \widetilde{s}$. That is, stage $i$ takes a single round and has pull indices that are completely contained in oracle stage $k$. By almost the same argument as above:

\begin{align*}
    q_{\bar{\ell}_{i}}&\leq  \Delta\widetilde{N}_{\widetilde{s}}
\end{align*}
As before, this implies
\begin{align*}
    (n - i + 1)q_{\bar{\ell}_{i}} &\leq  (n - i + 1) \Delta\widetilde{N}_{\widetilde{s}}\\
    &\leq  (n - \widetilde{n}_{\widetilde{s}-1,\rm tot}) \Delta\widetilde{N}_{\widetilde{s}}\label{eq:case2pullbound}\numberthis
\end{align*}
The second inequality uses the fact that $\widetilde{n}_{\widetilde{s}-1, \rm tot}\leq i - 1$. Each surviving arm after round $\bar{q}_{\bar{\ell}_1 - 1}$ of Algorithm~\ref{alg:fixed_confidence} has been pulled at least $\bar{N}_{n - \widetilde{n}_{\widetilde{s} - 1} + 1}$ times (because $\widetilde{s} = \widetilde{R}(\bar{q}_{\bar{\ell}_{i}-1} + 1)$), guaranteeing that at least $\widetilde{n}_{\widetilde{s}-1}$ arms were eliminated after this round (Lemma~\ref{lemma:runtime}). Note that this implies that overshooting did not occur. Again, as before

\begin{align*}
    \sum_{k=1}^i T_k &\leq \sum_{r=1}^{\bar{\ell}_i} t_r\\
    &= \sum_{r=1}^{\bar{\ell}_i} \beta^{r - 1}t_{1}\\
     &\leq \frac{\beta}{\beta - 1}\beta^{\bar{\ell}_i - 1}t_{\bar{\ell}_{1}}\\
     &\leq 2\frac{\beta}{\beta - 1}\lambda((n - i + 1)q_{\bar{\ell}_{i}})\\
    &\leq 2\frac{\beta}{\beta - 1}\lambda \left((n - \widetilde{n}_{k-1,\rm tot}) \Delta\widetilde{N}_{\widetilde{s}}\right)\\
    &= 2\frac{\beta}{\beta - 1} \widetilde{T}_{\widetilde{s}}^\star
\end{align*}

This result is missing a factor of $\beta$ compared to Case 1, since in the prior case we used the second to last round in stage $i$ to achieve the bound on the first $i$ stages, whereas here we can use the last (and only) round.

\textbf{Case 3:}
Suppose $\ell_i = 0$. This means that the $i$-th elimination occurred in the same round as the $i-1$-th elimination. The time taken for these stages is $0$.


\textbf{Discussion of Cases 1-3:}
Let stage $i_f$ be the last stage of Algorithm~\ref{alg:fixed_confidence} that falls into either cases 1 or 2 above. Then, stages $1,\ldots,i_f$ take at most $2\frac{\beta^2}{\beta - 1}\widetilde{T}^\star$ time. The remaining stages all consist of a single round, and they may potentially pull arms excessively due to overshooting (excessively pulling arms that would have been eliminated after a few pulls in the round). We must now argue that overshooting does not result in a significant increase in runtime. Observe that naively bounding the up to $n-1$ remaining stages by a factor of $\beta^{n-1}$ results in a very expensive runtime. However, we will use a more aggressive analysis by defining events where excessive overshooting occurs, and bounding the number of these events.

We will discuss $2$ cases for the remaining stages now. If some stage $i \in [i_f]$ falls in cases 4 and 5, we already have a runtime bound for the first $i_f$ cases and therefore do not need to bound the runtime of these stages.


Let $f = \beta^{-\sqrt{\log_\beta(n)}}$. Ignore the very last stage for now, which can only add a factor of $\beta$ to the total runtime. Each of the remaining stages fall into one of two cases:

\textbf{Case 4:}
Define $\widetilde{s}_0 = \widetilde{R}(\bar{q}_{\bar{\ell}_{i - 1}} + 1)$ and $\widetilde{s}_f = \widetilde{R}(\bar{q}_{\bar{\ell}_{i}})$. Suppose stage $i$ eliminates at least $1 - f$ fraction of the surviving arms, $\ell_i = 1$, and $\widetilde{s}_0 < \widetilde{s}_f$. That is, stage $i$ of Algorithm~\ref{alg:fixed_confidence} takes a single round, $S_{\bar{\ell}_{i} + 1} \leq f S_{\bar{\ell}_{i}}$, and has pull indices that span multiple oracle stages. This can occur at most $\left\lceil\sqrt{\log_\beta(n)}\right\rceil \leq 2\sqrt{\log_\beta(n)}$ times, because $\beta \leq n$. Each one of these occurrences adds a factor of $\beta$ to the runtime of the stages prior to it. The worst increase occurs if this case happens at the very last stages. So, even if this case occurred the maximum number of times after all other stages, this would add a factor of $\beta^{2\sqrt{\log_\beta(n)}}$ to the runtime by Fact~\ref{lemma:time_geom}.

\textbf{Case 5:}
Define $\widetilde{s}_0 = \widetilde{R}(\bar{q}_{\bar{\ell}_{i - 1}} + 1)$ and $\widetilde{s}_f = \widetilde{R}(\bar{q}_{\bar{\ell}_{i}})$. Suppose stage $i$ eliminates less than $1 - f$ fraction of the arms, $\ell_i = 1$ and $\widetilde{s}_0 < \widetilde{s}_f$. That is, stage $i$ of Algorithm~\ref{alg:fixed_confidence} takes a single round, $S_{\bar{\ell}_{i} + 1} > f S_{\bar{\ell}_{i}}$, and has pull indices that span multiple oracle stages (which could mean overshooting). This can occur up to $n - 1$ times, and at the end of stage $i$, at least $f(n - i + 1) \leq (n - \widetilde{n}_{y -1 , \rm tot}) \leq (n - \widetilde{n}_{x -1 , \rm tot}) $ arms remain (Lemma~\ref{lemma:runtime}). Therefore,

\begin{align*}
    (n - i + 1)q_{\bar{\ell}_{i}} &\leq  \sum^{y}_{k=x} (n - i + 1) \Delta\widetilde{N}_{k}\numberthis\label{eq:case4pullbound}
\end{align*}
We can use this fact as before to show the following:
\begin{align*}
    T_i &= \lambda((n - i + 1)q_{\bar{\ell}_i})\\
    &\leq \sum^{\widetilde{s}_f}_{k=\widetilde{s}_0} \lambda((n - i + 1) \Delta\widetilde{N}_{k})\\
    &\leq \sum^{\widetilde{s}_f}_{k=\widetilde{s}_0} \lambda\left(\frac{1}{f}(n - \widetilde{n}_{k-1,\rm tot}) \Delta\widetilde{N}_{k}\right)\\
    &\leq \frac{1}{f}\sum^{\widetilde{s}_f}_{k=\widetilde{s}_0} \lambda\left((n - \widetilde{n}_{k-1,\rm tot}) \Delta\widetilde{N}_{k}\right)\\
    &\leq \frac{1}{f}\sum^{\widetilde{s}_f}_{k=\widetilde{s}_0} \widetilde{T}_k^\star
\end{align*}
The first inequality follows by applying Inequality~\ref{eq:case4pullbound} followed by Lemma~\ref{lemma:splitting}. The third inequality applies Lemma~\ref{lemma:scaling_factor}, since $\beta \leq n \implies 1/f \geq 1$.


While this case can happen up to $n-1$ times, each stage in the oracle algorithm can only be covered by at most $2$ such stages from Algorithm~\ref{alg:fixed_confidence}, because each stage in this case has pull indices intersecting with at least $2$ oracle stages. So, even if all stages fell in this case, this would add $\frac{2}{f}\widetilde{T}^\star = 2\beta^{2\sqrt{\log_\beta(n)}}\widetilde{T}^\star$ to the runtime.


\textbf{Putting together the pieces:}
The runtime up to stage $i_f$ is at most $2\frac{\beta^2}{\beta - 1}\widetilde{T}^\star$. Case $4$ can add up to a factor of $\beta^{2\sqrt{\log_\beta(n)}}$ to the runtime and case $5$ can add up to $2\beta^{2\sqrt{\log_\beta(n)}}\widetilde{T}^\star$ to the runtime. It is possible that cases 4 and 5 will interchange. Since $\frac{1}{f}\geq 1$ and $\beta \geq 1$, the sequence with the worst runtime has Case 5 occurring maximally before Case 4 occurs. The final stage, which we ignored earlier, can add an additional factor of $\beta$. So,
\begin{align*}
    T &\leq 2\left(\frac{\beta^2}{\beta - 1} + 2\beta^{2\sqrt{\log_\beta(n)}}\right)\beta^{2\sqrt{\log_\beta(n)}}\beta \widetilde{T}^\star\\
    &\leq 2\left(\frac{\beta^2}{\beta - 1} + 2\right)\beta^{4\sqrt{\log_\beta(n)}}\beta \widetilde{T}^\star\\
    &\leq 4\frac{\beta^3}{\beta - 1}\beta^{4\sqrt{\log_\beta(n)}}\widetilde{T}^\star
\end{align*}

This completes the proof of the first claim of the theorem. Now, let us relate $\widetilde{T}^\star$
to $T^\star$ whose proof is given below.

\insertprethmspacing
\begin{lemma}\label{lemma:dynamic_prog_inequality}
$\widetilde{T}^\star = \Tcal_2\left(\{\bar{N}_i\}_{i=2}^n \}  \right) \leq (128\log((2/\omega)\log_2(192\Delta_{2}^{-2}/\omega)\lor 2))T^\star$
\end{lemma}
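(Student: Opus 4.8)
The plan is to reduce the claim to two structural properties of the dynamic program in~\eqref{eqn:dp} together with a simple pointwise comparison between $\bar{N}_i$ and $\Delta_i^{-2}$. Writing $L := \log\big((2/\omega)\log_2(192\Delta_2^{-2}/\omega)\big)$ and $C := 128\,(L \lor \tfrac{1}{64})$, I will establish the chain
\[
\Tcal_2(\{\bar{N}_i\}_{i=2}^n) \;\le\; \Tcal_2(\{C\Delta_i^{-2}\}_{i=2}^n) \;\le\; C\,\Tcal_2(\{\Delta_i^{-2}\}_{i=2}^n) \;=\; C\,\Tstar,
\]
where the first inequality is monotonicity of the DP under a pointwise increase of its inputs, and the second is a scaling property valid because $C \ge 1$. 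Since $C$ is exactly the constant in the statement (note $128(L\lor\tfrac{1}{64}) = 128L \lor 2$), this yields the lemma.

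First I would prove the pointwise bound $\bar{N}_i \le C\Delta_i^{-2}$ for every $i \ge 2$. Because $\mu_j \in [0,1]$ we have $\Delta_i \in (0,1]$, hence $\Delta_i^{-2} \ge 1$; and because the arms are ordered, $\Delta_2 \le \Delta_i$, so $\Delta_i^{-2} \le \Delta_2^{-2}$ and the logarithmic factor in $\bar{N}_i$ is maximized at $i=2$. Dropping the floor gives $\bar{N}_i \le 1 + 64\Delta_i^{-2}L \le \Delta_i^{-2}(1 + 64L)$, using $1 \le \Delta_i^{-2}$ in the last step; a short case split on whether $L \ge \tfrac{1}{64}$ then shows $1 + 64L \le 128(L\lor\tfrac{1}{64}) = C$. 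This step is routine arithmetic.

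Next, the scaling property: for $C \ge 1$, $\Tcal_j(\{Cz_i\}) \le C\,\Tcal_j(\{z_i\})$. This follows by downward induction on $j$ from~\eqref{eqn:dp}: in the $k$-th term $\lambda\big(Ck(z_j - z_{k+1})\big) \le C\lambda\big(k(z_j - z_{k+1})\big)$ by Lemma~\ref{lemma:scaling_factor} (with $\alpha = C \ge 1$), while the tail obeys $\Tcal_{k+1}(\{Cz_i\}) \le C\,\Tcal_{k+1}(\{z_i\})$ by the inductive hypothesis; since both summands are bounded by $C$ times their unscaled counterparts for every $k$, so is the minimum.

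The main obstacle is monotonicity, $z \le z' \Rightarrow \Tcal_2(\{z_i\}) \le \Tcal_2(\{z_i'\})$. A naive induction that reuses the optimal index of $\Tcal_2(\{z_i'\})$ fails, because the DP cost depends on the \emph{differences} $z_j - z_{k+1}$, which need not be monotone in $z$. I would instead reinterpret the DP in its equivalent ``height'' form: a feasible schedule observes at cumulative-pull thresholds $0 = H_0 < \cdots < H_M = z_2$ and incurs cost $\sum_t \lambda(n_t(H_t - H_{t-1}))$, where $n_t = 1 + |\{i \ge 2 : z_i > H_{t-1}\}|$ is the number of arms still alive on that increment; subadditivity (Lemma~\ref{lemma:splitting}) shows that merging increments across which no arm is eliminated never increases cost, so restricting the thresholds to the values $\{z_i\}$ recovers exactly~\eqref{eqn:dp}, and conversely any threshold schedule reaching $z_2$ upper bounds $\Tcal_2(\{z_i\})$. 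Given this form, I take an optimal schedule for $z'$ and evaluate the \emph{same} numerical thresholds under $z$: because $z \le z'$ pointwise, at every height fewer arms are alive ($n_t^{(z)} \le n_t'$) and the process terminates no later (at $z_2 \le z_2'$), so each batch $n_t^{(z)}(H_t - H_{t-1})$ is no larger than $n_t'(H_t - H_{t-1})$ and monotonicity of $\lambda$ gives $\Tcal_2(\{z_i\}) \le \Tcal_2(\{z_i'\})$. Combining the pointwise bound with monotonicity and then scaling completes the proof.
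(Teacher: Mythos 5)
Your overall route is genuinely different from the paper's, and it is worth saying how. The paper never proves (or needs) monotonicity of the DP: it writes $\Tcal_2$ as a minimum over elimination schedules and, \emph{for each fixed schedule}, bounds the cost under inputs $\{\bar{N}_i\}$ by a constant times the cost of the \emph{same} schedule under $\{\Delta_i^{-2}\}$ --- peeling off $\lambda(n)$ with Lemma~\ref{lemma:splitting}, replacing each log factor by the largest one $\log((2/\omega)\log_2(192\Delta_2^{-2}/\omega))$, and pulling the resulting constant out of $\lambda$ via Lemma~\ref{lemma:scaling_factor} --- then takes the minimum over schedules. Your first two ingredients, the pointwise bound $\bar{N}_i \le C\Delta_i^{-2}$ (using $\Delta_i^{-2}\ge 1$ and $\Delta_2\le\Delta_i$) and the scaling property $\Tcal_j(\{Cz_i\}) \le C\,\Tcal_j(\{z_i\})$ for $C\ge 1$ (downward induction with Lemma~\ref{lemma:scaling_factor}), are correct and cleanly argued, and your observation that per-schedule monotonicity fails (so a naive induction cannot work) is exactly right.

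However, the monotonicity step itself has a genuine gap. The claim you need --- that \emph{any} threshold schedule reaching $z_2$ has cost at least $\Tcal_2(\{z_i\})$ --- is not delivered by the merging argument. Subadditivity only lets you merge consecutive segments across which \emph{no} arm is eliminated; it cannot move a threshold onto the grid $\{z_i\}$ when an arm dies strictly inside a segment. Concretely, with $z=(z_2,z_3)=(5,3)$ and thresholds $(0,4,5)$, the two segments have alive-counts $3$ and $2$, so no merge applies, yet the threshold $4$ is not a grid value; and this is precisely the situation your transport argument produces, since the transported thresholds are values of $z'$, not of $z$. The claim is nevertheless true, but it needs an extra idea, e.g.: fix which inter-$z$ interval each threshold $H_t$ lies in; on the resulting polytope the cost $\sum_t \lambda\bigl(n_t(H_t-H_{t-1})\bigr)$ is a concave function of $(H_1,\dots,H_{M-1})$ (a concave increasing $\lambda$ composed with linear maps, summed), so its minimum is attained at an extreme point, and at an extreme point every threshold coincides with a grid value (chains of equal coordinates must terminate at $0$, at $z_2$, or at an interval endpoint); on-grid schedules reduce to DP schedules after deleting zero-length segments and merging equal-count ones. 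With such a lemma inserted, your chain $\Tcal_2(\{\bar{N}_i\}) \le \Tcal_2(\{C\Delta_i^{-2}\}) \le C\,\Tcal_2(\{\Delta_i^{-2}\}) = C\,\Tstar$ goes through; as written, the pivotal inequality is asserted rather than proven.
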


By applying Lemma~\ref{lemma:dynamic_prog_inequality}, we have that
\begin{align*}
    T &\leq 4\frac{\beta^3}{\beta - 1}\beta^{4\sqrt{\log_\beta(n)}}\widetilde{T}^\star\\
    &\leq 4\frac{\beta^{3+4\sqrt{\log_\beta(n)}}}{\beta - 1} (128\log((2/\omega)\log_2(192\Delta_{2}^{-2}/\omega))\lor 2)T^\star\\
    &\leq 512\frac{\beta^{3+4\sqrt{\log_\beta(n)}}}{\beta - 1} \left(\log((2/\omega)\log_2(192\Delta_{2}^{-2}/\omega))\lor 1/64\right)T^\star
\end{align*}
\qed

\textbf{Proof of Lemma~\ref{lemma:dynamic_prog_inequality}:}

In this proof alone, we will abuse notation by letting $\widetilde{r}$ and $\widetilde{n}_i$ be a feasible point for the dynamic program minimization problem and not an optimal solution that achieves $\widetilde{T}^\star$ runtime. By first applying the definition of $\widetilde{T}^\star$, we have
\begingroup
\allowdisplaybreaks
\begin{align*}
    \widetilde{T}^\star &= \min_{\widetilde{r}, \widetilde{n}_i}\sum_{i=1}^{\widetilde{r}} \lambda\left((n - \widetilde{n}_{i-1, \rm tot})(\bar{N}_{n - \widetilde{n}_{i, \rm tot} + 1} - \bar{N}_{n - \widetilde{n}_{i-1, \rm tot} + 1})\right)\\
    &= \min_{\widetilde{r}, \widetilde{n}_i}\lambda\left(n(\bar{N}_{n - \widetilde{n}_{1, \rm tot} + 1})\right) + \sum_{i=2}^{\widetilde{r}} \lambda\left((n - \widetilde{n}_{i-1, \rm tot})(\bar{N}_{n - \widetilde{n}_{i, \rm tot} + 1} - \bar{N}_{n - \widetilde{n}_{i-1, \rm tot} + 1})\right)\\
    &= \min_{\widetilde{r}, \widetilde{n}_i}\lambda\left(n \left(1 + 64\Delta_{n - \widetilde{n}_{1, \rm tot} + 1}^{-2}\log((2/\omega)\log_2(192\Delta_{n - \widetilde{n}_{1, \rm tot} + 1}^{-2}/\omega))\right)\right)\\
    &+ \sum_{i=2}^{\widetilde{r}} \lambda((n - \widetilde{n}_{i-1, \rm tot})(
    64\Delta_{n - \widetilde{n}_{i, \rm tot} + 1}^{-2}\log((2/\omega)\log_2(192\Delta_{n - \widetilde{n}_{i, \rm tot} + 1}^{-2}/\omega))\\
    &- 64\Delta_{n - \widetilde{n}_{i-1, \rm tot} + 1}^{-2}
    \log((2/\omega)\log_2(192\Delta_{n - \widetilde{n}_{i-1, \rm tot} + 1}^{-2}/\omega))))\\
    &\leq \lambda\left(n\right) + \min_{\widetilde{r}, \widetilde{n}_i}\lambda\left(64n\Delta_{n - \widetilde{n}_{1, \rm tot} + 1}^{-2}\log((2/\omega)\log_2(192\Delta_{2}^{-2}/\omega))\right)\\
    &+ \sum_{i=2}^{\widetilde{r}} \lambda(
    64\log((2/\omega)\log_2(192\Delta_{2}^{-2}/\omega))(n - \widetilde{n}_{i-1, \rm tot})(\Delta_{n - \widetilde{n}_{i, \rm tot} + 1}^{-2} - \Delta_{n - \widetilde{n}_{i-1, \rm tot} + 1}^{-2}))\\
    &= \lambda\left(n\right) + \min_{\widetilde{r}, \widetilde{n}_i}\sum_{i=1}^{\widetilde{r}} \lambda(
    64\log((2/\omega)\log_2(192\Delta_{2}^{-2}/\omega))(n - \widetilde{n}_{i-1, \rm tot})(\Delta_{n - \widetilde{n}_{i, \rm tot} + 1}^{-2} - \Delta_{n - \widetilde{n}_{i-1, \rm tot} + 1}^{-2}))\\
    &\leq \lambda\left(n\right) + (64\log((2/\omega)\log_2(192\Delta_{2}^{-2}/\omega))\lor 1)\min_{\widetilde{r}, \widetilde{n}_i}\sum_{i=1}^{\widetilde{r}} \lambda(
    (n - \widetilde{n}_{i-1, \rm tot})(\Delta_{n - \widetilde{n}_{i, \rm tot} + 1}^{-2} - \Delta_{n - \widetilde{n}_{i-1, \rm tot} + 1}^{-2}))\\
    &= \lambda(n) + (64\log((2/\omega)\log_2(192\Delta_{2}^{-2}/\omega))\lor 1) T^\star\\
    &\leq (1 + (64\log((2/\omega)\log_2(192\Delta_{2}^{-2}/\omega))\lor 1))T^\star\\
    &\leq (128\log((2/\omega)\log_2(192\Delta_{2}^{-2}/\omega))\lor 2)T^\star
\end{align*}
\endgroup
The second equation uses $\bar{N}_{n+1} = 0$ and $\widetilde{n}_0 = 0$. The third equation plugs in $\bar{N}_i$. The fourth equation applies Lemma~\ref{lemma:splitting} to the first term before pulling it out of the optimization problem and uses $\Delta_2 \leq \Delta_i$ in the remaining terms. The fifth equation rearranges terms by placing the second term back in the sum, using $\widetilde{n}_0 = 0$. The sixth equation uses Lemma~\ref{lemma:scaling_factor}. The seventh equation uses the definition of $T^\star$. The eighth equation uses the fact that each arm must be pulled at least once before elimination, so $\lambda(n) \leq T^\star$. The final equation bounds the sum by a factor of 2.
\qed

\insertprethmspacing
\begin{fact}\label{fact:subpoly}
The factor $\beta^{4\sqrt{\log_\beta(n)}}$ grows slower than $n^\alpha$ for any $\alpha > 0$. To view this, observe that
\begingroup
\allowdisplaybreaks
\begin{align*}
    \lim_{n\rightarrow \infty} \frac{\beta^{4\sqrt{\log_\beta(n)}}}{n^\alpha} &= \beta^{\lim_{n\rightarrow \infty} \log_\beta \left(\frac{\beta^{4\sqrt{\log_\beta(n)}}}{n^\alpha}\right)}\\
    &= \beta^{\lim_{n\rightarrow \infty} 4\sqrt{\log_\beta(n)} - \alpha\log_\beta(n)}\\
    &= \beta^{\lim_{n\rightarrow \infty} \log_\beta(n)\left(\frac{4}{\sqrt{\log_\beta(n)}} - \alpha\right)}\\
    &= \beta^{-\infty}\\
    &= 0
\end{align*}
\endgroup
By a similar proof, it can be shown that the function grows faster than any polylogarithmic function.
\end{fact}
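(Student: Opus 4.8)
The plan is to show directly that $\lim_{n\to\infty} \beta^{4\sqrt{\log_\beta(n)}}/n^\alpha = 0$ for every fixed $\alpha > 0$, which certifies that the numerator grows slower than any polynomial in $n$. Since $\beta > 1$, the map $x \mapsto \beta^x$ is continuous and strictly increasing on $\RR$, so it suffices to analyze the exponent after rewriting the entire ratio as a single power of $\beta$; the limit may then be pushed inside the exponential, using the convention $\beta^{-\infty} = 0$, which is valid because $\beta^x \to 0$ as $x \to -\infty$ whenever $\beta > 1$.

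The first step is to express the denominator in the same base: $n^\alpha = \beta^{\alpha \log_\beta(n)}$. The ratio then collapses to the single power $\beta^{\,4\sqrt{\log_\beta(n)} - \alpha \log_\beta(n)}$, so the problem reduces to evaluating $\lim_{n\to\infty} \bigl(4\sqrt{\log_\beta(n)} - \alpha \log_\beta(n)\bigr)$. The second step is the key comparison of growth rates: I would factor $\log_\beta(n)$ out of this exponent to obtain $\log_\beta(n)\bigl(4/\sqrt{\log_\beta(n)} - \alpha\bigr)$. As $n\to\infty$ we have $\log_\beta(n)\to+\infty$, hence $4/\sqrt{\log_\beta(n)}\to 0$, so the parenthesized factor converges to $-\alpha$, which is strictly negative. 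A positive quantity diverging to $+\infty$ multiplied by a factor converging to $-\alpha < 0$ yields a product diverging to $-\infty$. Applying $\beta^{(\cdot)}$ with the stated convention gives the limit $0$, completing the argument.

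There is no genuine obstacle here: the statement is simply a routine comparison of $\sqrt{\log n}$ against $\log n$, the former being dominated by the latter. The only point requiring minor care is the justification for interchanging the limit with the exponential, which is secured by the continuity and monotonicity of $\beta^x$ for $\beta > 1$ together with the limiting behavior $\beta^x \to 0$ as $x \to -\infty$. An entirely analogous argument, replacing the polynomial $n^\alpha$ by any fixed power $(\log_\beta(n))^c$ of the logarithm and noting that $\sqrt{\log_\beta(n)}$ dominates $\log(\log_\beta(n))$, shows that the same factor grows faster than any polylogarithmic function, which establishes the closing remark of the statement.
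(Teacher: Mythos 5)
Your proposal is correct and follows essentially the same route as the paper: rewriting the ratio as a single power of $\beta$, factoring $\log_\beta(n)$ out of the exponent so that the bracketed factor tends to $-\alpha < 0$, and concluding via $\beta^{-\infty} = 0$. Your added justification for pushing the limit through $x \mapsto \beta^x$ (continuity, monotonicity, and the behavior as $x \to -\infty$ for $\beta > 1$) is a minor but welcome refinement over the paper's implicit use of the same fact.
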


\insertprethmspacing
\begin{remark}
APR can be easily modified for top-k arm identification as in~\cite{jun2016top} by eliminating arms when their UCB is below the k-th highest LCB. Arms are additionally thrown into an acceptance set (and not sampled anymore) if their LCB is higher than the $k+1$-th highest UCB. When $k$ arms have been accepted, the algorithm terminates. By redefining the $\Delta_i$ values in terms of the means of the $k$-th and $k+1$-th best arms, we can obtain an identical runtime bound with the same error probability.
\end{remark}

\section{Proof of the Lower Bound in the Fixed Confidence Setting}
\label{app:fclb}

In this section, we will prove the lower bound (Theorem~\ref{thm:fixed_confidence_lowerbound}) in
the fixed confidence setting.
Throughout this section, we will assume, without loss of generality, that the samples for each arm
$k\in[n]$ are generated upfront and each time we pull an arm, these observations are revealed in
order.

Our first result below shows that it is sufficient to consider algorithms which do not wait for
some time before executing arm pulls (thereby resulting in idle resources), unless it waits for
some arm pulls to complete at some point in the future (possibly to incorporate that information
before planning future pulls).
For this,
recall from Section~\ref{sec:lower_bound_fc} that $\Acaldelsf$ is the class of algorithms that
which can identify the best arm with probability at least $1-\delta$ for the given scaling
function $\scalefunc$ on any set of distributions $\theta\in\Theta$.
Let $\Acaltildedelsf$ denote the subset of algorithms in $\Acaldelsf$, where the algorithm
starts a new set of arm pulls at time $0$ or precisely at the same time
when a previous set of pulls are completed;
i.e for an algorithm in $\Acaltildedelsf$,
if $t_{s1} \leq t_{s2}\leq \dots$ denote the times at which new arm pulls are started,
and if $0<t_{e1} \leq t_{e2}\leq \dots$ denote the times at which previous pulls ended,
then, $t_{s1}=0$ and for all $i\geq 2$, $t_{si}=t_{ej}$ for some $t_{ej}$.
We have the following result.

\insertprethmspacing
\begin{lemma}
\label{lem:noidleresources}
Let $A\in\Acaldelsf$.
Then, there exists $\Atilde\in\Acaltildedelsf$ such that $\Atilde$ outputs the same arm as $A$
and $T(\Atilde,\theta) \leq T(A, \theta)$ a.s.
\end{lemma}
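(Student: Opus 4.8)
The plan is to construct $\Atilde$ from $A$ by a coupling that shifts every arm pull as early as possible without ever changing which pull is executed or what outcome it produces. The guiding principle is \emph{information monotonicity}: since (as assumed WLOG) the samples of each arm are fixed upfront and revealed in the order the arm is pulled, the outcome of a pull is determined the instant it completes, and any non-anticipating algorithm's action at time $t$ is measurable with respect to the outcomes of pulls that have completed strictly before $t$. Consequently, if $A$ initiates a set of pulls at a time $t_{si}$ that is neither $0$ nor a completion time, then the information available at $t_{si}$ equals the information available at the most recent event time $\tau<t_{si}$ (a completion, or $0$), so the identical decision could have been taken at $\tau$.

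First I would record the static feasibility fact that makes such a shift legal. Fix an event time $\tau$ and let $\tau'$ be the next completion in $A$'s schedule, so no pull completes in the open interval $(\tau,\tau')$. Let $P$ be the pulls $A$ has in progress at $\tau^+$ that were started at or before $\tau$; since nothing completes in $(\tau,\tau')$, each pull of $P$ remains in progress throughout $[\tau,\tau')$. Let $Q$ be the pulls $A$ starts in $(\tau,\tau')$; each of these also completes only at or after $\tau'$, else it would be a completion in $(\tau,\tau')$. Hence just before $\tau'$ the resource is carrying all of $P\cup Q$, so feasibility of $A$ forces their total fraction to be at most $1$. Therefore reissuing every pull of $Q$ at $\tau$ reproduces exactly this peak usage at $\tau^+$, again at most $1$; as pulls thereafter only complete and free resources, feasibility is preserved over the whole interval.

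Next I would turn this into a forward induction that rebuilds the schedule event by event to define $\Atilde$. Processing the pulls in the order $A$ issues them, and coupling the upfront samples, $\Atilde$ executes the identical pull (same arm, same resource fraction) but starts it at the earliest admissible event time in its own compressed timeline. The inductive invariant is that every pull completes in $\Atilde$ no later than in $A$; combined with information monotonicity this guarantees that whenever $A$ can issue its $k$-th pull, all pulls on whose outcomes that decision depends have already completed in $\Atilde$, so $\Atilde$ may issue the same pull at the corresponding $\Atilde$-completion time (or $0$), which by the feasibility fact is resource-admissible and, by construction, an event time. Because $\Atilde$ and $A$ draw the same pre-generated samples for each arm in the same order, they observe identical outcomes, and $\Atilde$ replicates $A$'s stopping rule and output; since every pull, hence the final one, completes no later, we get $T(\Atilde,\theta)\le T(A,\theta)$ a.s. with the same returned arm. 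As this holds for every $\theta\in\Theta$, $\Atilde$ inherits $A$'s $(1-\delta)$ correctness, so $\Atilde\in\Acaltildedelsf$.

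The main obstacle is the interaction between shifting and adaptivity: moving a pull earlier changes its completion time and thus the entire downstream event structure, so the construction cannot be a single one-shot reallocation but must proceed inductively while simultaneously maintaining the ``completes no later'' invariant and resource feasibility. A secondary, more technical point is to phrase $\Atilde$ as a genuinely non-anticipating policy, whose decision at each event is measurable with respect only to the outcomes revealed by then; the coupling resolves this, since $\Atilde$'s decision map is exactly $A$'s evaluated on an information set available weakly earlier. Note that the scaling function $\scalefunc$ and its concavity play no role here—this is purely a scheduling and feasibility statement—which is why the lemma holds for every admissible $\scalefunc$.
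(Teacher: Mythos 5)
Your overall strategy---couple the pre-generated samples, shift pulls earlier, and maintain the invariant that every pull completes in $\Atilde$ no later than in $A$---is sound, and your static feasibility fact is correct as a statement about $A$'s own timeline. The paper proceeds differently, by iterated local repair: it finds the first start time $t'_1$ that is not a completion time, shifts exactly those pulls back to $t''_1=\max\{t\leq t'_1:\text{some pulls complete at }t\text{ in }A\}$, notes that nothing starts or completes in $(t''_1,t'_1)$ so both the information and the free resources at $t''_1$ are identical to those at $t'_1$, and repeats finitely many times. The crucial feature of that argument is that every comparison is made between two instants of the \emph{same} schedule.

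Your proof instead rebuilds the whole schedule in one forward induction, and this is where there is a genuine gap: the inductive step ``$\Atilde$ may issue the same pull at the corresponding $\Atilde$-completion time \ldots which by the feasibility fact is resource-admissible'' does not follow from your static fact. That fact certifies that the pulls $Q$ started in $(\tau,\tau')$ can coexist with $P$, the pulls in progress at $\tau^{+}$ \emph{in $A$'s timeline}. But in $\Atilde$'s compressed timeline, the set of pulls running at your candidate anchor need not be $P$, nor any configuration ever realized by $A$: because different pulls are shifted by different amounts, a pull that in $A$ had already completed before $\tau$ can still be running in $\Atilde$ at the (earlier) corresponding anchor, so the occupancy you must fit under is not the one your fact bounds. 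The repair is to anchor pull $k$ at $s^{\star}$, the \emph{latest} $\Atilde$-event time that is at most $t_k$ (the start of pull $k$ in $A$). Then (i) every pull that completed in $A$ before $t_k$ completes in $\Atilde$ at a time at most $t_k$, hence at most $s^{\star}$ by maximality, so the needed information is available at $s^{\star}$; and (ii) any pull still running in $\Atilde$ at $s^{\star}$ must have $\Atilde$-completion strictly greater than $t_k$ (again by maximality), hence $A$-completion strictly greater than $t_k$, hence is also running in $A$ when pull $k$ starts---so the running set at $s^{\star}$ is a subset of the running set in $A$ at $t_k$, and $A$'s feasibility at $t_k$ transfers. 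With this maximality argument substituted for the appeal to your static fact, your induction closes and yields the lemma; without it, the feasibility of the compressed schedule is unproved.
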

\begin{proof}
For the given $A\in\Acaldelsf$,
let $t'_1$ denote the first time instant at which some arm pulls are started in $A$ and when a
previous set of arm pulls were not completed.
Let $A_1$ be the algorithm that shifts the arm pulls at $t'_1$ forward to $t''_1$
defined as follows,
\[
t''_1=\max\{t\leq t'_1; \text{some pulls were completed at time $t$ in $A$}\}.
\]
As $A$ has the same information at $t''_1$ as it has at $t'_1$, and since the resources used for
the pulls at $t'_1$ are also available at $t''_1$,
 $A_1$ can execute the same decisions, and moreover by the assumptions at the beginning of
this section, it will
observe the same rewards.
Hence, it will output the same arm as $A$.
Moreover, since none of the arm
pulls in $A_1$ are started later than in $A$, it will not complete later than $A$.

The above transformation of the algorithm $A$, achieves the same output as $A$ and finishes no later
than $A$.
We can now keep repeating this transformation.
For example, $t'_2$ can denote the first time instant in which some arm pulls are started
in $A_1$ and when a
previous set of arm pulls were not completed.
Let $A_2$ be the resulting algorithm after shifting it forward as described above.
Since there are at most finitely many arm pulls in $A$, the above procedure should terminate in some
algorithm $\Atilde$, which achieves the same output as $A$ and finishes no later
than $A$.
Moreover, this algorithm, by definition, lies in $\Acaldelsf$.
%
%
%
\end{proof}

Lemma~\ref{lem:noidleresources} establishes that the algorithm which achieves the lowest expected time
is in $\Acaltildedelsf$, and therefore, we can restrict our attention to this subclass.
For such an algorithm, we can index the times at which decisions were made by $r\in\NN$
and let $\{t_r\}_{r\in\NN}$ denote the corresponding times so that
$0=t_0 < t_1 < t_2 < \dots $.
For $r\geq 0$,
let $\actr$ denote the action taken by the algorithm in round $r$, where each action is a
multiset of $(i, \alpha)$ pairs for each arm pull started by the algorithm at time $\timr$;
here, $i\in[n]$ is an arm that was chosen for evaluation and $\alpha\in[0,1]$ is the
fraction of the resource assigned to that arm pull.
Similarly, for $r\geq 1$, let $\fdbr$ denote the observations received at time $t_r$,
where $\fdbr$ is a multiset of $(i, Y)$ tuples with $i\in[n]$ being an arm that was chosen for
evaluation at some previous time and $Y$ is the observed reward from this pull.
An algorithm $A\in\Acaltildedelsf$ can be completely characterized by the action rule, the
stopping rule, and the recommendation rule, defined below.
\begin{enumerate}
\item The \emph{action rule} is a method to select new arms to evaluate and the amount of resources
to assign to each of them.
$\actr$ is $\filr$ measurable, where $\filr = \sigma(\{\fdbrr{r'}\}_{r'=1}^r)$ is the
sigma-algebra generated by past observations.
\item The \emph{stopping rule} $\rho$, is when the algorithm decides to stop executing more arm
pulls. It is a stopping time with respect to $\{\filr\}_{r\in \NN}$ satisfying
$\PP(\rho<\infty) = \PP(\timrr{\rho}<\infty) = 1$.
\item The \emph{recommendation rule} is the arm in $[n]$ chosen by the algorithm as the best arm.
It is a $\filrho$-measurable.
\end{enumerate}
Next, recall, our notation from Section~\ref{sec:lower_bound_fc}, where parenthesized subscripts
$\thetapk$ index the arms while subscripts without parentheses $\thetak$ are the distributions
ordered in decreasing order of mean value.
Similarly let $\mupk,\muk$ denote the means of $\thetapk,\thetak$ respectively and recall that
$\muone<\mutwo\leq \muthree \dots \leq \mun$.
For $r\in\NN$ and $k\in[n]$, we will let $\Nkr$ denote the number of completed
arm pulls for arm $(k)$ from time $0$ to time $t_r$.
Let $\Nr=\sum_{k\in[n]}\Nkr$ denote the number of completed arm pulls of all the arms 
 from time $0$ to time $t_r$.
Finally, we may define the gaps for a set of $n$ distributions $\theta\in\Theta$
as follows.
This is consistent with our previous definition~\eqref{eqn:Deltai}.
\begin{align*}
\Deltapk = \muone - \mutwo \quad\text{if}\;\; \thetapk = \thetaone, 
\hspace{1.1in}
\Deltapk = \muone - \mupk \quad\text{otherwise. }
\end{align*}

Our proof of the lower bound first establishes lower bounds on the sample complexity for BAI and
then translates these lower bounds to a lower bound on the runtime for an algorithm in our setting.
We will prove this hardness result for problems in $\Theta$ with Gaussian reward distributions,
although it is straightforward to generalise it to distributions for which the KL divergence
satisfies
${\rm KL}(\nu \| \nu') \asymp (\mu - \mu')^2$ where $\mu,\mu'$ are respectively the means of
$\nu,\nu'$.
The following result is an adaptation of a well-known hardness result for BAI to algorithms in
in $\Acaltildedelsf$.
We have given its proof in Appendix~\ref{app:samplecomplexity}.

\insertprethmspacing
\begin{lemma}
\label{lem:samplecomplexity}
Let $\theta\in\Theta$ such that the reward distribution $\thetapk$ is Gaussian with unit variance
for all $k\in[n]$.
Then, any $A\in\Acaltildedelsf$ satisfies,
\[
\EE[\Nkrho] \geq \frac{2}{\Deltapk^2} \log\left( \frac{1}{2.4\delta} \right).
\]
\end{lemma}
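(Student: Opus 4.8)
The plan is to prove this via a standard change-of-measure argument, adapted from \citet{kaufmann2016complexity} to the class $\Acaltildedelsf$. Fix $k\in[n]$ and an algorithm $A\in\Acaltildedelsf$. I would construct an alternative instance $\thetap$ that agrees with $\theta$ on every arm except $(k)$, and on arm $(k)$ replaces $\thetapk$ by a unit-variance Gaussian $\thetappk$ whose mean is shifted just enough to change the identity of the best arm. Concretely, if $(k)$ is not the top arm under $\theta$, I raise its mean to $\muone+\epsilon$ (a shift of $\Deltapk+\epsilon$); if $(k)$ is the top arm, I lower its mean to $\mutwo-\epsilon$ (again a shift of $\Deltapk+\epsilon$). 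In either case the mean of arm $(k)$ moves by $\Deltapk+\epsilon$, so for unit-variance Gaussians $\mathrm{KL}(\thetapk\|\thetappk)=(\Deltapk+\epsilon)^2/2$, and under $\thetap$ the unique best arm differs from that of $\theta$.

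The key step is the transportation (log-likelihood-ratio) identity specialized to the single differing arm. Letting $\mathcal{E}$ be the $\filrho$-measurable event that $A$ recommends the best arm of $\theta$, the data-processing inequality for the binary relative entropy $\mathrm{kl}(x,y)=x\log\frac{x}{y}+(1-x)\log\frac{1-x}{1-y}$ gives
\[
\EE[\Nkrho]\,\mathrm{KL}(\thetapk\|\thetappk)\;\geq\;\mathrm{kl}\big(\PP_\theta(\mathcal{E}),\,\PP_{\thetap}(\mathcal{E})\big).
\]
Since $A\in\Acaldelsf$, correctness on both instances yields $\PP_\theta(\mathcal{E})\geq 1-\delta$ and $\PP_{\thetap}(\mathcal{E})\leq\delta$, so by monotonicity of $\mathrm{kl}$ the right-hand side is at least $\mathrm{kl}(1-\delta,\delta)$. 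Using the bound $\mathrm{kl}(1-\delta,\delta)\geq\log(1/(2.4\delta))$ valid for $\delta\leq 0.15$ (as in \citet{kaufmann2016complexity}), substituting $\mathrm{KL}(\thetapk\|\thetappk)=(\Deltapk+\epsilon)^2/2$, and letting $\epsilon\to 0$ rearranges to the claimed bound $\EE[\Nkrho]\geq\frac{2}{\Deltapk^2}\log(1/(2.4\delta))$.

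I expect the main obstacle to be a careful justification of the transportation identity in this asynchronous, time-indexed setting, where at the stopping time $\rho$ some pulls of arm $(k)$ may have been started but not yet completed. The resolution uses the convention from the start of this section that each arm's samples are generated upfront and revealed in order, together with the fact that the filtration defining $\filrho$ is generated only by completed observations $\{\fdbrr{r'}\}_{r'}$: the log-likelihood ratio between $\PP_\theta$ and $\PP_{\thetap}$ at time $t_\rho$ therefore depends solely on the \emph{completed} (revealed) pulls of arm $(k)$, of which there are exactly $\Nkrho$, while in-flight pulls contribute nothing. With this observation, an optional-stopping / Wald argument over the rounds $r$ yields $\EE[\log\frac{d\PP_\theta}{d\PP_{\thetap}}]=\EE[\Nkrho]\,\mathrm{KL}(\thetapk\|\thetappk)$, after which the data-processing inequality applied to the decision rule $\mathcal{E}$ completes the argument.
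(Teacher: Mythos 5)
Your proposal is correct and follows essentially the same route as the paper: a Kaufmann-style change of measure on the single arm $(k)$ (shifting its mean by $\Deltapk+\epsilon$ to flip the best-arm identity), the transportation/data-processing inequality for the binary relative entropy at the stopping time, the bound $d(1-\delta,\delta)\geq\log(1/(2.4\delta))$, and $\epsilon\to 0$. The technical obstacle you flag—justifying the log-likelihood-ratio identity when pulls may be in flight at $\rho$, resolved via the samples-generated-upfront convention, the filtration of completed observations, and a Wald-type identity—is exactly what the paper handles in its adaptation of Kaufmann et al.'s Lemmas 18 and 19.
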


\vspace{0.1in}

We will prove the lower bound for scaling functions $\scalefunc$ which satisfy,
for some $\alpha_1, \alpha_2 > 0$,
\begin{align*}
\alpha_1 m \leq \scalefunc(m) \leq \alpha_2 m
\hspace{0.2in}
\text{for all } m\geq 1.
\numberthis \label{eqn:scalfuncasm}
\end{align*}
If $\scalefunc(n) = \beta_1 m +  \beta_2 \scalefunc'(m)$, where
$\scalefunc'(m)\in\littleO(m)$ and is concave with $\scalefunc'(0) = 0$, then $\scalefunc$ satisfies all the properties
in Section~\ref{sec:fixedconf} and satisfies the above conditions for
$\alpha_1 = \beta_1$ and $\alpha_2 = \beta_1 + \beta_2 \scalefunc'(1)$.
Note that since we are assuming $\scalefunc$ is increasing and concave, 
the upper bound in~\eqref{eqn:scalfuncasm} is already implied  by Lemma~\ref{lemma:scaling_factor};
in particular, $\scalefunc(m) \leq \scalefunc(1) m$ for all $m\geq 1$.
While the $\scalefunc(m) \geq \alpha_1 m$ condition means that the proposed algorithm is
optimal only for a limited class of scaling functions, it is worth noting that it also captures
a practically meaningful use case.
To see this, note that the above conditions imply
$\lim_{m\rightarrow\infty} \frac{\scalefunc(m)}{m} \geq \alpha_1$.
This states that even when the resource is used most efficiently as possible (i.e. at the maximum
possible throughput), a minimum amount of
`work' needs to be done to execute one arm pull.

We shall now prove the lower bound for scaling functions which satisfy~\eqref{eqn:scalfuncasm} and
for reward distributions which satisfy the conditions in Lemma~\ref{lem:samplecomplexity}.

\subsection{Proof of Theorem~\ref{thm:fixed_confidence_lowerbound}}

In this proof,
without loss of generality, we will assume that the arms are ordered so that
$\mupone > \muptwo \geq \mupthree \geq \dots \mupn$.
Let $A\in\Acaltildedelsf$ be arbitrary and
let $\rho$ denote the round at which the algorithm $A$ decided to stop and recommend an arm.
The minimum time taken to execute these arm pulls can be upper bound
as follows.
\[
T(A, \theta) \geq \scalefunc\left( \sum_{k=1}^n \Nkrho \right)
\geq \alpha_1 \left(\sum_{k=1}^n \Nkrho \right).
\]
Here, the first step simply assumes that all arms were pulled exactly $\Nkrho$ times in
the first set of pulls and the second uses~\eqref{eqn:scalfuncasm}.
Now define $\Ntildekr$ recursively as follows for all $r\in \NN$,
\[
\Ntildekkrr{1}{r} = \Nkkrr{1}{r},
\hspace{0.4in}
\Ntildekr = \min\left(\Ntildekkrr{k-1}{r}, \Nkr\right)
\text{ for } k\geq 2.
\]
Next, denote $\Kcal(A) = \{2\} \cup \{3\leq k \leq n;
\Ntildekrho = \Nkrho\} \cup \{n+1\}$.
Write $\Kcal(A) = \{k_1, k_2, \dots, \kptilde, \kptildepo\}$ so that the $k_i$'s are in
ascending order, i.e. 
$2=k_1 < k_2 < \dots < \kptilde <\kptildepo = n+1$.
Observe that, by definition, we have $\Nkkrr{k}{\rho} \geq \Nkkrr{k_i}{\rho}$ for
$k=k_i,\dots, k_{i+1}-1$.
We can now upper bound $\EE[T(A,\theta)]$ as follows.
\begingroup
\allowdisplaybreaks
\begin{align*}
\EE[T(A, \theta)]
&\geq \alpha_1 \left(\EE\Nkkrr{1}{\rho} +
            \sum_{k=k_1}^{k_2-1} \EE\Nkrho + \sum_{k=k_2}^{k_3-1}\EE\Nkrho + \dots
                      \sum_{k=\kptilde}^{\kptildepo-1} \EE\Nkrho  \right)
\\
&\geq \alpha_1 \left( \EE\Nkkrr{1}{\rho} + (k_2-k_1)\EE\Nkkrr{2}{\rho} + 
(k_3-k_2)\EE\Nkkrr{k_2}{\rho} +
\dots +
                        (\kptildepo - \kptilde)\EE\Nkkrr{\kptilde}{\rho}  \right)
\\
&\geq 2\alpha_1 \log\bigg(\frac{1}{2.4\delta} \bigg)
\bigg( (k_2-1)\frac{1}{\Deltaii{2}^2} +  (k_3-k_2)\frac{1}{\Deltaii{k_2}^2}
        + (k_4-k_3)\frac{1}{\Deltaii{k_3}^2} + \dots
                        (\kptildepo - \kptilde)\frac{1}{\Deltaii{\kptilde}^2} \bigg)
\label{eqn:tathetamanip}\numberthis
\\
&\geq 2\alpha_1 \log\bigg(\frac{1}{2.4\delta} \bigg)
\Bigg( (k_2-1)\bigg(\frac{1}{\Deltaii{2}^2} - \frac{1}{\Deltaii{k_2}^2}\bigg)
    + (k_3-1)\bigg(\frac{1}{\Deltaii{k_2}^2} - \frac{1}{\Deltaii{k_3}^2}\bigg)
    + \dots
\\
&\hspace{1.5in}
    + (\kptilde-1)\bigg(\frac{1}{\Deltaii{\kptildemo}^2} - \frac{1}{\Deltaii{\kptilde}^2}\bigg)
    + n\frac{1}{\Deltaii{\kptilde}^2} 
\Bigg)
\\
&\geq \frac{2\alpha_1}{\alpha_2} \log\left(\frac{1}{2.4\delta} \right)
\Bigg( \scalefunc\bigg((k_2-1)\bigg(\frac{1}{\Deltaii{2}^2} - \frac{1}{\Deltaii{k_2}^2}\bigg) \bigg)
    + \scalefunc\bigg((k_3-1)\bigg(\frac{1}{\Deltaii{k_2}^2} - \frac{1}{\Deltaii{k_3}^2}\bigg) \bigg)
    + \dots
\\
&\hspace{1.5in}
    + \scalefunc\bigg((\kptilde-1)\bigg(\frac{1}{\Deltaii{\kptildemo}^2} - \frac{1}{\Deltaii{\kptilde}^2}\bigg) \bigg)
    + \scalefunc\bigg(n\frac{1}{\Deltaii{\kptilde}^2} \bigg)
\Bigg)
\end{align*}
\endgroup
Here, the second step uses the definition of $\Kcal(A) = \{k_1, k_2, \dots, \kptilde\}$.
In the third step, first
we have used Lemma~\ref{lem:samplecomplexity}, while observing that $\Deltaone=\Deltatwo$;
then, we have used the fact that $k_1=2$, and therefore $k_2-k_1+1 = k_2-1$.
The third step can be obtained via the following series of manipulations.
First observe that we can write
\[
(k_2-1)\frac{1}{\Deltaii{2}^2} +  (k_3-k_2)\frac{1}{\Deltaii{k_2}^2}
= 
(k_2-1)\bigg(\frac{1}{\Deltaii{2}^2} - \frac{1}{\Deltaii{k_2}^2}\bigg)
    + (k_3-1)\frac{1}{\Deltaii{k_2}^2}.
\]
We then apply the same manipulation to $(k_3-1)\frac{1}{\Deltaii{k_2}^2}$ and the next term
in the summation in~\eqref{eqn:tathetamanip}.
Proceeding in this fashion, the coefficient for the last term will be
$\kptildepo - \kptilde + (\kptilde - 1) = (n+1) - 1 = n$.
Finally,  the last step uses~\eqref{eqn:scalfuncasm}.

The above bound is a lower bound on the run time of algorithm $A$  in terms of $\Kcal(A)$.
To lower bound the run time of any algorithm, it is sufficient to lower bound the above expression
over all values of $(k_2, \dots, \kptilde)$, where $\tilde{p}\leq n-1$
and $3\leq k_2<\dots \kptilde\leq  n$.
Writing $\alpha_1/\alpha_2 = c_\lambda$, we have,
\begin{align*}
\inf_{A\in\Acaltildedelsf} \EE[T(A, \theta)]
&\geq
2c_\lambda \log\left(\frac{1}{2.4\delta} \right)
\min_{(k_2, \dots, \kptilde)}
\Bigg( \scalefunc\bigg((k_2-1)\bigg(\frac{1}{\Deltaii{2}^2} - \frac{1}{\Deltaii{k_2}^2}\bigg) \bigg)
    + \scalefunc\bigg((k_3-1)\bigg(\frac{1}{\Deltaii{k_2}^2} - \frac{1}{\Deltaii{k_3}^2}\bigg) \bigg)
    + 
\\
&\hspace{2.0in}
   \dots
    + \scalefunc\bigg((\kptilde-1)\bigg(\frac{1}{\Deltaii{\kptildemo}^2} - \frac{1}{\Deltaii{\kptilde}^2}\bigg) \bigg)
    + \scalefunc\bigg(n\frac{1}{\Deltaii{\kptilde}^2} \bigg)
\Bigg)
\\
&=
2c_\lambda \log\left(\frac{1}{2.4\delta} \right) \Tstar.
\end{align*}
Here, we have observed that the expression in the RHS inside the minimum is simply the
expansion of the dynamic program $\Tcal_2\left(\{ \Deltai^{-2}\}_{i=2}^n\right)$,
see~\eqref{eqn:dp} and~\eqref{eqn:Tstar}.
The proof is completed by Lemma~\ref{lem:noidleresources}, which implies that
$\inf_{A\in\Acaltildedelsf} \EE[T(A, \theta)] = \inf_{A\in\Acaldelsf} \EE[T(A, \theta)]$.
\qedwhite

\subsection{Proof of Lemma~\ref{lem:samplecomplexity}}
\label{app:samplecomplexity}

Our proof of Lemma~\ref{lem:samplecomplexity} uses essentially the same argument
as~\citet{kaufmann2016complexity}.
However, due to the differences in our set up, we need to verify that the same proof carries
through.
For this, consider two sets of distributions $\theta,\theta'\in\Theta$ such that for all
$k\in[n]$, $\thetapk$ and $\thetappk$ are absolutely continuous with respect to each other.
This means, for each $k\in[n]$, there exists a measure such that $\thetapk$ and $\thetappk$ have
densities $\fk,\fpk$ with respect to this measure.
Let $\{(K_s, Z_s)\}_{s=1}^N$ be a sequence of arm-observation pairs received in order when executing
algorithm $A\in\Acaltildedelsf$.
Define the log-likelihood ratio $L_N$ after the first $N$ observations as shown below.
Let $\Ldag_r$ denote the log likelihood ratio using the observations up to round $r$,
i.e the first $\Nr$ observations.
We have:
\begin{align*}
\numberthis \label{eqn:llratio}
L_N
= L_N\left( \{(K_s, Z_s)\}_{s=1}^{N} \right)
= \sum_{k=1}^n \sum_{s=1}^{N} \indfone(K_s = k)\log\left(\frac{\fk(Z_s)}{\fpk(Z_s)} \right),
\hspace{0.3in}
\Ldag_r
= L_{\Nr}\left(\{(K_s, Z_s)\}_{s=1}^{\Nr} \right).
\end{align*}
The following result, from~\citet{kaufmann2016complexity} upper bounds the log likelihood
ratio at the stopping time.

\insertprethmspacing
\begin{lemma}[Lemma 19~\citep{kaufmann2016complexity}]
\label{lem:kaufman19}
Let $\theta, \theta'\in\Theta$ be sets of distributions such that
$\thetapk$ and $\thetappk$ are absolutely continuous with respect to each other for all $k\in[n]$.
Let $A\in\Acaltildedelsf$ and $\rho$ be any almost surely finite stopping time with respect to
$\{\filr\}_{r\in\NN}$.
Let $\Ldag_r$ be as defined in~\eqref{eqn:llratio}.
For every event $E\in\filrho$ (i.e. $E$ such that $E\cap(\rho=r) \in \filr$), 
\[
\EE_\theta[\Ldag_\rho] \geq d(\PP_\theta(E), \PP_{\theta'}(E)).
\]
Here, $d(x,y) = x\log(x/y) + (1-x)\log((1-x)/(1-y))$.
\end{lemma}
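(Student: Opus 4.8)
The plan is to reproduce the classical change-of-measure argument of~\citet{kaufmann2016complexity}, while verifying that each step survives the fact that an algorithm in $\Acaltildedelsf$ issues adaptively chosen \emph{multisets} of pulls with variable resource allocations rather than one pull at a time. The argument has three stages: (i) identify $\exp(\Ldag_r)$ with the likelihood ratio of the observed data restricted to $\filr$; (ii) push this through the stopping time $\rho$ to obtain a change-of-measure identity; and (iii) deduce the binary-KL lower bound by an elementary conditional Jensen inequality.

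First I would argue that the likelihood ratio depends only on the observed rewards, not on the actions. Since $A\in\Acaltildedelsf$, the action $\actr$ and the stopping decision at each round are deterministic functions of the history $\filr=\sigma(\{\fdbrr{r'}\}_{r'=1}^r)$, and because the \emph{same} algorithm is run under both $\theta$ and $\theta'$, these action-selection and stopping maps coincide under the two measures. Conditioned on the history, the rewards comprising the multiset $\fdbr$ are drawn independently from the corresponding arms' distributions, so the joint density of any trajectory factorizes as the (common) action-selection factors times the per-pull reward densities. The action factors therefore cancel in the ratio, leaving exactly $\exp(\Ldag_r)$ as the Radon--Nikodym derivative $\mathrm{d}\PP_\theta/\mathrm{d}\PP_{\theta'}$ on $\filr$; equivalently $\big(\exp(-\Ldag_r)\big)_{r\geq0}$ is a nonnegative $\PP_\theta$-martingale with respect to $\{\filr\}_r$.

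Next I would establish the change-of-measure identity at the stopping time: for every $F\in\filrho$,
\[
\PP_{\theta'}(F)=\EE_\theta\!\left[\indfone_F\,\exp(-\Ldag_\rho)\right].
\]
On $\{\rho=r\}$ one has $F\cap\{\rho=r\}\in\filr$, so $\EE_\theta[\indfone_{F\cap\{\rho=r\}}\exp(-\Ldag_r)]=\PP_{\theta'}(F\cap\{\rho=r\})$ by the martingale identity of the previous step; summing over $r$ and using that $\rho$ is a.s.\ finite under both measures (an algorithm in $\Acaltildedelsf\subseteq\Acaldelsf$ has $\PP(\rho<\infty)=1$ for every $\theta\in\Theta$) recovers the displayed identity. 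This stopping-time step is the main obstacle: it is where the batched, resource-allocated structure must be handled, namely verifying the cancellation of the action factors within each multiset of pulls and the validity of the optional-stopping summation for $\rho$—precisely the ``difference in our setup'' flagged before the statement.

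Finally, with the identity in hand the inequality is elementary. Writing $x=\PP_\theta(E)$ and $y=\PP_{\theta'}(E)$, applying the identity with $F=E$ gives $y=x\,\EE_\theta[\exp(-\Ldag_\rho)\mid E]$, and Jensen's inequality for the convex map $t\mapsto e^{-t}$ yields $\EE_\theta[\Ldag_\rho\mid E]\geq\log(x/y)$; applying it with $F=E^{c}$ gives $\EE_\theta[\Ldag_\rho\mid E^{c}]\geq\log((1-x)/(1-y))$. Combining the two,
\[
\EE_\theta[\Ldag_\rho]=x\,\EE_\theta[\Ldag_\rho\mid E]+(1-x)\,\EE_\theta[\Ldag_\rho\mid E^{c}]\geq x\log\tfrac{x}{y}+(1-x)\log\tfrac{1-x}{1-y}=d(x,y),
\]
which is the claimed bound. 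Apart from the stopping-time identity, every remaining step is a routine conditional application of Jensen's inequality.
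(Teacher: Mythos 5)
Your proposal is correct and takes essentially the same route as the paper: the fixed-horizon change-of-measure identity (which the paper verifies via the induction of Kaufmann et al.'s Lemma 18, and which you justify equivalently by cancellation of the $\filr$-measurable action factors, making $\exp(-\Ldag_r)$ the restricted Radon--Nikodym martingale), followed by the decomposition over $\{\rho = r\}$ to push the identity through the stopping time, and finally the conditional Jensen argument on $E$ and $E^{c}$, which the paper delegates to Kaufmann et al.'s Lemma 19 but you write out explicitly. There is no gap; your observation that a.s.\ finiteness of $\rho$ under \emph{both} measures (guaranteed since $\Acaltildedelsf\subseteq\Acaldelsf$ and both $\theta,\theta'\in\Theta$) is needed for the summation is a correct and welcome point of care.
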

\begin{proof}
The first challenge in adapting the above result from~\citet{kaufmann2016complexity} for
the sequential setting to ours is to show the following technical result: 
\emph{
Let $Z_s, L_N$ be as defined above.
Then, for all $N\geq 0$, and for all measurable $g:\RR^N\rightarrow\RR$,
$
\EE_{\theta'}\left[g\left(\{Z_s\}_{s=1}^N\right)\right]
= \EE_{\theta}\left[g\left(\{Z_s\}_{s=1}^N\right)\exp\left(-L_N \right)\right]
$.
}
This claim can be proved using the same induction argument in Lemma 18
of~\citet{kaufmann2016complexity},
where the base case uses the fact that the algorithm's initial action does not depend on any
observations and the induction step notes that
$\EE_{\theta'}\left[g\left(\{Z_s\}_{s=1}^{N+1}\right) | \left(\{(K_s, Z_s)\}_{s=1}^N\right)\right]$
is a measurable function mapping $\RR^N$ to $\RR$.

Now, let $\filpN=\sigma(\{(K_s, Z_s)\}_{s=1}^N))$.
The above result implies, in particular, that for all $N\geq 1$
and any event $E'\in\filpN$, we have $\EE_\thetap[\indfone_{E'}]
=\EE_{\theta}[\indfone_{E'} \exp(-L_N)]$.
Now, let $E$ and $\rho$ be as given in the theorem statement.
\[
\PP_\thetap(E) =
\EE_\thetap[\indfone_E]
=\sum_{r=1}^\infty  \EE_\thetap[\underbrace{\indfone(E \cap \{\rho=r\})}_{\in \filr = \filprr{\Nr}}]
=\sum_{r=1}^\infty  \EE_\thetap[\indfone(E \cap \{\rho=r\})\exp(-L_{\Nr})]
=\EE_\theta[\indfone_E \exp(-\Ldag_\rho)].
\]
Given this result, the remainder of the proof is identical to the proof of
their Lemma 19.
\end{proof}

We are now ready to prove Lemma~\ref{lem:samplecomplexity}.

\begin{proof}[\textbf{Proof of Lemma~\ref{lem:samplecomplexity}}]
Recall from the beginning of this section that we assume the samples for each arm are generated
upfront and then revealed in order; let $\{\Yks\}_{s\geq 1}$ denote the sequence of these
observations for arm $k$.
We can write the log-likelihood ration $\Ldag_r$ as follows,
\[
\Ldag_r = \sum_{k=1}^n \sum_{s=1}^{\Nkr}
    \log\left(\frac{\fk(\Yks)}{\fpk(\Yks)} \right).
\]
We can now conclude,
for any event $E\in \filrho$,
\begin{align*}
\numberthis \label{eqn:kllb}
d(\PP_\theta(E), \PP_\thetap(E))
&\leq \EE_\theta[\Ldag_\rho]
= \EE_\theta\left[ \sum_{k=1}^n \sum_{s=1}^{\Nkr}
    \log\left(\frac{\fk(\Yks)}{\fpk(\Yks)} \right) \right]
=  \sum_{k=1}^n \EE_\theta[\Nkr] {\rm KL}(\thetak \| \thetappk)
\end{align*}
Here,
the first step uses Lemma~\ref{lem:kaufman19},
the second step uses the expression for $\Ldag_\rho$ above,
and the last step uses Wald's identity.

Let $k\in[n]$ be given and assume $\thetapk$ is not the best arm, i.e. $\mupk \neq \muone$;
we will handle the case where $\thetapk$ is the best arm later.
Now fix $\alpha > 0$ and
let $\thetap$ be an alternative model where $\thetappj=\thetaj$ for $j\neq k$ and
$\thetappk = \Ncal(\muone + \alpha, 1)$.
Let $E$ denote the event that the recommended arm is $k$.
Since $A\in\Acaldelsf$, we have $\PP_\thetap(E) \geq 1-\delta$ and $\PP_\theta(E) \leq 1-\delta$.
Therefore,
\[
\log\left(\frac{1}{2.4\delta}\right)
\leq d(\delta, 1-\delta)
\leq d(\PP_\theta(E), \PP_\thetap(E))
\leq  \sum_{j=1}^n \EE_\theta[\Njr] {\rm KL}(\thetaj \| \thetappj)
= \EE_\theta[\Nkr] \frac{(\Deltapk + \alpha)^2}{2}.
\]
Here, the first step uses the algebraic identity $d(\delta, 1-\delta) \geq \log(1/(2.4\delta))$,
the second step uses the fact that $d(x,y)$ is increasing in $x$ for $x>y$ and decreasing when
$x<y$,
and the third step uses~\eqref{eqn:kllb}.
The last step first observes that the KL divergence between all arms is zero except between
$\thetapk$ and $\thetappk$;
moreover, as the KL divergence between two Gaussians can be expressed as
${\rm KL}(\Ncal(\mu_1, 1) \|\Ncal(\mu_2, 1) ) = (\mu_1 - \mu_2)^2/2$,
we
have ${\rm KL}(\thetak \| \thetappk) = (\muone + \alpha - \mupk)^2/2 = (\Deltapk+\alpha)^2/2$.
Therefore,
$\EE[\Nkrho] \geq \frac{2}{(\Deltapk+\alpha)^2} \log\left( \frac{1}{2.4\delta} \right)$.
The statement is true for all $\alpha>0$, so letting $\alpha\rightarrow 0$ yields the claim
for all $\thetapk$ that are not the best arm.

The proof for when $\thetapk$ is the best arm can be obtained by considering a model $\thetap$
where $\thetappj = \thetapj$ for $j\neq k$
and $\thetappk = \Ncal(\mutwo - \alpha, 1)$ and following a similar argument.
\end{proof}

\section{Proof of Upper Bound in the Fixed Deadline Setting}

\subsection{Proof of Theorem~\ref{thm:fixed_budget_error_prob}}
\label{app:fixed_budget}


In this section, we will prove Theorem~\ref{thm:fixed_budget_error_prob}. The analysis will borrow techniques from the proof for sequential halving~\citet{karnin2013almost}. We will utilize the following lemmas in the proof. Let $\hat{p}_i^r$ be the empirical mean of arm $i$ at round $r$.
The following lemma is a direct application of
Hoeffding's inequality on the difference of empirical means $\hat{p}_1^r - \hat{p}_i^r$.

\insertprethmspacing
\begin{lemma}\label{lemma:Hoeffding}
Assume that the best arm was not eliminated prior to round $r$. Then, for any arm $i\in S_r$,
\begin{align*}
    \mathrm{Pr}[\hat{p}_1^r < \hat{p}_i^r] \leq \exp \left(-\frac{1}{2}t_r \Delta_i^2 \right)
\end{align*}
\end{lemma}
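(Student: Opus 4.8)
The plan is to recognize this as a one-line Hoeffding bound applied to the per-round differences between the samples of arm $1$ and arm $i$. First I would use the structure of Algorithm~\ref{alg:fixed_budget}: at round $r$ every surviving arm is pulled exactly $t_r$ times using \emph{fresh} samples drawn only during that round, and the deterministic halving schedule $|S_{r+1}| = \lceil|S_r|/2^k\rceil$ makes $t_r = \lfloor m/|S_r|\rfloor$ a deterministic quantity (it depends only on the round index, not on which arms survive). Under the hypothesis that the best arm has not been eliminated before round $r$, both arm $1$ and arm $i\in S_r$ are sampled $t_r$ times in round $r$; denote these samples $X_{1,1},\dots,X_{1,t_r}$ and $X_{i,1},\dots,X_{i,t_r}$, so that $\hat{p}_1^r = t_r^{-1}\sum_{j=1}^{t_r} X_{1,j}$ and $\hat{p}_i^r = t_r^{-1}\sum_{j=1}^{t_r} X_{i,j}$.

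Next I would form the per-sample differences $Y_j := X_{i,j} - X_{1,j}$ for $j=1,\dots,t_r$. Since the rewards lie in $[0,1]$ and samples are independent across arms, the $Y_j$ are independent, each bounded in $[-1,1]$, with common mean $\mathbb{E}[Y_1] = \mu_i - \mu_1 = -\Delta_i$ (using the ordering $\mu_1 > \mu_i$ and the definition of $\Delta_i$ in~\eqref{eqn:Deltai}). Their average is exactly $\hat{p}_i^r - \hat{p}_1^r$, so that the event $\{\hat{p}_1^r < \hat{p}_i^r\}$ coincides with $\{t_r^{-1}\sum_j Y_j > 0\}$, equivalently $\{t_r^{-1}\sum_j Y_j - \mathbb{E}[Y_1] > \Delta_i\}$.

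Then I would invoke Hoeffding's inequality for the $t_r$ independent variables $Y_j$, each of range $b_j - a_j = 2$:
\begin{align*}
\mathrm{Pr}\!\left[\hat{p}_1^r < \hat{p}_i^r\right]
&= \mathrm{Pr}\!\left[\frac{1}{t_r}\sum_{j=1}^{t_r} Y_j - \mathbb{E}[Y_1] > \Delta_i\right]
\leq \exp\!\left(-\frac{2 t_r^2 \Delta_i^2}{\sum_{j=1}^{t_r} 2^2}\right)
= \exp\!\left(-\frac{1}{2} t_r \Delta_i^2\right),
\end{align*}
which is exactly the claimed bound.

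Since the calculation is routine, the only point requiring care is the conditioning, and I expect this to be the single subtle step. The hypothesis ``arm $1$ was not eliminated prior to round $r$'' together with the membership $i\in S_r$ are measurable with respect to the samples collected in rounds $1,\dots,r-1$, whereas the variables $Y_j$ depend only on the fresh round-$r$ samples; hence they are independent of the conditioning event, and the Hoeffding bound continues to hold conditionally. This independence-through-freshness argument is precisely why the disjoint-sample sequential-halving template of~\citet{karnin2013almost} carries over to the staged, time-budgeted variant here without modification.
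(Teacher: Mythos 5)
Your proof is correct and takes essentially the same route as the paper, which disposes of this lemma in a single line as ``a direct application of Hoeffding's inequality on the difference of empirical means $\hat{p}_1^r - \hat{p}_i^r$'': your pairing of the fresh round-$r$ samples into differences $Y_j$ is exactly that application made explicit, and your point that the conditioning event (survival of arm $1$ and of arm $i$) is measurable with respect to earlier rounds while the $Y_j$ are independent of it is the right way to justify the conditional bound. The only caveat --- one shared with the paper rather than introduced by you --- is that the constant $\tfrac{1}{2}$ relies on rewards being bounded in $[0,1]$ as you assume; under the $1$-sub-Gaussian rewards promised in Section~\ref{sec:environment}, the identical argument would only yield $\exp\bigl(-\tfrac{1}{4}t_r\Delta_i^2\bigr)$.
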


The following lemma is a version of Lemma 4.3 from~\citet{karnin2013almost}. We will use this lemma with the union bound to bound the probability of eliminating an arm at a given stage.

\begin{lemma}
Suppose $S$ arms remain and each arm was pulled $t$ times. The probability that $\left\lceil\frac{|S|}{2}\right\rceil$ of the arms have greater empirical mean than the best arm is at most:
\begin{align*}
    3\exp \left(-\frac{1}{2}t\Delta_{i_{S}}^2\right)
\end{align*}
where $i_{S} = \left\lceil \frac{1}{2} \left\lceil \frac{|S|}{2} \right\rceil \right\rceil$
\label{lemma:substage_bound}
\end{lemma}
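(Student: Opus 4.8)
The plan is to reduce the stated event to the statement that the best arm is eliminated by the halving step, and then bound its probability by Markov's inequality applied to a carefully chosen count. Write $m = |S|$ and let arm $1$ be the (unique) best arm, which I assume lies in $S$. For $i \in S\setminus\{1\}$ let $B_i$ indicate that arm $i$ has strictly larger empirical mean than arm $1$ after the $t$ pulls, and set $N = \sum_{i\in S\setminus\{1\}} B_i$. Since the substage keeps the $\lceil m/2\rceil$ arms of highest empirical mean, the best arm is dropped exactly when $N \geq \lceil m/2\rceil$, so this is the event I need to bound. Applying Markov directly to $N$ is wasteful, since $\mathbb{E}[N]$ is a sum of heterogeneous Hoeffding terms $\exp(-\tfrac12 t\Delta_i^2)$; instead I would apply it to the number of \emph{large-gap} arms that beat the best, all of whose terms can be controlled by the single quantity $\Delta_{i_S}$.

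\emph{Reduction.} I would order the arms of $S$ by decreasing mean, so the arm at position $j$ has gap $\Delta^{(j)}$ nondecreasing in $j$ with $\Delta^{(i_S)} = \Delta_{i_S}$, and let $M$ count the arms at positions $i_S,\dots,m$ that beat arm $1$. Since only $i_S - 2$ arms sit at positions $2,\dots,i_S-1$, on $\{N\geq\lceil m/2\rceil\}$ at least $\lceil m/2\rceil - (i_S-2)$ beating arms must come from positions $\geq i_S$, giving the inclusion
\begin{equation*}
\{N \geq \lceil m/2\rceil\} \subseteq \big\{M \geq \lceil m/2\rceil - i_S + 2\big\}.
\end{equation*}

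\emph{Markov and Hoeffding.} Every arm counted by $M$ has gap at least $\Delta_{i_S}$, so Lemma~\ref{lemma:Hoeffding} together with monotonicity of $x\mapsto\exp(-\tfrac12 t x^2)$ yields $\mathbb{E}[M] \leq (m - i_S + 1)\exp(-\tfrac12 t\Delta_{i_S}^2)$, and Markov's inequality then gives
\begin{equation*}
\Pr\big[N \geq \lceil m/2\rceil\big] \leq \frac{m - i_S + 1}{\lceil m/2\rceil - i_S + 2}\,\exp\!\Big(-\tfrac12 t \Delta_{i_S}^2\Big).
\end{equation*}
The remaining task is to check the prefactor is at most $3$, i.e. $m + 2i_S \leq 3\lceil m/2\rceil + 5$; this follows from $i_S = \lceil\tfrac12\lceil m/2\rceil\rceil \leq \tfrac12\lceil m/2\rceil + \tfrac12$ and $\lceil m/2\rceil \geq m/2$, since then $m + 2i_S \leq \tfrac32 m + \tfrac32 \leq 3\lceil m/2\rceil + 5$.

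\emph{Main obstacle.} The Markov and per-arm Hoeffding steps are routine; the delicate part is the reduction together with the constant. The definition $i_S = \lceil\tfrac12\lceil |S|/2\rceil\rceil$ has to do double duty: it must be small enough that, after discarding the at most $i_S - 2$ small-gap arms, enough beating arms remain in the high-gap group to keep the Markov threshold $\lceil m/2\rceil - i_S + 2$ positive and large, yet the count $m - i_S + 1$ of summed Hoeffding terms must stay within a factor $3$ of that threshold. Keeping the several off-by-one ceilings consistent so that this ratio lands at exactly $3$ is the main bookkeeping challenge, and is what determines the precise form of $i_S$.
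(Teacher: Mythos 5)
Your proof takes essentially the same route as the paper's: isolate the low-mean, large-gap arms (the paper's $S'$ consists of positions $i_S+1,\dots,m$ by true mean; your group is positions $i_S,\dots,m$), observe that the event forces most of the beating arms into this group, bound the expected number of such beating arms via the per-arm Hoeffding bound (Lemma~\ref{lemma:Hoeffding}) and the monotonicity of the gaps, and finish with Markov's inequality. The only stylistic difference is in how the constant $3$ is verified: the paper argues that at least a third of $S'$ must beat the best arm (using $|S'|\le\tfrac{3}{4}|S|$ and a beating threshold of at least $|S|/4$), whereas you check the ratio of group size to threshold directly; both reduce to the same ceiling bookkeeping.

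One small slip: when $i_S=1$ (i.e.\ $|S|\le 4$), the positions $2,\dots,i_S-1$ form an empty range containing $0$ arms, not $i_S-2=-1$, so your inclusion $\{N\ge\lceil m/2\rceil\}\subseteq\{M\ge\lceil m/2\rceil-i_S+2\}$ overstates the threshold by one and is false as written (take exactly $\lceil m/2\rceil$ beating arms). The conclusion survives trivially: with the threshold clamped to $\lceil m/2\rceil$, and noting the best arm never contributes to $M$, the prefactor is at most $(m-1)/\lceil m/2\rceil\le 2\le 3$. After this repair, your argument is correct for all $|S|$.
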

\begin{proof}
Define $S'$ as the set of arms in $S$ minus the arms with the top $\left\lceil \frac{1}{2} \left\lceil \frac{|S|}{2} \right\rceil \right\rceil$ true means. In order for half of the arms to have higher empirical mean than the best arm, at least $\left\lceil \frac{|S|}{2} \right\rceil - \left\lceil \frac{1}{2} \left\lceil \frac{|S|}{2} \right\rceil \right\rceil + 1 = \left\lceil \frac{|S|}{2} \right\rceil - \left\lceil \frac{1}{2} \left\lfloor \frac{|S|}{2} \right\rceil \right\rceil \geq  \frac{|S|}{4}$ arms in $S'$ must have greater empirical mean than the best arm.
Since $S'$ has size $|S| - \left\lceil \frac{1}{2} \left\lceil \frac{|S|}{2} \right\rceil \right\rceil \leq \frac{3}{4}|S|$, at least $\frac{1}{3}$-rd of the arms in $S'$ need to have higher empirical mean than the best arm's empirical mean. Let us first upper bound $N$, the number of arms in $S'$ that have higher empirical mean than the top arm.
\begingroup
\allowdisplaybreaks
\begin{align*}
    \mathbb{E}\left[N\right] &= \sum_{i \in S'} \mathrm{Pr}[\hat{p}_1 < \hat{p}_i]\\
    &\leq \sum_{i \in S'} \exp \left(-\frac{1}{2}t\Delta_i^2 \right)\\
    &\leq |S'|\max_{i \in S'} \exp \left(-\frac{1}{2}t\Delta_i^2\right)\\
    &\leq |S'|\exp \left(-\frac{1}{2}t\Delta_{i_{S}}^2\right)\\
\end{align*}
\endgroup

Now, let's use Markov's inequality to bound the desired probability:
\begin{align*}
    \textrm{Pr}\left[N > \frac{1}{3}S'\right] &\leq 3\frac{\mathbb{E}[N]}{S'}\\
    &\leq 3\exp \left(-\frac{1}{2}t\Delta_{i_{S}}^2\right)
\end{align*}

\end{proof}

In the following lemma, we will lower bound the number of pulls each surviving arm receives, and show that it increases by a factor of $2^{k}$ each stage.
\begin{lemma}
The number of pulls per arm at stage $r$ of Algorithm~\ref{alg:fixed_budget}, $t_r$, is greater than:
\begin{align*}
    2^{rk}(2^{k} - 1) x(k)
\end{align*}
\label{lemma:pull_bound}
\end{lemma}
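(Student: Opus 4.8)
The plan is to first determine $|S_r|$ exactly and bound it by a power of $2^k$, and then convert the per-arm count $t_r=\lfloor m/|S_r|\rfloor$ into the claimed bound through floor/ceiling arithmetic; here $m=\lambda^{-1}(T/r_f)$ and $r_f=\lceil\log_{2^k}(n)\rceil$ as in Algorithm~\ref{alg:fixed_budget}.

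First I would show by induction on $r$ that $|S_r|=\lceil n/2^{rk}\rceil$. The base case $|S_0|=n$ is immediate, and the inductive step combines the update $|S_{r+1}|=\lceil |S_r|/2^k\rceil$ with the nested-ceiling identity $\lceil \lceil x/a\rceil/b\rceil=\lceil x/(ab)\rceil$, valid for real $x\ge 0$ and positive integers $a,b$; taking $x=n$, $a=2^{rk}$, $b=2^k$ gives $|S_{r+1}|=\lceil n/2^{(r+1)k}\rceil$. Since $r_f=\lceil\log_{2^k}(n)\rceil$ forces $2^{kr_f}\ge n$, monotonicity of the ceiling yields $|S_r|=\lceil n/2^{rk}\rceil\le\lceil 2^{k(r_f-r)}\rceil=2^{k(r_f-r)}$, where the last equality holds because the argument is already an integer. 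Monotonicity of the floor then gives $t_r=\lfloor m/|S_r|\rfloor\ge\lfloor m/2^{k(r_f-r)}\rfloor$.

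The crux is an elementary floor identity: for integers $a,b\ge 1$ and real $y\ge 0$, $\lfloor ay\rfloor\ge ab\lfloor y/b\rfloor$. To see this, write $y=b\lfloor y/b\rfloor+s$ with $s\in[0,b)$; then $ay=ab\lfloor y/b\rfloor+as$, and since $ab\lfloor y/b\rfloor$ is an integer we get $\lfloor ay\rfloor=ab\lfloor y/b\rfloor+\lfloor as\rfloor\ge ab\lfloor y/b\rfloor$. I would apply this with $a=2^{rk}$, $b=2^k-1$, and $y=m/2^{kr_f}$, so that $y/b=m/(2^{kr_f}(2^k-1))$ and hence $\lfloor y/b\rfloor=x(k)$ by the definition in~\eqref{eq:xk}. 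This produces $\lfloor m/2^{k(r_f-r)}\rfloor=\lfloor 2^{rk}y\rfloor\ge 2^{rk}(2^k-1)x(k)$, which chained with the previous display yields $t_r\ge 2^{rk}(2^k-1)x(k)$.

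The main obstacle is the floor/ceiling bookkeeping, and in particular justifying the strict inequality asserted in the statement rather than the $\ge$ that the identity delivers directly. The slack comes from the remainder term $\lfloor as\rfloor=\lfloor 2^{rk}s\rfloor$ above, together with the fact that $|S_r|=\lceil n/2^{rk}\rceil$ is strictly below $2^{k(r_f-r)}$ whenever $n$ is not an exact power of $2^k$ (the generic case); I would verify that at least one of these contributions is positive to upgrade $\ge$ to $>$, or else note that the $\ge$ bound is what is actually needed downstream in Theorem~\ref{thm:fixed_budget_error_prob}.
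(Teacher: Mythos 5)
Your proof is correct, and it reaches the bound by a cleaner route than the paper. The paper argues by induction on $r$ with a case split on whether $n$ is a power of $2^k$: the base case checks $(2^k-1)x(k)\le t_0$ via $2^{kr_f}\ge n$, and the inductive step bounds $2^{(r+1)k}(2^k-1)x(k)$ against $t_{r+1}$ directly (the induction hypothesis is never actually invoked, and the displayed conclusion writes $t_r$ where $t_{r+1}$ is meant). You instead derive the closed form $|S_r|=\lceil n/2^{rk}\rceil$, bound it uniformly by $2^{k(r_f-r)}$ using only $2^{kr_f}\ge n$, and finish with the single floor identity $\lfloor ay\rfloor\ge ab\lfloor y/b\rfloor$ applied with $a=2^{rk}$, $b=2^k-1$, $y=m/2^{kr_f}$. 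The ingredients are the same ($2^{kr_f}\ge n$ plus integer/floor manipulations), but your organization buys robustness: it needs no case analysis, whereas the paper's case where $n$ is not a power of $2^k$ asserts $\lfloor 2^{(r+1)k}m/2^{kr_f}\rfloor\le\lfloor 2^{rk}m/n\rfloor$, which would require $2^kn\le 2^{kr_f}$ and is false in general for such $n$ (e.g.\ $n=5$, $k=1$, $m=40$, $r=0$ gives $10\le 8$); your uniform bound $|S_r|\le 2^{k(r_f-r)}$ is exactly what repairs that step. Finally, you are right that only the non-strict bound $t_r\ge 2^{rk}(2^k-1)x(k)$ is provable --- strictness genuinely fails, e.g.\ when $m<n$ so that $t_0=x(k)=0$ --- and the paper's own argument likewise establishes only $\ge$, which is all that Lemma~\ref{lemma:round_survival} and Theorem~\ref{thm:fixed_budget_error_prob} use downstream.
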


\begin{proof}
We will show this by induction. Let $r=0$.

\begin{align*}
    (2^k-1)x(k) &= (2^k-1)\left\lfloor \frac{ \lambda^{-1}\left(\frac{T}{ \lceil \log_{2^{k}}(n) \rceil}\right)}{2^{k \left\lceil\log_{2^k}(n)\right\rceil}(2^k - 1)} \right \rfloor\\
    &\leq (2^k-1)\left \lfloor \frac{\lambda^{-1}\left(\frac{T}{ \lceil \log_{2^{k}}(n) \rceil}\right)}{n(2^k - 1) } \right \rfloor\\
    &\leq \left \lfloor \frac{\lambda^{-1}\left(\frac{T}{ \lceil \log_{2^{k}}(n) \rceil}\right)}{n } \right \rfloor\\
    &= t_0
\end{align*}

Suppose this lemma holds for some $r$. Then,
\begingroup
\allowdisplaybreaks
\begin{align*}
    2^{(r+1)k}(2^{k} - 1) x(k) &=  2^{(r+1)k}(2^{k} - 1)\left\lfloor \frac{\lambda^{-1}\left(\frac{T}{ \lceil \log_{2^{k}}(n) \rceil}\right)}{2^{k \left\lceil\log_{2^k}(n)\right\rceil}(2^k - 1) } \right \rfloor\\
    &\leq  2^{(r+1)k}\left\lfloor \frac{\lambda^{-1}\left(\frac{T}{ \lceil \log_{2^{k}}(n) \rceil}\right)}{2^{k \left\lceil\log_{2^k}(n)\right\rceil}} \right \rfloor\\
    &\leq  \left\lfloor \frac{2^{(r+1)k}\lambda^{-1}\left(\frac{T}{ \lceil \log_{2^{k}}(n) \rceil}\right)}{2^{k \left\lceil\log_{2^k}(n)\right\rceil}} \right \rfloor
\end{align*}
\endgroup

Suppose $n$ is a factor of $2^k$. Then $|S_r| = \frac{n}{2^{rk}}$.
\begin{align*}
    \left\lfloor \frac{2^{(r+1)k} \lambda^{-1}\left(\frac{T}{ \lceil \log_{2^{k}}(n) \rceil}\right)}{2^{k \left\lceil\log_{2^k}(n)\right\rceil}} \right \rfloor
    &= \left\lfloor \frac{2^{(r+1)k}\lambda^{-1}\left(\frac{T}{ \lceil \log_{2^{k}}(n) \rceil}\right)}{n} \right \rfloor\\
    &= \left\lfloor \frac{\lambda^{-1}\left(\frac{T}{ \lceil \log_{2^{k}}(n) \rceil}\right)}{|S_{r+1}|} \right \rfloor\\
    &= t_r
\end{align*}

Suppose $n$ is not a factor of $2^k$. Then, $|S_r| \leq \frac{n}{2^{(r-1)k}}$.
\begin{align*}
    \left\lfloor \frac{2^{(r+1)k} \lambda^{-1}\left(\frac{T}{ \lceil \log_{2^{k}}(n) \rceil}\right)}{2^{k \left\lceil\log_{2^k}(n)\right\rceil}} \right \rfloor
    &\leq \left\lfloor \frac{2^{rk} \lambda^{-1}\left(\frac{T}{ \lceil \log_{2^{k}}(n) \rceil}\right)}{n} \right \rfloor\\
    &\leq \left\lfloor \frac{\lambda^{-1}\left(\frac{T}{ \lceil \log_{2^{k}}(n) \rceil}\right)}{|S_{r+1}|} \right \rfloor\\
    &= t_r
\end{align*}

\end{proof}

Now, we will use the previous lemmas to prove a lemma that bounds the probability of elimination in a given stage.

\begin{lemma}\label{lemma:round_survival}
Assume the best arm has not been eliminated by round $r$. If $|S_r|\geq 2^k$, the probability that the best arm is eliminated in round $r$ is at most:

\begin{align*}
    3k \exp \left(-\frac{nx(k)}{8H_2}\right)
\end{align*}

Otherwise, the probability that the best arm is eliminated is at most

\begin{align*}
    3\lceil \log_2(|S_r|)\rceil \exp \left(-\frac{nx(k)}{8H_2}\right)
\end{align*}

\end{lemma}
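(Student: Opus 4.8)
The plan is to reduce a single round of Algorithm~\ref{alg:fixed_budget}, which eliminates all but a $1/2^k$ fraction of the surviving arms in one shot, to a union bound over $k$ successive \emph{virtual} halvings, to which the halving bound of Lemma~\ref{lemma:substage_bound} applies, and then to control each resulting exponent using the pull lower bound of Lemma~\ref{lemma:pull_bound} together with the definition of $H_2$.

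First I would view the one-shot elimination in round $r$ as a nested chain of halvings. Set $T_0 = S_r$ and, for $j\geq 1$, let $T_j$ be the $\lceil |T_{j-1}|/2\rceil$ arms of $T_{j-1}$ with the largest empirical means. Since the empirical means induce a total order, taking the top half repeatedly just keeps the overall top arms, so $T_j$ is exactly the top $\lceil |S_r|/2^{j}\rceil$ arms of $S_r$; hence $T_k = S_{r+1}$ when $|S_r|\geq 2^k$, and the chain reaches a singleton after $\lceil\log_2|S_r|\rceil$ halvings when $|S_r|<2^k$. The best arm is eliminated in round $r$ precisely when it leaves this chain at some halving, so with $p=k$ (resp. $p=\lceil\log_2|S_r|\rceil$) a union bound gives
\begin{align*}
\mathrm{Pr}[\text{best eliminated in round } r] \leq \sum_{j=1}^{p} \mathrm{Pr}[\text{best}\in T_{j-1},\ \text{best}\notin T_j].
\end{align*}
For the best arm to leave $T_{j-1}$ at halving $j$, at least $\lceil|T_{j-1}|/2\rceil$ arms of $T_{j-1}$ must beat it empirically; since every arm has received $t_r$ pulls, Lemma~\ref{lemma:substage_bound} applied to $T_{j-1}$ bounds this summand by $3\exp(-\tfrac12 t_r\Delta_{i_j}^2)$, where $i_j=\lceil\tfrac12\lceil|T_{j-1}|/2\rceil\rceil$ and $|T_{j-1}|=\lceil|S_r|/2^{j-1}\rceil$, so that $i_j\geq |S_r|/2^{j+1}$.

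The remaining work is to show each exponent is at least $nx(k)/(8H_2)$, so that the $p$ identical terms sum to $3p\exp(-nx(k)/(8H_2))$, matching both claimed bounds. Using $\Delta_{i_j}^2\geq i_j/H_2$ (immediate from $H_2=\max_{i\neq1}i\Delta_i^{-2}$ and the monotonicity of the gaps), the pull bound $t_r\geq 2^{rk}(2^k-1)x(k)$ of Lemma~\ref{lemma:pull_bound}, and $|S_r|\geq n/2^{rk}$, the factors $2^{rk}$ cancel and I would obtain
\begin{align*}
\tfrac12 t_r\Delta_{i_j}^2 \;\geq\; \tfrac12\, 2^{rk}(2^k-1)x(k)\cdot\frac{n/2^{rk}}{2^{j+1}H_2} \;=\; \frac{(2^k-1)\,n\,x(k)}{2^{j+2}H_2}.
\end{align*}
The worst case is the final halving $j=k$, where $(2^k-1)/2^{k+2}\geq(1-2^{-k})/4\geq 1/8$, yielding the uniform bound $nx(k)/(8H_2)$. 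In the regime $|S_r|<2^k$ I would instead use $i_j\geq 1$ together with $\lceil n/2^{rk}\rceil<2^k\Rightarrow 2^{rk}(2^k-1)>n/4$, which produces the same exponent.

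The main obstacle is the per-halving application of Lemma~\ref{lemma:substage_bound}: because all $k$ virtual halvings in a round are computed from the \emph{same} batch of $t_r$ samples, the set $T_{j-1}$ is random and correlated with the very empirical means used to test the best arm, so Lemma~\ref{lemma:substage_bound}, which is stated for a fixed set, cannot be invoked naively as though each halving had independent samples. The delicate step is to justify that the event that the best arm leaves $T_{j-1}$ forces at least $\lceil|T_{j-1}|/2\rceil - i_j\geq |T_{j-1}|/4$ arms of true rank exceeding $i_j$ (each with gap at least $\Delta_{i_j}$) to beat it \emph{within the set} $T_{j-1}$ of size $\lceil|S_r|/2^{j-1}\rceil$ rather than within all of $S_r$ --- it is exactly this restriction to a set of size $\Theta(2^{-j}|S_r|)$ that produces the constant $3$ (as opposed to a $2^{j+1}$ prefactor from counting over all surviving arms). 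Establishing that this counting, and hence the clean factor-$3$ bound, survives the shared-sample dependence is where the argument must be carried out carefully; all subsequent steps are the routine index bookkeeping and case analysis summarized above.
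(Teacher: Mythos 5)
Your proposal is essentially the paper's own proof: the same decomposition of the one-shot $2^k$-fold elimination into $k$ nested virtual halvings (the paper's events $B_j$ and sets $S_{r,j}$ are your $T_{j-1}$), the same union bound, the same per-halving invocation of Lemma~\ref{lemma:substage_bound}, the same use of Lemma~\ref{lemma:pull_bound} and of $\Delta_i^2 \geq i/H_2$, and arithmetically equivalent bookkeeping (the paper applies $2^{j-1}\le 2^k-1$ and $i_{S_{r,j}}\ge n/2^{rk+j+1}$, while you keep the factor $(2^k-1)$ and take the worst case $j=k$; both yield the exponent $nx(k)/(8H_2)$), including the substitution $k\mapsto\lceil\log_2|S_r|\rceil$ when $|S_r|<2^k$. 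One comment on the obstacle you single out: you are right that it is delicate, but the paper does not resolve it either --- its proof invokes Lemma~\ref{lemma:substage_bound} directly on the random sets $S_{r,j}$ for $j\ge 2$, which are functions of the very same batch of round-$r$ samples, i.e., precisely the ``naive'' application you caution against, with no conditioning or decoupling argument supplied. The rigorous fallback you sketch (counting beaters of global true rank worse than $i_j$ inside the fixed set $S_r$) does go through, but, as you observe, it inflates the Markov prefactor from $3$ to roughly $2^{j+1}$, giving a bound of order $2^{k+1}\exp\left(-nx(k)/(8H_2)\right)$ rather than $3k\exp\left(-nx(k)/(8H_2)\right)$; the paper contains no additional argument recovering the stated $3k$ prefactor. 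So your write-up matches the paper's argument step for step and is, if anything, more candid about the one step the paper leaves unjustified.
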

\begin{proof}
Let's assume $|S_r|\geq 2^k$. In round $r$, each arm is sampled $t_r \geq 2^{rk}(2^{k} - 1) x(k)$ times. We only keep the top $\frac{1}{2^k}$ (by their empirical means) fraction of the arms in $S_r$. 
Suppose we eliminate arms in the following fashion: first we eliminate the lowest $\left\lfloor \frac{|S_r|}{2} \right\rfloor$ arms, then the next $\left\lfloor \frac{|S_r|}{4} \right\rfloor$, and so on ($k$ times). Call the event that the best arm is eliminated in the $i$-th such elimination procedure $B_i$. Call the set of arms that survived until the $i$-th event $S_{r,i}$ We thus want to upper bound:
\begingroup
\allowdisplaybreaks
\begin{align*}
    P\left(\bigcup_{j=1}^k B_j\right) &\leq \sum_{j=1}^k P(B_j)\\
    &\leq 3 \sum_{j=1}^k \exp \left(-\frac{1}{2}t_r\Delta_{i_{S_{r,j}}}^2\right)\\
    &\leq 3 \sum_{j=1}^k \exp \left(-\frac{1}{2}2^{rk}(2^k - 1)x(k)\Delta_{i_{S_{r,j}}}^2\right)\\
    &\leq 3 \sum_{j=1}^k \exp \left(-\frac{1}{2}2^{rk + j-1}x(k)\Delta_{i_{S_{r,j}}}^2\right)
\end{align*}
\endgroup
The first line follows from the union bound. The second line follows via application of Lemma~\ref{lemma:substage_bound}. The third step follows since $t_r \geq 2^{rk}(2^{k} - 1) x(k)$ by Lemma~\ref{lemma:pull_bound}. The last step applies $2^{j-1} \leq 2^k - 1$.

Observe that $i_{S_{r,i}} \geq \frac{n}{2^{rk + i + 1}}$. So,
\begin{align*}
    3 \sum_{j=1}^k \exp \left(-\frac{1}{2}2^{rk + j-1}X_k\Delta_{i_{S_{r,j}}}^2\right)
    &\leq 3 \sum_{j=1}^k \exp \left(-\frac{1}{8}2^{rk+j+1}x(k)\Delta_{i_{S_{r,j}}}^2\right)\\
    &\leq 3 \sum_{j=1}^k \exp \left(-\frac{n}{8}\frac{\Delta_{i_{S_{r,j}}}^2}{i_{S_{r,i}}}\right)\\
    &\leq 3k \exp \left(-\frac{nx(k)}{8H_2}\right)
\end{align*}
The second line uses the bound on $i_{S_{r,i}}$, and the third line maximizes the expression in the sum over $i$ (recall that $H_2 = \max_{i\neq 1} i\Delta^{-2}_i$).

The proof of the second half of the lemma follows by replacing $k$ with $\lceil \log_2(|S|)\rceil$ in the previous steps.

\end{proof}

We are now ready to prove Theorem~\ref{thm:fixed_budget_error_prob}.

\textbf{Proof of Theorem~\ref{thm:fixed_budget_error_prob}:}
\begin{proof}

The algorithm always terminates in $T$ time, as it takes $\left\lceil\log_{2^k}(n)\right\rceil$ stages that each take at most $T/\left\lceil\log_{2^k}(n)\right\rceil$ time. So, let's upper bound the probability of error.

To select the best arm, it must survive all $\left\lceil \frac{\log_2(n)}{k} \right\rceil$ rounds, after which it will be the last arm remaining. Using the union bound with Lemma~\ref{lemma:round_survival}, we can upper bound the probability of elimination of the best arm by:

\begin{align*}
    &\sum_{r=0}^{\lfloor\log_2(n)/k\rfloor} 3k \exp \left(-\frac{nx(k)}{8H_2}\right) + 3\lceil \log_2(|S_{r_f - 1}|)\rceil \exp \left(-\frac{nx(k)}{8H_2}\right)\\
    &= 3\lceil\log_2(n)\rceil \exp \left(-\frac{nx(k)}{8H_2}\right)\label{eq:budget_alg_bound}\numberthis
\end{align*}

\end{proof}

\begin{remark}
Applying Theorem~\ref{thm:fixed_budget_error_prob} with $k=1$, and $\lambda(m) = m^{-1}$, assuming $n$ is a power of $2$ (for simplicity), and that $1$ is evenly divisible by $\frac{n\log_2(n)}{T}$, we recover the error probability upper bound of sequential halving~\cite{karnin2013almost}:
\begin{align*}
    3\log_2(n)\exp\left(-\frac{nT}{8H_2\log_2(n)}\right)
\end{align*}
\end{remark}

\begin{remark}
The naive extension of sequential halving (which is Algorithm~\ref{alg:fixed_budget} with $k=1$) has a higher bound on the error probability than Algorithm~\ref{alg:fixed_budget} with $k=\kstar$. This is because $x(\kstar) \geq x(1)$ and the error probability upper bound (Equation~\ref{eq:budget_alg_bound}) is decreasing in $x(k)$.\label{remark:scsh_vs_sh}
\end{remark}








\subsection{Lower bounding $x(k)$ for $\speedfunc(m)=m^{q}$}
\label{app:fbexample}

In this section, we will lower bound $x(k)$, which will allow us to upper bound the error probability of Algorithm~\ref{alg:fixed_budget} when contextualizing the results for different values of $k$.

Assuming $x(k) \geq 1$ and $2^k \leq n$:
\begingroup
\allowdisplaybreaks
\begin{align*}
    x(k) &= \left \lfloor \frac{\lambda^{-1}\left(\frac{T}{ \lceil \log_{2^{k}}(n) \rceil}\right)}{2^{k \left\lceil\log_{2^k}(n)\right\rceil}(2^k - 1)} \right \rfloor\\
    &=  \frac{ \lambda^{-1}\left(\frac{T}{ \lceil \log_{2^{k}}(n) \rceil}\right)}{2^{k}n(2^k - 1)}\\
    &\geq \frac{\lambda^{-1}\left(\frac{T}{ \lceil \log_{2^{k}}(n) \rceil}\right)}{4^{k}n}\\
    &\geq \frac{1}{4^{k}n}\left(\frac{T}{ \lceil \log_{2^{k}}(n) \rceil}\right)^{1/q}\\
    &\geq \frac{1}{4^{k}n}\left(\frac{Tk}{ 2\log_{2}(n)}\right)^{1/q}
\end{align*}
\endgroup
In the second line, we upper bound the first term in the denominator, which may be greater by a factor of $2^k$. In the fourth line, we plug in $\lambda^{-1}(t) = t^{1/q}$. In the last step, we upper bound the ceil using $2^k \leq n$.

\section{Additional experiments: Fixed deadline setting:}
In this section, we present additional experiments evaluating \scshs against baselines on a set of simulation experiments and on cosmological parameter estimation task.
\insertFigFixedDeadline

\subsection{Baselines:}
In the fixed deadline setting, we compare \scshs to SH~\citep{karnin2013almost} and
the UCB-E algorithm~\citep{audibert2010best}.

\subsection{Simulation experiments}
We evaluate Algorithm~\ref{alg:fixed_budget} on a synthetic domain consisting of $n = 1024$
Bernoulli arms with means sampled uniformly from $[0, 1]$. We use a scaling function
of the form $\lambda(m) = m^{q}$ for 
different choices of $q \in \{0.1, 0.25, 0.5, 0.9\}$.
All settings are evaluated ten times, and we report the mean and
standard error. We evaluate the accuracy of algorithms for
varying time budgets. In Figure~\ref{fig:fixed_budget_exps}, we observe that
Algorithm~\ref{alg:fixed_budget} (with $k=\kstar$) consistently matches or outperforms SH
and UCB-E.
This difference is more pronounced as the scaling is poor (smaller $q$).

\subsection{Physical experiment: Estimation of cosmological parameters}\label{app:cosmological_estimation}
We evaluate Algorithm~\ref{alg:fixed_budget} on a variant of a problem in~\citet{kandasamy2019dragonfly}, where the goal is to estimate the Hubble constant, dark matter fraction, and dark energy fraction from data on Type Ia supernova via maximum likelihood estimation, using data from~\citet{davis07supernovae}. We focus on the fixed budget setting in this experiment as algorithms in the fixed budget settings are more practical and easier to adapt to real problems~\cite{karnin2013almost}. The likelihood is computed using the method from~\citet{shchigolev2017calculating}, and involves numerically evaluating integrals for each data point before a serial portion. We discretize the continuous parameter space into $64$ candidates (arms). We treat the likelihood of a parameter when computed using all $192$ points in the dataset as the mean of an arm. When sampling an arm, we compute the likelihood using $50$ points sampled with replacement from the dataset. Each arm pull can be allocated a number of processes to parallelize the set of numerical integrals that must be computed. First, we approximately compute the scaling function of the arm sampling as a function of its number of processes. In this particular scenario, the communication and synchronization costs are very limited, which results in fairly good scaling. Nevertheless, we plot the empirically estimated scaling function (Figure~\ref{fig:cosmological_scaling_plot}) and observe that the scaling function is close to $\lambda(m) = m^{-0.5}$. When computing $\kstar$, we approximately compute $\lambda^{-1}(t)$ by finding the smallest $m$ such that $\lambda(m) \leq t$. Additional experimental details for this experiment can be found in Section~\ref{app:cosmological_estimation}.
\begin{figure}
\centering     
\subfigure{\label{fig:cosmological_scaling_plot}\includegraphics[width=80mm]{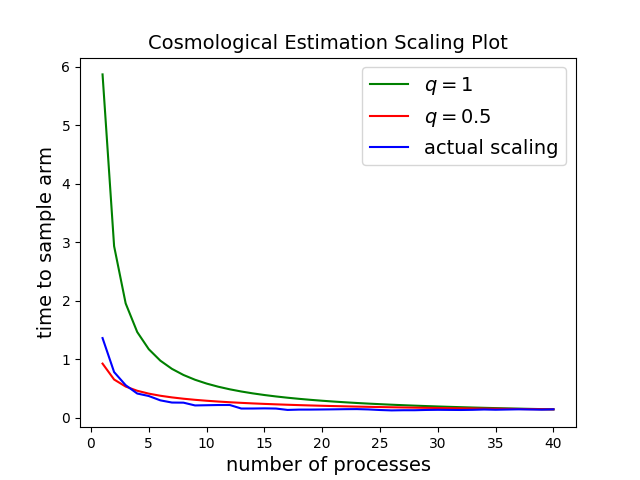}}
\subfigure{\label{fig:cosmological_success_resource_plot}\includegraphics[width=80mm]{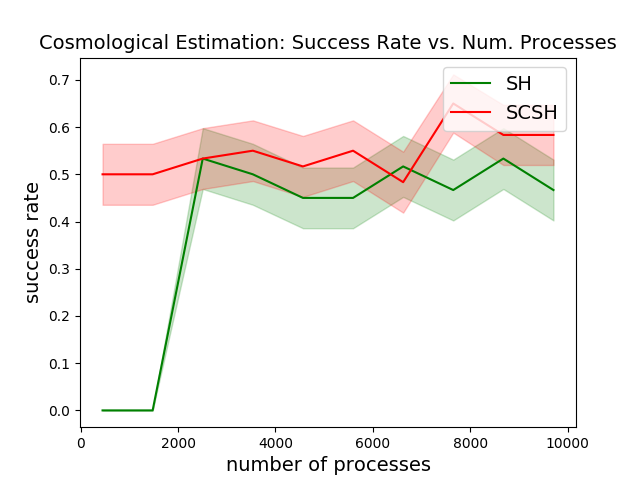}}
\caption{\small{\textbf{Left:} From a total pool of $40$ processes allowed per arm pull, we plot the empirical scaling function averaged over $10$ runs of the likelihood computation procedure and plot $\lambda(m) \propto m^{-q}$ for $q \in \{0.5, 1\}$ after calibrating $\lambda(1)$ (which corresponds to using $40$ processes). The behavior of the empirical scaling function is initially linear but becomes close to inverse square-root after $7$ processes. \textbf{Right:} We evaluate SSH on a cosmological parameter estimation task with $64$ arms, $T=0.56$, and vary the total number of processes available. We present the mean and standard error across $60$ runs. Although the scaling function is good ($q\approx 0.5$), SSH appears to consistently do as well or better than SH, especially when resources are very limited.}}
\end{figure}
In this experiment, we fix the time budget to be $0.56$ seconds. In this setting, $\kstar = 2$, so
each stage takes $0.28$ seconds, which requires at least $7$ processes per pull to complete pulls in
time. Sequential halving, which has $0.14$ seconds/stage, requires $25$ processes per pull to complete pulls in time. We observe that Algorithm~\ref{alg:fixed_budget} consistently matches or outperforms sequential halving in this setting (Figure~\ref{fig:cosmological_success_resource_plot}).

\section{Additional experimental details}
\label{app:experiments}

\subsection{Baseline hyperparameters}
\textbf{APR and BatchRacing:}
Empirically, we found that the deviation function used in the confidence intervals of
Algorithm~\ref{alg:fixed_confidence} and the baselines is very conservative, so we scale the
deviation by $0.2$ for all algorithms.
Scaling confidence intervals this way is a common technique in applied UCB-style bandit
work~\citep{kandasamy2015high,wang2017batched}.
Despite this scaling, all algorithms almost always identify
the best arm in the synthetic experiments. 

\textbf{UCB-E:} UCB-E~\cite{audibert2010best} requires a scale hyperparameter for the confidence
bounds. Specifically, the confidence bounds are computed as $(\hat{\mu} - \frac{a}{\sqrt{n}},
\hat{\mu} + \frac{a}{\sqrt{n}})$, where $\hat{\mu}$ is the empirical mean, $n$ is the number of times
the arm has been pulled, and $a$ is the scale parameter. We tune $a$ via grid search, and find that
when $q=0.9$, $a=1$ performs best empirically. For all other experiments, no value of $a$ helps, as
the time budgets are too small for a sequential algorithm to make sufficient progress.



\end{document}